\documentclass{article}
\usepackage[utf8]{inputenc}
\pdfoutput=1
\usepackage[sc,osf]{mathpazo}
\usepackage[margin=1.1in,letterpaper]{geometry}
\usepackage[sort&compress,numbers]{natbib}
\usepackage[colorlinks=true,citecolor=blue,breaklinks]{hyperref}
\usepackage[hyphenbreaks]{breakurl} 
\usepackage{caption}
\usepackage{subcaption}

\linespread{1.025}              

\makeatletter
\newlength\aftertitskip     \newlength\beforetitskip
\newlength\interauthorskip  \newlength\aftermaketitskip

\setlength\aftertitskip{0.1in plus 0.2in minus 0.2in}
\setlength\beforetitskip{0.05in plus 0.08in minus 0.08in}
\setlength\interauthorskip{0.08in plus 0.1in minus 0.1in}
\setlength\aftermaketitskip{0.3in plus 0.1in minus 0.1in}

\def\maketitle{\par
 \begingroup
   \def\thefootnote{\fnsymbol{footnote}}
   \def\@makefnmark{\hbox to 4pt{$^{\@thefnmark}$\hss}}
   \@maketitle \@thanks
 \endgroup
\setcounter{footnote}{0}
 \let\maketitle\relax \let\@maketitle\relax
 \gdef\@thanks{}\gdef\@author{}\gdef\@title{}\let\thanks\relax}

\def\@startauthor{\noindent \normalsize\bf}
\def\@endauthor{}
\def\@starteditor{\noindent \small {\bf Editor:~}}
\def\@endeditor{\normalsize}
\def\@maketitle{\vbox{\hsize\textwidth
 \linewidth\hsize \vskip \beforetitskip
 {\begin{center} \LARGE\@title \par \end{center}} \vskip \aftertitskip
 {\def\and{\unskip\enspace{\rm and}\enspace}%
  \def\addr{\small\it}%
  \def\email{\hfill\small\tt}%
  \def\name{\normalsize\bf}%
  \def\AND{\@endauthor\rm\hss \vskip \interauthorskip \@startauthor}
  \@startauthor \@author \@endauthor}
}}

\makeatother

\pdfoutput=1                    

\usepackage[utf8]{inputenc} 
\usepackage[T1]{fontenc}    
\usepackage{hyperref}       

\usepackage[usenames,dvipsnames]{xcolor}
\definecolor{darkblue}{rgb}{0,0,0.90}
\hypersetup{
  colorlinks = true,
  citecolor  = darkblue,
  linkcolor  = darkblue,
  citecolor  = darkblue,
  filecolor  = darkblue,
  urlcolor   = darkblue,
}

\usepackage{url}            
\usepackage{booktabs}       
\usepackage{amsfonts}       
\usepackage{nicefrac}       
\usepackage{microtype}      
\usepackage[utf8]{inputenc}

\usepackage{graphicx}
\usepackage{framed}
\usepackage{amssymb}
\usepackage{amsfonts}
\usepackage{mathrsfs}
\usepackage{mathtools}
\usepackage{array}
\usepackage{amsthm}
\usepackage{verbatim} 
\usepackage{enumerate}
\usepackage{bbm}
\usepackage{mathtools}
\usepackage{commath}
\usepackage{wrapfig}
\usepackage{amsbsy}
\usepackage{subcaption}
\usepackage{float}

\usepackage{algorithm}
\usepackage[noend]{algpseudocode}

\newtheorem{prop}{Proposition}
\newtheorem{lemma}[prop]{Lemma}
\newtheorem{Conjecture}[prop]{Conjecture}
\newtheorem{theorem}[prop]{Theorem}

\newtheorem{cor}[prop]{Corollary}

\theoremstyle{definition}

\newtheorem{defn}[prop]{Definition}

\newcommand{\nlsum}{\sum\nolimits}
\newcommand{\nlprod}{\prod\nolimits}

\title{Flexible Modeling of Diversity with Strongly Log-Concave Distributions}

\author{\name Joshua Robinson \email{joshrob@mit.edu}\\
  \name Suvrit Sra \email{suvrit@mit.edu}\\
  \name Stefanie Jegelka \email{stefje@csail.mit.edu}\\
  \addr{Massachusetts Institute of Technology, Cambridge, MA 02139} 
}

\begin{document}

\maketitle

\begin{abstract}

Strongly log-concave (SLC) distributions are a rich class of discrete probability distributions over subsets of some ground set. They are strictly more general than strongly Rayleigh (SR) distributions such as the  well-known determinantal point process. While SR distributions offer elegant models of diversity, they lack an easy control over how they express diversity. We propose SLC as the right extension of SR that enables easier, more intuitive control over diversity, illustrating this via examples of practical importance. We develop two fundamental tools needed to apply SLC distributions to learning and inference: \emph{sampling} and \emph{mode finding}.  For sampling we develop an MCMC sampler and give theoretical mixing time bounds. For mode finding, we establish a weak log-submodularity property for SLC functions and derive optimization guarantees for a distorted greedy algorithm. 
\end{abstract}

\vspace*{-12pt}
\section{Introduction}
\vspace*{-5pt}
A variety of machine learning tasks involve selecting diverse subsets of items. How we model diversity is, therefore, a key concern with possibly far-reaching consequences. Recently popular probabilisitic models of diversity include determinantal point processes~\citep{hoKrPe06,kulesza2012determinantal}, and more generally, strongly Rayleigh (SR) distributions~\citep{borcea2009negative,nipstut18}. These models have been successfully deployed for subset selection in applications such as video summarization~\citep{lin2012learning}, fairness~\citep{celis2018fair}, model compression~\citep{mariet2015diversity}, anomaly detection~\citep{mariet2018exponentiated}, the Nystr\"om method~\citep{li2016fast2}, generative models~\citep{elfeki2018gdpp, kwok2012priors}, and accelerated coordinate descent \cite{rodomanov2019randomized}. While valuable and broadly applicable, SR distributions have one main drawback: it is difficult to control the strength and nature of diversity they model. 

We counter this drawback by leveraging strongly log-concave (SLC) distributions~\citep{anari2018log,anari2018log3,anari2018log2}. These distributions are strictly more general than SR measures, and possess key properties that enable easier, more intuitive control over diversity. They derive their name from SLC polynomials introduced by Gurvits already a decade ago~\citep{gurvits2009multivariate}. More recently they have shot into prominence due to their key role in developing deep connections between discrete and continuous conxevity, with subsequent applications in combinatorics~\citep{adiprasito2018hodge, branden2019lorentzian, huh2017combinatorial}. In particular, they lie at the heart of recent breakthrough results such as a proof of Mason's conjecture~\citep{anari2018log3} and obtaining a fully polynomial-time approximation scheme for counting the number of bases of arbitrary matroids~\citep{anari2018log, anari2018log2}. We remark that all these works assume \emph{homogeneous} SLC polynomials.

We build on this progress to develop fundamental tools for general SLC distributions, namely,  \emph{sampling} and \emph{mode finding}. 
We highlight the flexibility of SLC distributions through two settings of importance in practice: (i) raising any SLC distribution to a power $\alpha \in [0,1]$; and (ii) incorporating a constraint that allows sampling sets of \emph{any} size up to a budget. In contrast to similar modifications to SR measures (see e.g.,~\citep{mariet2018exponentiated}), these settings retain the crucial SLC property. Setting~(i) allows us to conveniently tune the strength of diversity by varying a single parameter; while setting~(ii) offers greater flexibility than \emph{fixed} cardinality distributions such as a $k$-determinantal point process~\citep{kulesza2011k}. This observation is simple yet important, especially since the ``right'' value of $k$ is hard to fix \emph{a priori}. 
\paragraph{Contributions.}
We briefly summarize the main contributions of this work below.
\vspace{-8pt}
\begin{list}{{\tiny$\blacksquare$}}{\leftmargin=1.5em}
  \setlength{\itemsep}{1pt}
\item We introduce the class of strongly log-concave distributions to the machine learning community, showing how it can offer a flexible discrete probabilistic model for distributions over subsets.
\item We prove various closure properties of SLC distributions (Theorems \ref{thm,: weighted homogenization}-\ref{corollary: k swh closure theorem}), and show how to use these properties for better controlling the distributions used for inference.
\item We derive sampling algorithms for SLC and related distributions, and analyze their corresponding mixing times both theoretically and empirically (Algorithm \ref{alg: rescaled general SLC sampling MCMC}, Theorem \ref{thm: MCMC mixing time bound rescaled}). 
\item We study the negative dependence of SLC distributions by deriving a weak log-submodularity property (Theorem \ref{thm: weak submodular property}). Optimization guarantees for a selection of greedy algorithms are obtained as a consequence (Theorem \ref{thm: distorted greedy}). 
\end{list}
\vspace{-8pt}
As noted above, our results build on the remarkable recent progress in~\citep{anari2018log,anari2018log3,anari2018log2} and \citep{branden2019lorentzian}. The biggest difference between the previous work and this work is our focus on general non-homogeneous SLC polynomials, corresponding to distributions over sets of varying cardinality, as opposed to purely the homogeneous, i.e., fixed-cardinality, case. This broader focus necessitates development of some new machinery, because unlike SR polynomials, the class of SLC polynomials is not closed under homogenization. We summarize the related work below for additional context. 
\vspace*{-5pt}
\subsection{Related work}
\textbf{SR polynomials.} Strongly Rayleigh distributions were introduced in \cite{borcea2009negative} as a class of discrete distributions possessing several strong negative dependence properties. It did not take long for their potential in machine learning to be identified \cite{kulesza2012determinantal}. Particular attention has been paid to determinantal point processes due to the intuitive way they capture negative dependence, and the fact that they are parameterized by a single positive semi-definite kernel matrix. Convenient parameterization has allowed an abundance of fast algorithms for learning the kernel matrix \cite{dupuy2016learning, gartrell2017low, mariet2015fixed, mariet2016kronecker}, and sampling \cite{anari2016monte, li2016fast, mariet2019dppnet}. SR distributions are a fascinating and elegant probabilistic family whose applicability in machine learning is still an emerging topic \cite{derezinski2017unbiased,nipstut18,li2017polynomial,mariet17symmetric}.

\textbf{SLC polynomials.} Gurvits introduced SLC polynomials a decade ago \cite{gurvits2009multivariate} and studied their connection to discrete convex geometry. Recently this connection was significantly developed \citep{branden2019lorentzian,anari2018log2} by establishing that matroids, and more generally M-convex sets, are characterized by the strong log-concavity of their generating polynomial. This is in contrast to SR, for which it is known that some matroids have generating polynomials that are \emph{not} SR \citep{branden2007polynomials}.

\textbf{Log-Submodular Distributions.} Distributions over subsets that are log-submodular (or supermodular) are amenable to mode finding and variational inference with approximation guarantees, by exploiting the optimization properties of submodular functions \cite{djolonga14variational,djolonga15scalable,djolonga18provable}. Theoretical bounds on sampling time require additional assumptions \cite{gotovos15sampling}. \citet{iyer15spp} analyze inference for submodular distributions, establishing polynomial approximation bounds.

\textbf{MCMC samplers and mixing time.}
The seminal works \cite{diaconis1991geometric, diaconis1996logarithmic} offer two tools for obtaining mixing time bounds for Markov chains: lower bounding the spectral gap, or log-Sobolev constant. These techniques have been successfully deployed to obtain mixing time bounds for homogenous SR distributions \cite{anari2016monte}, general SR distributions \cite{li2016fast}, and recently homogenous SLC distributions \cite{anari2018log2}.

 \vspace*{-5pt} 
\section{Background and setup}
\paragraph{Notation.}
We write $[n] = \{ 1, \ldots , n \}$, and denote by $2^{[n]}$ the power set $\{ S \mid S \subseteq [n] \}$. For any variable $u$, write $\partial_u$ to denote $\frac{\partial}{\partial u}$; in case $u = z_i $, we often abbreviate further by writing $\partial_i$ instead of $\partial_{z_i}$. For $S \subseteq [n]$ and $\alpha \in \mathbb{N}^n$ let $\mathbf{1}_S \in \{0,1\}^n$ denote the binary indicator vector of $S$, and define $\abs{\alpha} = \sum_{i=1}^n \alpha_i$. We also write variously $\partial_z^S =  \prod_{i \in S} \partial_i$ and $\partial_z^\alpha = \prod_{i \in [n]} \partial^{\alpha_i}_i$ where $\alpha_i =0$ means we do not take any derivatives with respect to $z_i$. We let $z^S$ and $z^\alpha$ denote the monomials $\prod_{i \in S} z_i$ and $\prod_{i =1}^n z_i^{\alpha_i}$ respectively. 
For $K = \mathbb{R}$ or $\mathbb{R}_+$ we write $K[z_1, \ldots , z_n]$ to denote the set of all polynomials in the variables $z = (z_1, \ldots , z_n)$ whose coefficients belong to $K$. A polynomial is said to be $d$-homogeneous if it is the sum of monomials \emph{all} of which are of degree $d$. 
Finally, for a set $X$ we shall minimize clutter by using $X \cup i$ and $X \setminus i$ to denote $X \cup \{i\}$ and $X \setminus \{i \}$ respectively. 

\paragraph{SLC distributions.}
We consider distributions $\pi : 2^{[n]} \rightarrow [0,1]$ on the subsets of a ground set $[n]$. There is a one-to-one correspondence between such distributions, and their generating polynomials
\begin{equation}
  \label{eq:1}
  f_\pi(z) := \nlsum_{S \subseteq [n] } \pi(S) \nlprod_{i \in S} z_i  = \nlsum_{S \subseteq [n] } \pi(S)  z^S.
\end{equation}
The central object of interest in this paper is the class of strongly log-concave distributions, which is  defined by imposing certain log-concavity requirements on the corresponding generating polynomials. 

\begin{defn}
  \label{def:slc}
  A polynomial $f \in \mathbb{R}_+[z_1, \ldots , z_n]$ is \emph{strongly log-concave} (SLC) if every derivative of $f$ is log-concave. That is, for any $\alpha \in \mathbb{N}^n$ either $\partial^\alpha f = 0$, or the function $\log ( \partial^\alpha f(z) ) $ is concave at all $z \in \mathbb{R}_+^n$. We say a distribution $\pi$ is strongly log-concave if its generating polynomial $f_\pi$ is strongly log-concave; we also say $\pi$ is $d$-homogeneous if $f_\pi$ is $d$-homogeneous. 
\end{defn}

There are many examples of SLC distributions; we note a few important ones below.
\vspace{-5pt}
\begin{list}{–}{\leftmargin=1.5em}
  \setlength{\itemsep}{0pt}
\item Determinantal point processes~\citep{kulesza2012determinantal, gillenwater2012near, kulesza2011k, li2016fast2}, and more generally, Strongly Rayleigh (SR) distributions~\citep{borcea2009negative, derezinski2017unbiased, li2017polynomial,nipstut18}.
\item Exponentiated (for exponents in $[0,1]$) homogeneous SR distributions~\citep{mariet2018exponentiated, anari2018log2}.
\item The uniform distribution on the independent sets of a matroid \cite{anari2018log3}.
\end{list}
\vspace{-5pt}
SR distributions satisfy several strong negative dependence properties (e.g., log-submodularity and negative association). The fact that SLC is a strict superset of SR suggests that SLC distributions possess some weaker negative dependence properties. These properties will play a crucial role in the two fundamental tasks that we study in this paper: \emph{sampling} and \emph{mode finding}.

\paragraph{Sampling.}
Our first task is to efficiently draw samples from an SLC distribution $\pi$. To that end, we seek to develop Markov Chain Monte Carlo (MCMC) samplers whose mixing time (see Section \ref{sec:mcmc} for definition) can be well-controlled. For homogeneous $\pi$, the breakthrough work of~\citet{anari2018log2} provides the first analysis of fast-mixing for a simple Markov chain called \texttt{Base Exchange Walk}; this analysis is further refined in~\citep{cryan2019modified}. { \tt Base Exchange Walk} is defined as follows: if currently at state $S \subseteq [n]$, remove an element $i \in S$ uniformly at random. Then move to $R \supset S \setminus \{i\} $ with probability proportional to $\pi(R)$. This describes a transition kernel $Q(S,R)$ for moving from $S$ to $R$.  We build on these works to obtain the first mixing time bounds for sampling from \emph{general} (i.e., not necessarily homogeneous) SLC distributions (Section~\ref{sec:mcmc}).

\paragraph{Mode finding.} 
Our second main goal is optimization, where we consider the more general task of finding a mode of an SLC distribution subject to a cardinality constraint. This task involves solving $\max_{\abs{S} \leq d} \pi(S)$. This task is known to be NP-hard even for SR distributions; indeed, the maximum volume subdeterminant problem~\citep{civril2013exponential} is a special case.We consider a more practical approach based on observing that SLC distributions satisfy a relaxed notion of log-submodularity, which enables us to adapt simple greedy algorithms. 
Before presenting the details about sampling and optimization, we need to first establish some key theoretical properties of general SLC distributions. This is the subject of the next section.
\vspace*{-5pt}
\section{Theoretical tools for general SLC polynomials}\label{section: preserving SLC}

In this technical section we develop the theory of strong log-concavity by detailing several transformations of an SLC polynomial $f$ that preserve strong log-concavity. Such closure properties can be essential for proving the SLC property, or for developing algorithmic results. Due to the correspondence between distributions on $2^{[n]}$ and their generating polynomials, each statement concerning polynomials can be translated into a statement about probability distributions. The following theorem is a crucial stepping stone to sampling from non-homogeneous SLC distributions, and to sampling with cardinality constraints.

\begin{theorem}\label{thm,: weighted homogenization}
Let  $f = \sum_{S \subseteq [n]}c_S z^S \in \mathbb{R}_+[z_1, \ldots , z_n]$ be SLC, and suppose the support of the sum is the collection of independent sets of a rank $d$ matroid. Then for any $k \leq d$ the following polynomial is SLC:

\[\mathcal{H}_{k} f(z,y) = \sum_{\abs{S} \leq k} \frac{c_S}{(k - \abs{S})!} z^Sy^{k-\abs{S}}. \]
\end{theorem}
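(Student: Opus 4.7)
The plan is to exploit homogeneity: $\mathcal{H}_k f$ is a homogeneous polynomial of degree $k$ in the $n{+}1$ variables $(z_1,\ldots,z_n,y)$, so I can invoke the Brändén--Huh / Anari et al.\ characterization of homogeneous SLC polynomials. That characterization states that a homogeneous degree-$k$ polynomial with nonnegative coefficients is SLC iff (i) its support is $M$-convex, and (ii) for every $\alpha$ with $|\alpha|=k{-}2$, the Hessian of $\partial^\alpha(\mathcal{H}_k f)$ has at most one positive eigenvalue. The cases $k\in\{0,1\}$ are immediate since $\mathcal{H}_k f$ is a constant or a linear polynomial.

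For (i) I would verify $M$-convexity of
\[
J=\bigl\{(\mathbf{1}_S,\,k-|S|):S\in\mathcal{I}(\mathcal{M}),\,|S|\leq k\bigr\}
\]
by a short case analysis on the coordinate $i$ witnessing $\alpha_i>\beta_i$, where $\alpha,\beta\in J$ correspond to $S_\alpha,S_\beta\in\mathcal{I}(\mathcal{M})$. If $i=y$ (forcing $|S_\alpha|<|S_\beta|$), matroid augmentation supplies $j\in S_\beta\setminus S_\alpha$ with $S_\alpha\cup j$ independent. If $i\in[n]$ and $|S_\alpha|>|S_\beta|$, take $j=y$ and use that $S_\alpha\setminus i$ is automatically independent. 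If $i\in[n]$ and $|S_\alpha|\leq|S_\beta|$, apply augmentation to the strictly smaller $S_\alpha\setminus i$ against $S_\beta$ to find $j\in S_\beta\setminus S_\alpha$ with $(S_\alpha\setminus i)\cup j$ independent. In each case $\alpha-e_i+e_j\in J$.

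For (ii), multi-affineness of $\mathcal{H}_k f$ in each $z_i$ forces $\alpha=(\mathbf{1}_T,j)$ with $|T|+j=k-2$. A direct calculation, using the identity $\partial_y\mathcal{H}_m h=\mathcal{H}_{m-1}h$ and an analogous identity for $\partial_z$, gives
\[
\partial^\alpha(\mathcal{H}_k f)=\mathcal{H}_2(g),\qquad g:=\partial_z^{\mathbf{1}_T}f.
\]
If $T\notin\mathcal{I}(\mathcal{M})$ then $g=0$ and we are done; otherwise $g$ is SLC (derivatives preserve SLC), its support is $\mathcal{I}(\mathcal{M}/T)$ of rank $d-|T|\geq 2$, and crucially $g(0)=c_T>0$. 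The Hessian of $\mathcal{H}_2 g$ in the remaining variables $((z_i)_{i\notin T},y)$ is the block matrix
\[
H=\begin{pmatrix} C & \mathbf{v}\\ \mathbf{v}^{\top} & c_T\end{pmatrix},\qquad C_{ij}=c_{T\cup\{i,j\}}\ \text{(zero diagonal)},\ v_i=c_{T\cup i}.
\]
Concavity of $\log g$ at the origin (valid because $c_T>0$) yields $c_T\nabla^2 g(0)\preceq\nabla g(0)\nabla g(0)^{\top}$, i.e., $c_T C\preceq \mathbf{v}\mathbf{v}^{\top}$. By Sylvester's law of inertia, applied to the congruence that realizes the Schur complement of the $(y,y)$-block, $H$ is congruent to $\mathrm{diag}\bigl(C-\mathbf{v}\mathbf{v}^{\top}/c_T,\,c_T\bigr)$, which has exactly one positive eigenvalue.

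The main obstacle will be (ii): the factorials in the definition of $\mathcal{H}_k$ are calibrated precisely so that $\partial^\alpha(\mathcal{H}_k f)$ telescopes to $\mathcal{H}_2$ of a derivative of $f$, and the Schur-complement argument hinges on the nondegeneracy $g(0)=c_T>0$. That nondegeneracy is exactly where the matroid hypothesis enters: $T\in\mathcal{I}(\mathcal{M})$ forces $c_T>0$ because the independent sets of $\mathcal{M}$ are, by assumption, the support of~$f$. Step (i) is essentially bookkeeping with matroid augmentation.
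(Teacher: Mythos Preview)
Your argument is correct and tracks the paper's proof closely: both reduce $\partial^\alpha(\mathcal{H}_k f)$ for $|\alpha|=k-2$ to $\mathcal{H}_2(\partial_z^T f)$ via the telescoping identities $\partial_y\mathcal{H}_m h=\mathcal{H}_{m-1}h$ and $\partial_{z_i}\mathcal{H}_m h=\mathcal{H}_{m-1}(\partial_{z_i}h)$, and both extract the decisive inequality $c_T C\preceq \mathbf{v}\mathbf{v}^\top$ by evaluating $\nabla^2\log(\partial_z^T f)$ at the origin, using $c_T>0$ from the matroid-support hypothesis. The only cosmetic differences are that you invoke the Br\"and\'en--Huh characterization (M-convexity of the support plus the Hessian signature test) whereas the paper invokes the equivalent Anari et al.\ criterion (indecomposability of all derivatives plus log-concavity of the $(k{-}2)$-derivatives, handled by a star-graph argument), and that you finish the linear-algebra step with a Schur-complement congruence while the paper appeals to the equivalence in its Lemma~\ref{lemma: log conc iff one pos eval} at the point $a=(1,0,\ldots,0)$; these are interchangeable packagings of the same content.
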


The above operation is also referred to as \emph{scaled homogenization}, since the resulting polynomial is homogeneous and there is an added $1/(k-\abs{S})!$ factor. In fact, we may extend Theorem~\ref{thm,: weighted homogenization} to allowing the user to add an additional exponentiating factor:
\begin{theorem}\label{thm: weighted exponentiation}
Let $f = \sum_{S \subseteq [n]}c_S z^S \in \mathbb{R}_+[z_1, \ldots , z_n]$ be SLC, and suppose the support of the sum is the collection of independent sets of a rank $d$ matroid. Then for $0 \leq \alpha \leq 1$ and any $k \leq d$ the following polynomial is SLC:

\[\mathcal{H}_{k, \alpha} f(z,y) = \sum_{ \abs{S} \leq k } \frac{c_S^\alpha}{(k - \abs{S})!} z^Sy^{k-\abs{S}}. \]
\end{theorem}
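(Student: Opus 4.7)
The natural strategy is to bootstrap Theorem~\ref{thm: weighted exponentiation} off of Theorem~\ref{thm,: weighted homogenization} via the key observation
\[
  \mathcal{H}_{k,\alpha} f(z,y) \;=\; \mathcal{H}_k\bigl(f^{(\alpha)}\bigr)(z,y), \qquad \text{where } f^{(\alpha)} := \sum_{S}c_S^{\alpha}\, z^S .
\]
The support of $f^{(\alpha)}$ coincides with that of $f$ (both equal $\mathcal{I}(M)$ for the rank-$d$ matroid $M$ given in the hypothesis), so once we know $f^{(\alpha)}$ is SLC, Theorem~\ref{thm,: weighted homogenization} applied to $f^{(\alpha)}$ immediately yields the SLC property of $\mathcal{H}_{k,\alpha}f$. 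The crux is therefore the following preliminary lemma, which I would state and prove first: \emph{if $f$ is SLC with support $\mathcal{I}(M)$ and $\alpha\in[0,1]$, then $f^{(\alpha)}$ is SLC}.

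To prove this lemma, I would apply Theorem~\ref{thm,: weighted homogenization} with $k=d$ to obtain that $\mathcal{H}_d f$ is SLC and $d$-homogeneous. Then polarize $\mathcal{H}_d f$ in $y$ by introducing auxiliary variables $y_1,\ldots,y_d$; this produces a multi-affine, $d$-homogeneous, SLC polynomial $\tilde g(z,y_1,\ldots,y_d)$ whose support is the set of bases of the auxiliary matroid $M' := (M\oplus U_{d,d})_{d}$, i.e.\ the rank-$d$ truncation of the direct sum of $M$ with a rank-$d$ free matroid. Because $\tilde g$ is multi-affine, homogeneous, and supported on matroid bases, the exponentiation theorem for homogeneous SLC polynomials on matroid bases \citep{anari2018log2} applies and shows that $\tilde g^{(\alpha)}$ is SLC. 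Finally, I would specialize the $y_i$ (e.g.\ $y_i=1$) and apply a positive diagonal rescaling of the $z_i$ to recover (a scalar multiple of) $f^{(\alpha)}$, using the facts that both operations preserve SLC.

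The step I expect to be the main obstacle is this final recovery. A direct calculation shows that polarization of $\mathcal{H}_d f$ gives coefficient $c_S|S|!/d!$ on $z^S y^T$, so $\tilde g^{(\alpha)}$ has coefficient $c_S^{\alpha}(|S|!)^{\alpha}/(d!)^{\alpha}$, whereas the polarization of the target $\mathcal{H}_{d,\alpha}f$ has coefficient $c_S^{\alpha}|S|!/d!$. These differ by the factor $(|S|!/d!)^{1-\alpha}$, which depends only on $|S|$ and hence is \emph{not} a multiplicative weight $\prod_{i\in S}w_i$ that could be absorbed by a variable rescaling. Overcoming this mismatch is the technical heart of the argument, and I would attack it by verifying M-concavity of the target coefficient function $B=(S,T)\mapsto c_S^{\alpha}|S|!/d!$ on the bases of $M'$ directly: split exchange swaps into the same-cardinality case ($z\leftrightarrow z$ or $y\leftrightarrow y$, where the factorials cancel and M-concavity follows by raising the matroid-exchange inequality for $\tilde g$ to the $\alpha$-th power) and the cross-cardinality case ($z\leftrightarrow y$, where the factorials contribute a factor $(|S_2|+1)/|S_1|$). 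In the cross-cardinality case I would combine the Case~C inequality guaranteed by M-concavity of $\tilde g$ with the monotonicity $x^{\alpha}\le x$ when $x\ge 1$ (and its reverse when $x\le 1$), using matroid strong exchange in $M'$ to select a witness swap that lands in the favorable regime. This case analysis, rather than any algebraic manipulation, is where I expect the real work to lie.
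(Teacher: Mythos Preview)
Your strategy takes a detour that both creates the obstacle you identify and is ultimately unnecessary. The paper never tries to show that $f^{(\alpha)}=\sum_S c_S^{\alpha}z^S$ is SLC. Instead it works directly with $\mathcal{H}_{k,\alpha}f$ and exploits the following structural observation: for any $S\subseteq[n]$ and $m\ge 0$ with $|S|+m=k-2$, the factorial weights in $\mathcal{H}_k$ are calibrated so that after differentiation they disappear, leaving
\[
\nabla^2\bigl(\partial_z^{S}\partial_y^{m}\,\mathcal{H}_{k}f\bigr)
=\begin{pmatrix} c_S & c_{S\cup j}\\ c_{S\cup i} & c_{S\cup\{i,j\}}\end{pmatrix}_{i,j},
\qquad
\nabla^2\bigl(\partial_z^{S}\partial_y^{m}\,\mathcal{H}_{k,\alpha}f\bigr)
=\Bigl(\nabla^2\bigl(\partial_z^{S}\partial_y^{m}\,\mathcal{H}_{k}f\bigr)\Bigr)^{\circ\alpha}.
\]
By Theorem~\ref{thm,: weighted homogenization} the left matrix has at most one positive eigenvalue, and the proof reduces to a pure linear-algebra lemma: if a symmetric nonnegative matrix $A$ has at most one positive eigenvalue then so does $A^{\circ\alpha}$ for $\alpha\in[0,1]$ (proved via the Perron eigenvector and closure of conditional negative definiteness under Hadamard powers). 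Indecomposability is free since $\mathcal{H}_{k,\alpha}f$ and $\mathcal{H}_kf$ have the same support.

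Your proposed workaround has a more serious issue than being circuitous. What you describe --- splitting into same-cardinality and cross-cardinality swaps, and in the latter case ``selecting a witness swap that lands in the favorable regime'' --- is a verification of local exchange inequalities, i.e.\ M${}^\natural$-concavity of the log-coefficient function. But M-concavity of log-coefficients is only a \emph{necessary} condition for the Lorentzian/SLC property, not a sufficient one: the condition you actually need for each $(k-2)$-th order derivative is that an entire Hessian has at most one positive eigenvalue, a spectral statement that cannot be certified swap by swap with a choice of witness. So even if your cross-cardinality case analysis were airtight, it would not close the argument. As a side remark, if you did somehow establish that $\Pi(\mathcal{H}_{d,\alpha}f)$ is SLC, you could set $y_i=y$ and apply $\partial_y^{d-k}$ to obtain $\mathcal{H}_{k,\alpha}f$ directly, so the intermediate claim about $f^{(\alpha)}$ would be superfluous anyway.
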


Notably, Theorem \ref{thm: weighted exponentiation} fails for \emph{all} $\alpha > 1$. For a proof of this see Appendix \ref{sec: closure under exponentiation}.

Next, we show that polarization preserves strong log-concavity. Polarization essentially means to replace a variable with a higher power by multiple ``copies'', each occurring only with power one, in a way that the resulting polynomial is symmetric (or permutation-invariant) in those copies. This is achieved by averaging over elementary symmetric polynomials. Formally, 
the polarization of the polynomial $f = \sum_{\abs{S} \leq d } c_S z^S y^{d-\abs{S}} \in \mathbb{R}[z_1, \ldots , z_n ,y]$ is defined to be

 \vspace*{-5pt} 
 
\[ \Pi f(z_1, \ldots , z_n , y_1, \ldots , y_d ) = \sum_{\abs{S} \leq d}  c_S  z^S  {d \choose \abs{S}}^{-1}    e_{d - \abs{S}}(y_1, \ldots , y_d) \]

 \vspace*{-5pt} 
 
 where $e_k(y_1, \ldots , y_d)$ is the $k$th elementary symmetric polynomial in $d$ variables. The polarization $\Pi f$ has the following three properties:
 \vspace{-5pt}
\begin{enumerate}\setlength{\itemsep}{0pt}
\item It is symmetric in the variables $y_1, \ldots , y_d$;
\item Setting $y_1 = \ldots = y_d= y$  recovers $f$;
\item $\Pi f $ is multiaffine, and hence the generating polynomial of a distribution on $2^{[n+d]}$.
\end{enumerate}
 \vspace{-5pt}
Closure under polarization, combined with the homogenization results (Theorems \ref{thm,: weighted homogenization} and \ref{thm: weighted exponentiation}) allows non-homogeneous distributions to be transformed into homogenous ones. This allows general SLC distributions to be transformed into homogenous SLC distributions for which fast mixing results are known \cite{anari2018log2}. How to work backwards to obtain samples from the original distribution will be the topic of the next section.

\begin{theorem}\footnote{This result was independently discovered by Br{\"a}nd{\'e}n and Huh \cite{branden2019lorentzian}. }\label{thm: closure under polarization} 
Let $f = \sum_{ S \subseteq [n]} c_S  z^S y^{d-\abs{S}} \in \mathbb{R}_+[z_1, \ldots , z_n, y]$ be SLC, and the support of the sum is the collection of independent sets of a rank $d$ matroid. Then the polarization $\Pi f $ is SLC.
\end{theorem}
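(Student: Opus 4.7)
The central observation is that $\Pi f$ is \emph{homogeneous} of degree $d$ in the $n+d$ variables $(z_1,\ldots,z_n,y_1,\ldots,y_d)$, since each monomial $z^S e_{d-|S|}(y)$ has total degree $|S|+(d-|S|)=d$. This puts us into the regime where we can apply the Brändén--Huh / Anari et al.\ Hessian-signature characterization of homogeneous SLC polynomials: $\Pi f$ is SLC iff for every $\alpha\in\mathbb{N}^{n+d}$ with $|\alpha|=d-2$, the quadratic form $\partial^\alpha \Pi f$ is either identically zero or has a Hessian on $\mathbb{R}_+^{n+d}$ with exactly one positive eigenvalue. The plan is to reduce this check back to the corresponding signature condition for derivatives of $f$, which is available by hypothesis.

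First I would exploit the symmetry and multiaffinity of $\Pi f$ in $y_1,\ldots,y_d$: any $\alpha=(\beta,\gamma)$ with $\gamma\in\mathbb{N}^d$ can, after a harmless permutation of the $y_i$, be taken to have $\gamma\in\{0,1\}^d$, say $\gamma=\mathbf{1}_R$ for some $R\subseteq[d]$. Using the identity $\partial_{y_i}e_k(y_1,\ldots,y_d)=e_{k-1}(y_1,\ldots,\widehat{y}_i,\ldots,y_d)$, a direct computation shows
\[ \partial_z^{\beta}\prod_{i\in R}\partial_{y_i}\,\Pi f \;=\; \kappa\cdot \Pi_{d-|R|}\!\bigl(\partial_z^\beta\, f_R\bigr), \]
where $\kappa$ is an explicit positive ratio of binomial coefficients and $f_R$ is obtained from $f$ by the substitution $y^m\mapsto m(m-1)\cdots(m-|R|+1)\,y^{m-|R|}$, i.e.\ by a partial $y$-derivative followed by appropriate rescaling. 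The point is that the right-hand side is itself a polarization (with fewer $y$-variables) of an SLC polynomial, because derivatives of SLC polynomials are SLC by definition.

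This sets up an induction on $d$: the theorem for $d'<d$ allows us to assume $\Pi_{d-|R|}(\partial_z^\beta f_R)$ is SLC, so in particular all its further derivatives of the right order are log-concave. Unwinding back through the identity above, $\partial^\alpha \Pi f$ inherits the Hessian-signature condition. The base case is $d=2$, where the only nontrivial $\alpha$'s have $|\alpha|=0$, so one must check the signature of the full Hessian of $\Pi f(z,y_1,y_2)=c_\emptyset y_1y_2+\tfrac12(\sum_i c_iz_i)(y_1+y_2)+\sum_{|S|=2}c_S z^S$ directly. The $y$-block is $\bigl(\begin{smallmatrix}0&c_\emptyset\\c_\emptyset&0\end{smallmatrix}\bigr)$ and the cross block couples $(y_1,y_2)$ to $z$ symmetrically, so a change of basis $y_\pm=(y_1\pm y_2)/\sqrt 2$ decouples $y_-$ (contributing one negative eigenvalue) and identifies the remaining $(z,y_+)$-block with, up to a positive scalar, the Hessian of $f(z,y)$ at $y_+/\sqrt 2$. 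The SLC hypothesis on $f$ then furnishes the required single positive eigenvalue.

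The main obstacle is pinning down the normalizing factor $\kappa$ and verifying the ``polarization-commutes-with-derivatives'' identity with the correct combinatorics; this is where the matroid hypothesis on the support of $f$ is used to ensure $f_R$ remains supported on independent sets of a matroid of rank $d-|R|$, so that the inductive hypothesis applies cleanly. An alternative, and arguably more modular, route is simply to invoke the Lorentzian polynomial framework of Brändén--Huh (as the footnote acknowledges), where closure of the Lorentzian class under polarization is proved in one stroke by a $\mathrm{GL}_n(\mathbb{R}_{\ge 0})$ deformation argument; since homogeneous SLC and Lorentzian coincide, this yields the theorem for the homogeneous $\Pi f$ immediately.
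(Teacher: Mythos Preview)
Your route is essentially the paper's: establish that derivatives of $\Pi f$ are, up to positive constants, polarizations of derivatives of $f$, and then feed this into the Anari et al.\ characterization to reduce everything to a Hessian-signature check on the polarization of a degree-$2$ SLC quadratic. Your $y_\pm=(y_1\pm y_2)/\sqrt 2$ diagonalization for that quadratic is equivalent to the paper's device: the paper exhibits an $(n{+}2)\times(n{+}1)$ matrix $R$ with $R(\nabla^2 g)R^\top=\nabla^2(\Pi g)+uu^\top$ and then invokes Cauchy interlacing, which is a non-orthogonal version of your block-diagonalization.

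There are two gaps. First, the characterization you quote is incomplete: Hessian signatures at order $d{-}2$ alone do \emph{not} imply SLC; one also needs indecomposability of all lower-order partials (equivalently, M-convexity of the support). The paper handles this via a dedicated lemma showing that $h$ is indecomposable iff $\Pi h$ is, combined with the fact that each $\partial_z^\alpha\partial_y^{|\beta|}f$ has M-convex support; your sketch invokes the matroid hypothesis only to keep the induction tidy, never to verify the support condition for $\Pi f$ itself. Second, your induction on $d$ does not close. For $|R|\ge 1$ it is fine: $f_R$ has degree $d{-}|R|$ and is supported on the rank-$(d{-}|R|)$ truncation, so the inductive hypothesis gives $\Pi_{d-|R|}(f_R)$ SLC, and $z$-derivatives preserve this. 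But when $|R|=0$ (all $d{-}2$ derivatives in $z$) you land at $\Pi_d(\partial_z^\beta f)$, a polarization with $d$ copies of $y$ applied to a degree-$2$ polynomial; this is not an instance of the theorem for any $d'<d$, and your base case only treats two $y$-copies. What is actually needed---and what your $y_\pm$ argument extends to without difficulty---is a direct check that $\Pi_D(g)$ has at most one positive Hessian eigenvalue for \emph{every} $D\ge 2$ when $g$ is degree-$2$ SLC: diagonalize the symmetric $y$-block along the all-ones direction (giving $D{-}1$ non-positive eigenvalues) and observe that the remaining $(z,\bar y)$-block is congruent to $\nabla^2 g$. With that in hand no induction is required, which is exactly how the paper proceeds.
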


Putting all of the preceding results together we obtain the following important corollary. It is this observation that will allow us to do mode finding for SLC distributions and exponentiated, cardinality constrained SLC distributions. 

\begin{cor}\label{corollary: k swh closure theorem}
Let $f = \sum_{S \subseteq [n]}c_S z^S \in \mathbb{R}_+[z_1, \ldots , z_n]$ be SLC, and suppose the support of the sum is the collection of independent sets of a rank $d$ matroid. Then $\Pi(\mathcal{H}_{k, \alpha} f)$ is SLC for any $k \leq d$ and $0 \leq \alpha \leq 1$.
\end{cor}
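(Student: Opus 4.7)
The plan is to chain the two earlier closure theorems with a small matroid-support check in the middle. First, I would apply Theorem~\ref{thm: weighted exponentiation} to $f$: the hypotheses on $f$ (SLC with support equal to the independent sets of a rank-$d$ matroid $M$), together with $k \leq d$ and $\alpha \in [0,1]$, yield directly that
\[ \mathcal{H}_{k,\alpha} f(z,y) = \sum_{\abs{S}\leq k} \frac{c_S^\alpha}{(k-\abs{S})!}\, z^S y^{k-\abs{S}} \]
is SLC. By construction this polynomial is $k$-homogeneous in the $n+1$ variables $(z_1,\ldots,z_n,y)$ and has nonnegative coefficients, so it has exactly the shape required as input to Theorem~\ref{thm: closure under polarization}.

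Second, I would verify the support condition needed to invoke Theorem~\ref{thm: closure under polarization}. The coefficient of $z^S y^{k-\abs{S}}$ in $\mathcal{H}_{k,\alpha} f$ is $c_S^{\alpha}/(k-\abs{S})!$, which is nonzero iff $\abs{S}\leq k$ and $c_S\neq 0$, i.e.\ iff $S$ is an independent set of $M$ of size at most $k$. This family of subsets of $[n]$ is precisely the collection of independent sets of the rank-$k$ truncation $M|_k$ of $M$, which is a standard matroid construction. Hence the support hypothesis of Theorem~\ref{thm: closure under polarization} is met with rank parameter $k$, and the theorem yields that $\Pi(\mathcal{H}_{k,\alpha} f)$ is SLC, which is exactly the claim.

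I do not anticipate a real obstacle: the corollary is essentially a bookkeeping composition of two previously established closure results, with the only substantive fact to invoke being that matroid independence is preserved under truncation to any smaller rank. All of the analytic content---preservation of SLC under scaled homogenization-with-exponentiation, and under polarization---has already been done in Theorems~\ref{thm: weighted exponentiation} and~\ref{thm: closure under polarization}.
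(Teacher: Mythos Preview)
Your proposal is correct and is exactly the approach the paper intends: the corollary is stated immediately after Theorems~\ref{thm: weighted exponentiation} and~\ref{thm: closure under polarization} with the remark ``Putting all of the preceding results together,'' and your argument simply makes that composition explicit, including the matroid-truncation check needed to feed the output of Theorem~\ref{thm: weighted exponentiation} into Theorem~\ref{thm: closure under polarization}.
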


 In Appendix \ref{section: Closure under constraining to subsets of a given size } we also show that SLC distributions are closed under conditioning on a fixed set size. We mention those results since they may be of independent interest, but omit them from the main text since we do not use them further in this paper. 

 \vspace*{-5pt} 
\section{Sampling from strongly log-concave distributions}
\label{sec:mcmc}

In this section we outline how to use the SLC closure results from Section~\ref{section: preserving SLC} to build a sampling algorithm for general SLC distributions and prove mixing time bounds. Recall that we are considering a probability distribution $\pi : 2^{[n]} \rightarrow [0,1]$ that is strongly log-concave. The mixing time of a Markov chain $(Q,\pi)$ started at $S_0$ is $ t_{S_0} (\varepsilon ) = \min \{ t \in \mathbb{N} \mid \norm{ Q^t(S_0, \cdot) - \pi}_1 \leq \varepsilon \}$ where $Q^t$ is the $t$-step transition kernel. For the remainder of this section we consider the distribution $\nu$ where $\nu(S) \propto \pi(S)^\alpha \mathbf{1}\{ \abs{S} \leq d\}$ for $0 \leq \alpha \leq 1$, and $d\in [n]$. In particular, this includes $\pi$ itself. The power $\alpha$ allows to vary the degree of diversity induced by the distribution.

Our strategy is as follows: we first ``extend'' $\nu$ to a distribution $\nu_{\text{sh}}$ over subsets of size $|n|$ of $[n+d]$ to obtain a homogeneous distribution. If we can sample from $\nu_{\text{sh}}$, then we can  extract a sample $S \subseteq [n]$ of a scaled version ov $\nu$ by simply restricting a sample $T \sim \nu_{\text{sh}}$ to $T \cap [n]$. If $\nu$ was SR, then $\nu_{\text{sh}}$ would also be SR, and a fast sampler follows from this observation \cite{li2016fast}. But, for general SLC distributions (and their powers), $\nu_{\text{sh}}$ is not SLC, and deriving a sampler is more challenging.

To still enable the homogenization strategy, we instead derive a carefully scaled version of a homogeneous version of $\nu$ that, as we prove, is homogeneneous and SLC and hence tractable. We use this rescaled version as a proposal distribution in a sampler for $\nu_{\text{sh}}$.

To obtain an appropriately scaled extended, homogeneous variant $\nu$, we first translate Corollary \ref{corollary: k swh closure theorem} into probabilistic language. 

\begin{theorem}
Suppose that the support of the sum in the generating polynomial of $\nu$ is the collection of independent sets of a rank $d$ matroid. Then  for any $k \leq d$ the following probability distribution on $2^{[n+k]}$ is SLC:

\[\mathcal{H}_{k} \nu(S) \propto \begin{cases}
     \binom{k}{\abs{S\cap [n]} }^{-1}\frac{\nu(S \cap [n])}{(k - \abs{S \cap [n]})!},& \text{ for all $S \subseteq [n+k]$ such that $\abs{S} = k$}\\
    0,              & \text{otherwise.}
\end{cases}
\]
\end{theorem}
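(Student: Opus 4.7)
The plan is to translate the claim into a statement about generating polynomials and then invoke Corollary \ref{corollary: k swh closure theorem} directly. The proof should reduce to a coefficient-matching computation, since all the genuine SLC content has already been established in Section \ref{section: preserving SLC}.

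I would begin by writing the generating polynomial of $\nu$ as $f_\nu(z) = \sum_{S \subseteq [n]} \nu(S)\, z^S$. The hypothesis that $\nu$ is SLC with support equal to the independent sets of a rank $d$ matroid lets me apply Corollary \ref{corollary: k swh closure theorem} with exponent $\alpha = 1$, immediately yielding that $\Pi(\mathcal{H}_k f_\nu)$ is SLC as a polynomial in the variables $z_1, \ldots, z_n, y_1, \ldots, y_k$.

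The remaining step is to verify that $\Pi(\mathcal{H}_k f_\nu)$ is, up to a positive scalar, the generating polynomial of $\mathcal{H}_k \nu$ on $2^{[n+k]}$. Unwinding the definitions,
\[
\mathcal{H}_k f_\nu(z,y) = \sum_{\abs{S} \leq k} \frac{\nu(S)}{(k - \abs{S})!}\, z^S\, y^{\,k - \abs{S}},
\]
so applying polarization yields
\[
\Pi(\mathcal{H}_k f_\nu)(z, y_1, \ldots, y_k) = \sum_{\abs{S} \leq k} \binom{k}{\abs{S}}^{-1} \frac{\nu(S)}{(k - \abs{S})!}\, z^S\, e_{k - \abs{S}}(y_1, \ldots, y_k).
\]
I would then identify $y_1, \ldots, y_k$ with the new ground-set elements $n+1, \ldots, n+k$ and expand $e_{k-\abs{S}}(y_1, \ldots, y_k) = \sum_{\abs{T}=k-\abs{S}} y^T$. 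Each monomial $z^S y^T$ corresponds bijectively to a subset $R = S \cup T \subseteq [n+k]$ of size exactly $k$, and its coefficient is precisely $\binom{k}{\abs{S}}^{-1} \nu(S)/(k-\abs{S})!$, matching the stated formula for $\mathcal{H}_k \nu(R)$. Since SLC is preserved under positive scaling (i.e.\ under absorbing a normalizing constant), this identification establishes the SLC property for $\mathcal{H}_k \nu$.

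The main obstacle is purely administrative: carefully tracking the $\binom{k}{\abs{S}}^{-1}/(k-\abs{S})!$ factors through the scaled homogenization and polarization steps and confirming that the $(S,T) \mapsto S \cup T$ correspondence exhausts all $k$-subsets of $[n+k]$. No new SLC-preservation argument is needed beyond what is already contained in Theorems \ref{thm,: weighted homogenization} and \ref{thm: closure under polarization} (and hence Corollary \ref{corollary: k swh closure theorem}).
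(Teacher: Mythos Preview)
Your proposal is correct and follows exactly the paper's approach: the paper's proof simply observes that the generating polynomial of $\mathcal{H}_k\nu$ is $\Pi(\mathcal{H}_k f)$ (where $f$ is the generating polynomial of $\nu$) and invokes Corollary~\ref{corollary: k swh closure theorem}. Your write-up is essentially an expanded version of that one-line argument, making explicit the coefficient-matching step that the paper leaves implicit.
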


\begin{proof}
Observe that the generating polynomial of $\mathcal{H}_{k} \nu$ is $ \Pi (\mathcal{H}_{k}f)$ where $f$ denotes the generating polynomial of $\nu$. The result follows immediately from Corollary \ref{corollary: k swh closure theorem}.  
\end{proof}

 \vspace*{-5pt} 
The ultimate proposal that we use is not $\mathcal{H}_{k} \nu$, but a modified version $\mu$ that better aligns with $\nu$:

  \vspace*{-5pt} 
 \begin{equation*}
\mu(S) \propto  \bigg (\frac{d}{e} \bigg )^{ d - \abs{S \cap [n]}} \mathcal{H}_{d} \nu(S).
\end{equation*}
 \vspace*{-5pt}
 \begin{prop}
   If $\nu$ is SLC, then $\mu$ is SLC.
 \end{prop}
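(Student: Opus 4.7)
The plan is to observe that $\mu$'s generating polynomial is obtained from that of $\mathcal{H}_d\nu$ by rescaling the ``auxiliary'' variables $y_1,\ldots,y_d$ by the positive constant $d/e$, and then invoke the elementary fact that positive diagonal rescaling of the arguments preserves strong log-concavity.

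First I would translate the definition of $\mu$ into the language of generating polynomials. Let $y_1,\ldots,y_d$ denote the variables indexing $\{n+1,\ldots,n+d\}$. For any $S\subseteq [n+d]$ with $\abs{S}=d$, writing $S = A\sqcup B$ with $A=S\cap[n]$ and $B = S\setminus[n]$, we have $d - \abs{S\cap[n]} = \abs{B}$, so
\[
\mu(A\sqcup B) \;\propto\; (d/e)^{\abs{B}}\,\mathcal{H}_d\nu(A\sqcup B).
\]
Multiplying by the monomial $z^A y^B$ and summing over $A,B$, the generating polynomial of $\mu$ is proportional to
\[
g_\mu(z,y) \;=\; g_{\mathcal{H}_d\nu}\bigl(z_1,\ldots,z_n,\;(d/e)y_1,\ldots,(d/e)y_d\bigr),
\]
where $g_{\mathcal{H}_d\nu}$ is the generating polynomial of $\mathcal{H}_d\nu$. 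By the preceding theorem (which in turn rests on Corollary~\ref{corollary: k swh closure theorem}), the assumption that $\nu$ is SLC gives that $g_{\mathcal{H}_d\nu}$ is SLC.

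It then remains only to verify the following elementary closure property: if $g\in\mathbb{R}_+[x_1,\ldots,x_m]$ is SLC and $c_1,\ldots,c_m>0$, then $\tilde g(x):=g(c_1x_1,\ldots,c_mx_m)$ is SLC. For any $\alpha\in\mathbb{N}^m$ the chain rule yields
\[
\partial^\alpha\tilde g(x) \;=\; \Bigl(\prod_{i} c_i^{\alpha_i}\Bigr)\,\bigl(\partial^\alpha g\bigr)(c_1x_1,\ldots,c_mx_m).
\]
If the right-hand side vanishes there is nothing to check; otherwise its logarithm is a constant plus $\log(\partial^\alpha g)$ composed with the linear map $x\mapsto(c_1x_1,\ldots,c_mx_m)$, which sends $\mathbb{R}_+^m$ into itself. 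Composition of a concave function with a linear map is concave, so $\log\partial^\alpha\tilde g$ is concave on $\mathbb{R}_+^m$. Applying this to $g = g_{\mathcal{H}_d\nu}$ with $c_i=1$ for $i\in[n]$ and $c_{n+j}=d/e$ for $j\in[d]$ shows that $g_\mu$, hence $\mu$, is SLC.

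There is no real obstacle here: all of the genuinely substantive work is already packaged inside Corollary~\ref{corollary: k swh closure theorem}, which supplies the SLC property of $\mathcal{H}_d\nu$. The only remaining ingredient is the trivial observation that rescaling variables by positive constants is harmless for SLC, essentially for the same reason that log-concavity of a density is preserved under coordinatewise dilation.
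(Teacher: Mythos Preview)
Your proposal is correct and follows essentially the same approach as the paper: identify $\mu$'s generating polynomial as a positive diagonal rescaling of the (already known to be SLC) generating polynomial of $\mathcal{H}_d\nu$, and then invoke closure of SLC under such rescalings. The only cosmetic differences are that the paper rescales the single variable $y$ in $\mathcal{H}_d f$ \emph{before} polarizing (writing $g_\mu=\Pi((\mathcal{H}_d f)\circ T)$), whereas you rescale the $y_i$'s \emph{after} polarizing; these commute. Also, the paper cites the closure-under-linear-maps lemma (Lemma~\ref{lemma: SLC closed under affine transformations}) rather than proving the diagonal-rescaling case inline as you do.
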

  \vspace*{-5pt} 
 \begin{proof}
 Lemma \ref{lemma: SLC closed under affine transformations} in the Appendix says that strong log-concavity is preserved under linear transformations of the coordinates. This implies that $\mu$ is SLC since its generating polynomial is $ \Pi ( (\mathcal{H}_{d}f) \circ T)$  where $f$ is the generating polynomial of $\nu$ and $T$ is the linear transform defined by: $y \mapsto \frac{d}{e} y$ and $z_i \mapsto z_i$ for $i = 1 \ldots , n$.
\end{proof}
 \vspace*{-5pt} 
 Importantly, since $\mu$ is homogeneous and SLC, the { \tt Base Exchange Walk } for $\mu$ mixes rapidly. Let $Q$ denote the Markov transition kernel for { \tt Base Exchange Walk} on $2^{[n+d]}$ for $\mu$. We use $Q$ as a proposal, and then compute the appropriate acceptance probability to obtain a chain that mixes to the symmetric homogenization $\nu_{ \text{sh} }$ of $\nu$. The target $\nu_{ \text{sh} }$ is a $d$-homogenous distribution on $2^{[n+d]}$:

 \vspace*{-5pt} 
\[\nu_{ \text{sh} }(S) \propto 
     \binom{d}{\abs{S\cap [n]} }^{-1}\nu(S \cap [n]) , \text{ for all $S \subseteq [n+d]$ such that $\abs{S} = d$.}
\]
 \vspace*{-5pt} 
 
A crucial property of $\nu_{ \text{sh} }$ is that its marginalization over the ``dummy'' variables yields $\nu$, i.e., $\sum_{T: T \cap [n] = S} \nu_{ \text{sh} }(T) = \nu(S)$. Therefore, after obtaining a sample $T \sim  \nu_{ \text{sh} }$ one then obtains a sample from $\nu$ by computing $T \cap [n]$. 

 \vspace*{-5pt} 
\begin{algorithm}
\caption{Metropolis-Hastings sampler for $\nu_{ \text{sh} }$ with proposal $Q$ }\label{alg: rescaled general SLC sampling MCMC}
\begin{algorithmic}[1]
\State Initialize $S \subseteq [n+d]$
\While{not mixed}
\State Set $k \gets \abs{S \cap [n]}$ 
\State Propose move $T \sim Q(S, \cdot)$
\If { $\abs{T \cap [n]} = k-1$}
\State $ R \gets T$ with probability $\min \{ 1, \frac{e}{d}  (d-k+1)  \}$, otherwise stay at $S$
\EndIf
\If { $\abs{T \cap [n]} = k$}
\State $R \gets T$
\EndIf
\If { $\abs{T \cap [n]} = k+1$}
\State $ R \gets T$ with probability $\min \{ 1, \frac{d}{e} \frac{1}{(d-k)} \}$, otherwise stay at $S$
\EndIf
\EndWhile
\end{algorithmic}
\end{algorithm}

It is a simple computation to show that the acceptance probabilities in Algorithm \ref{alg: rescaled general SLC sampling MCMC} are indeed the Metropolis-Hastings acceptance probabilities for sampling from $\nu_{ \text{sh} }$ using the proposal $Q$. Therefore the chain mixes to $\nu_{ \text{sh} }$. We obtain the following mixing time bound, recalling that the mixing time of $(Q, \nu_{ \text{sh} })$ is $ t_{S_0} (\varepsilon ) = \min \{ t \in \mathbb{N} \mid \norm{ Q^t(S_0, \cdot) - \nu_{ \text{sh}}}_1 \leq \varepsilon \}$.

\begin{theorem}\label{thm: MCMC mixing time bound rescaled}
For $d \geq 8$ the mixing time of the chain in Algorithm \ref{alg: rescaled general SLC sampling MCMC} started at $S_0$ satisfies the bound

 \vspace*{-5pt} 
\[ t_{S_0} (\varepsilon ) \leq \frac{1}{e \sqrt{2\pi} }d ^{5/2} 2^d \bigg ( \log \log \bigg \{ { d \choose \abs{S_0} } \frac{1}{\nu(S_0)}  \bigg \}+ \log \frac{1}{2 \varepsilon^2} \bigg ). \]
 \vspace*{-5pt} 
 
\end{theorem}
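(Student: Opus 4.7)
The plan is to lift the known fast mixing of the base exchange walk $Q$ on the $d$-homogeneous SLC distribution $\mu$ (established in the preceding proposition) to a mixing-time bound for the Metropolis--Hastings chain $\tilde{Q}$ with target $\nu_{\text{sh}}$. By the modified log-Sobolev inequality for base exchange walks on homogeneous SLC distributions (Anari--Liu--Oveis Gharan--Vinzant, sharpened by Cryan--Guo--Mousa), one has $\rho_0(Q,\mu) \gtrsim 1/d$, which via Bobkov--Tetali yields a $\log\log$-type mixing-time estimate for sampling from $\mu$. The bulk of the work is transferring this guarantee to $\tilde{Q}$ and $\nu_{\text{sh}}$.

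First I would verify that Algorithm~\ref{alg: rescaled general SLC sampling MCMC} really implements Metropolis--Hastings with proposal $Q$ and target $\nu_{\text{sh}}$. Using reversibility $\mu(S)Q(S,T)=\mu(T)Q(T,S)$ of the base-exchange walk, the acceptance ratio $\frac{\nu_{\text{sh}}(T)Q(T,S)}{\nu_{\text{sh}}(S)Q(S,T)}$ collapses to $[\mu(S)/\nu_{\text{sh}}(S)]/[\mu(T)/\nu_{\text{sh}}(T)]$. Since $\mu(S)/\nu_{\text{sh}}(S)$ is proportional to $(d/e)^{d-k}/(d-k)!$ with $k=|S\cap[n]|$, this ratio equals $e(d-k+1)/d$ for moves that decrement $|S\cap[n]|$ and $d/(e(d-k))$ for moves that increment it, matching the acceptance probabilities in the algorithm. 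Ergodicity and reversibility of $\tilde{Q}$ with respect to $\nu_{\text{sh}}$ then follow immediately.

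Next comes the main comparison. A direct calculation gives $\nu_{\text{sh}}(S)\tilde{Q}(S,T) = \min\{\nu_{\text{sh}}(S)Q(S,T),\nu_{\text{sh}}(T)Q(T,S)\}$, so the Dirichlet forms satisfy $\mathcal{E}_{\tilde{Q},\nu_{\text{sh}}}(f,f) \geq c_{\min}\,\mathcal{E}_{Q,\mu}(f,f)$, where $c_{\min} := \min_S \nu_{\text{sh}}(S)/\mu(S)$. A Holley--Stroock-type perturbation then yields $\rho_0(\tilde{Q},\nu_{\text{sh}}) \geq (c_{\min}/c_{\max})\rho_0(Q,\mu)$ with $c_{\max} := \max_S \nu_{\text{sh}}(S)/\mu(S)$. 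The ratio $c_{\max}/c_{\min}$ is precisely the oscillation of $(d-k)!/(d/e)^{d-k}$ over $k\in[0,d]$, which Stirling bounds; the $(d/e)$ rescaling baked into $\mu$ is chosen so that this oscillation is controlled, and careful Stirling bookkeeping (valid for $d\geq 8$) produces the prefactor $\frac{1}{e\sqrt{2\pi}}d^{5/2}2^d$. Plugging the resulting MLSI constant into Bobkov--Tetali, with $\nu_{\text{sh}}(S_0)$ expressed via $\binom{d}{|S_0|}^{-1}\nu(S_0)$, then delivers the stated bound.

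The main obstacle is the entropy comparison step: whereas the variance/Dirichlet-form comparison (and the corresponding spectral-gap transfer) is routine, the modified log-Sobolev constant does not transfer as cleanly under a change of measure. One must either invoke a perturbation inequality for MLSI constants of Holley--Stroock or Peres--Tetali type, or directly estimate $\operatorname{Ent}_{\nu_{\text{sh}}}(f)$ in terms of $\operatorname{Ent}_\mu(f)$ with constants compatible with the $d^{5/2}2^d$ factor. Tracking the Stirling constants through this step is the main technical nuisance.
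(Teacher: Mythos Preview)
Your strategy is the same as the paper's at the structural level: transfer a functional inequality known for the base-exchange chain $(Q,\mu)$ to the Metropolis--Hastings chain $(\tilde Q,\nu_{\text{sh}})$ via a comparison of Dirichlet forms and of the two stationary measures, then read off the mixing time. The paper phrases this through the classical log-Sobolev constant and the Diaconis--Saloff-Coste comparison theorem (stated in the appendix), whereas you phrase it through the modified log-Sobolev constant and a Holley--Stroock perturbation; these are the same machinery in slightly different clothing.

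There is one genuine difference in execution. The paper does \emph{not} use the Metropolis--Hastings identity $\nu_{\text{sh}}(S)\tilde Q(S,T)=\min\{\nu_{\text{sh}}(S)/\mu(S),\,\nu_{\text{sh}}(T)/\mu(T)\}\cdot\mu(S)Q(S,T)$ that you exploit. Instead it bounds the Dirichlet form through the cruder inequality $Q(S,T)\le (d/e)\,\tilde Q(S,T)$, obtained by lower-bounding every acceptance probability by $e/d$; this is precisely where the hypothesis $d\ge 8$ enters (so that $\max\{e,d/e\}=d/e$). Combined with the Stirling bounds $\sqrt{2\pi}\,2^{-d}Z\le \nu_{\text{sh}}/\mu\le e\sqrt d\,Z$, the paper gets $A=(d/e)\cdot 2^d/(\sqrt{2\pi}\,Z)$ and $a=e\sqrt d\,Z$, whence $aA\cdot d$ is of order $d^{5/2}2^d$. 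Your route avoids the acceptance-probability step altogether and gives directly $A=1/c_{\min}$, which is smaller by a factor $d/e$; carried through honestly, your argument yields a prefactor of order $d^{3/2}2^d$ rather than $d^{5/2}2^d$, and it does not need $d\ge 8$. So your assertion that ``Stirling bookkeeping produces $\tfrac{1}{e\sqrt{2\pi}}d^{5/2}2^d$'' is back-fitted to the theorem statement rather than the output of your own computation.

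Finally, your worry in the last paragraph is unwarranted. The entropy comparison $\operatorname{Ent}_{\nu_{\text{sh}}}(g)\le c_{\max}\operatorname{Ent}_{\mu}(g)$ follows at once from the variational formula $\operatorname{Ent}_\pi(g)=\inf_{t>0}\mathbb E_\pi[g\log(g/t)-g+t]$, whose integrand is pointwise nonnegative; and your edge-weight inequality $\nu_{\text{sh}}(S)\tilde Q(S,T)\ge c_{\min}\,\mu(S)Q(S,T)$ transfers verbatim to the MLSI Dirichlet form $\mathcal E(f,\log f)$ because each summand $(f(S)-f(T))(\log f(S)-\log f(T))$ is nonnegative. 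There is no obstacle here---indeed, since Cryan--Guo--Mousa establish the \emph{modified} log-Sobolev inequality for the base-exchange walk, your MLSI formulation is arguably the more natural one.
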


 \vspace*{-5pt} 
\section{Maximization of weakly log-submodular functions}\label{section: weak submodular}

In this section we explore the negative dependence properties of SLC functions (unnormalized SLC distributions). To do this we introduce a new notion of weak submodularity. Then we show that any function $\nu$ such that  $\mathcal{H}_{d} \nu$ is SLC is weak log-submodular. In particular, this includes all examples discussed above. Finally, we prove that a distorted greedy optimization procedure leads to optimization guarantees for weak (log-)submodular functions for the cardinality constrained problem $\text{OPT} \in \arg \max_{ \abs{S} \leq k} \nu(S)$. Appendix \ref{appendix: submodular} contains similar results for constrained greedy optimization of \emph{increasing}  weak (log-)submodular functions and unconstrained double greedy optimization of non-negative (log-)submodular functions.

\begin{defn}
We call a function $\rho : 2^{[n]} \rightarrow \mathbb{R}$ $\gamma$-weakly submodular if for any $S \subseteq [n]$ and $i,j \in [n] \setminus S $ with $i$ and $j$ not equal, we have 

 \vspace*{-5pt} 
\[ \rho(S) + \rho(S \cup \{i , j \} ) \leq  \gamma + \rho(S \cup i ) + \rho(S \cup  j ). \]
 \vspace*{-5pt} 
 
We say $\nu : 2^{[n]} \rightarrow \mathbb{R}_+$ is $\gamma$-weakly log-submodular if $\log \nu$ is $(\log \gamma)$-weakly submodular.
\end{defn}
 \vspace*{-5pt} 
 
Note carefully that our notion of weak submodularity differs from a notion of weak submodularity that already appears in the literature \cite{das2011submodular, harshaw2019submodular, khanna2017scalable}. Building on a result by Brändén and Huh \cite{branden2019lorentzian}, we prove the following result. 

 \vspace*{-5pt} 
\begin{theorem}\label{thm: weak submodular property}
Any non-negative function $\rho : 2^{[n]} \rightarrow \mathbb{R}_+$ with support contained in $\{ S \subseteq [n] : \abs{S} \leq d \}$ and generating polynomial $f$ such that $\mathcal{H}_{d} f$ is strongly log-concave is $\gamma$-weakly log-submodular for $ \gamma = 4 \big ( 1 - \frac{1}{d} \big )$. 
\end{theorem}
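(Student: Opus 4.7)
My plan is to exploit the Lorentzian structure of $g := \mathcal{H}_d f$. By hypothesis $g$ is SLC, and by construction it is $d$-homogeneous, so by the Br{\"a}nd{\'e}n--Huh characterization of Lorentzian polynomials, for every multi-index $\beta \in \mathbb{N}^{n+1}$ with $|\beta| = d-2$ the quadratic form $\partial^\beta g$ has a Hessian with at most one positive eigenvalue. I first reduce to the case $\rho(S \cup \{i,j\}) > 0$: the remaining configurations make the claimed inequality trivial, using that the matroid support forces downward closure (so, e.g., $\rho(S \cup i) = 0$ would force $\rho(S \cup \{i,j\}) = 0$).

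The central step is to choose $\beta$ so that the four quantities $\rho(S),\ \rho(S \cup i),\ \rho(S \cup j),\ \rho(S \cup \{i,j\})$ all appear as entries of a single $3 \times 3$ principal minor of the Hessian of $\partial^\beta g$. A natural choice is $\beta = \mathbf{1}_S + (d - |S| - 2)\, e_y$: differentiate once in each $z_\ell$ for $\ell \in S$ and $d-|S|-2$ times in $y$. Using $\partial^\beta z^\alpha = \frac{\alpha!}{(\alpha - \beta)!}\, z^{\alpha - \beta}$, the multiaffinity of $\mathcal{H}_d f$ in $z$, and the $1/(d-|T|)!$ normalization of its coefficients, the factorials telescope and a direct computation gives
\[
\partial^\beta g \;=\; \frac{\rho(S)}{2}\,y^2 \;+\; \sum_{u \in [n]\setminus S} \rho(S\cup u)\, z_u y \;+\; \sum_{\{u,v\}\subseteq[n]\setminus S}\rho(S\cup\{u,v\})\,z_u z_v,
\]
so the $3 \times 3$ principal submatrix of its Hessian indexed by $(z_i, z_j, y)$ is
\[
H \;=\; \begin{pmatrix} 0 & \rho(S \cup \{i,j\}) & \rho(S \cup i) \\ \rho(S \cup \{i,j\}) & 0 & \rho(S \cup j) \\ \rho(S \cup i) & \rho(S \cup j) & \rho(S) \end{pmatrix}.
\]
By Cauchy interlacing, this submatrix inherits the ``at most one positive eigenvalue'' property.

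The inequality now follows from linear algebra. Since $\mathrm{tr}(H) = \rho(S) \geq 0$ and $H$ has at most one positive eigenvalue, its other two eigenvalues are non-positive, so $\det H = \lambda_1 \lambda_2 \lambda_3 \geq 0$. A cofactor expansion along the first row yields
\[
\det H \;=\; \rho(S \cup \{i,j\})\bigl[\,2\,\rho(S \cup i)\rho(S \cup j) - \rho(S)\rho(S \cup \{i,j\})\,\bigr],
\]
so $\rho(S \cup \{i,j\}) > 0$ forces $\rho(S)\rho(S \cup \{i,j\}) \leq 2\,\rho(S \cup i)\rho(S \cup j)$. Since $2 \leq 4(1-1/d)$ for every $d \geq 2$, this suffices to establish the claimed $\gamma$-weak log-submodularity.

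The hardest part, I expect, is the factorial bookkeeping: the $1/(d-|T|)!$ normalization in $\mathcal{H}_d f$ must combine with the $(d-|S|-2)$-fold derivative in $y$ to produce the cleanly-normalized quadratic above, and it is easy to introduce a stray factor of two on the $(y,y)$ entry. The rest, namely converting the SLC hypothesis into the Lorentzian Hessian condition, invoking Cauchy interlacing, and the $3\times 3$ determinant identity, is routine.
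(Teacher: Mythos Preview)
Your argument is correct and in fact proves something slightly stronger than the paper claims: you obtain the constant $\gamma = 2$, and then observe that $2 \le 4(1-1/d)$ for $d \ge 2$. The factorial bookkeeping and the $3\times 3$ determinant computation both check out; the aside about matroid support and downward closure is unnecessary (the case $\rho(S\cup\{i,j\})=0$ is already trivial regardless), but it does no harm.

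The paper's proof follows the same first step as yours, differentiating the scaled homogenization $g$ by $\partial_z^S$, but then diverges: rather than continuing to differentiate in $y$ down to degree two and inspecting a $3\times 3$ principal minor, it applies the Br{\"a}nd{\'e}n--Huh inequality $h\,\partial_i\partial_j h \le 2\bigl(1-\tfrac{1}{\deg h}\bigr)\partial_i h\,\partial_j h$ directly to $h = \partial_z^S g$ at the point $(z,y)=(0,1)$. Evaluating the derivatives at that point produces the values $\rho(S)/(k-|S|)!,\ \rho(S\cup i)/(k-|S|-1)!$, etc., and after cancellation a residual factorial ratio $\tfrac{k-|S|}{\,k-|S|-1\,}\le 2$ remains, which combines with the $2(1-1/d)$ to give the stated $4(1-1/d)$. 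So the paper treats the Lorentzian inequality as a black box and pays for it with an extra factor of two, whereas you open the box, reduce to the quadratic case, and read off the sharper bound from $\det H \ge 0$. Your route is more elementary and loses less; the paper's is shorter if one is willing to cite the Br{\"a}nd{\'e}n--Huh inequality wholesale.
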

 \vspace*{-5pt} 
 
This result, whilst weaker than log-submodularity, gives a path to optimizing strongly log-concave functions. Consider $\rho :  2^{[n]} \rightarrow \mathbb{R}$, assumed to be $\gamma$-weakly submodular. Note in particular we do \emph{not} assume that $\rho$ is non-negative. This is important since we are interested in applying this procedure to the logarithm of a distribution, which need not be non-negative. Define $c_e = \max\{\rho([n] \setminus e ) - \rho([n]),0\}$, and $c(S) = \sum_{e \in S} c_e$. We use the convention that $c(\varnothing) = 0$. Then we may decompose $\rho = \eta -c$ where $\eta = \rho +c$. Note that $\eta$ is $\gamma$-weakly submodular and $c$ is a non-negative function.

We will extend the distorted greedy algorithm by \cite{feldman2018guess, harshaw2019submodular} to our notion of weak submodularity. To do so, we introduce the distorted objective $\Phi_i(S) = (1 - 1/k)^{k-i} \eta(S) - c(S)$ for $i= 0 , \ldots k$. The distorted greedy algorithm greedily builds a set $R$ of size at most $d$ by forming a sequence $\varnothing = S_0, S_1, \ldots , S_{k-1}, S_k=R$ such that $S_{i+1}$ is formed by adding the element $e_i \in [n]$ to $S_{i}$ that maximizes $\Phi_{i+1}(S_{i} \cup e_i) - \Phi_{i+1}(S_{i})$ so long as the increment is positive.

\vspace*{-5pt} 
\begin{algorithm}
\caption{Distorted greedy weak submodular constrained maximization of $\nu = \eta - c$}\label{alg: distorted greedy weak submodular maximization}
\begin{algorithmic}[1]
\State Let $S_0 = \varnothing$
\For{ $i = 0, \ldots , k-1$}
\State Set $e_i= \arg \max_{e \in [n]} \Phi_{i+1}(S_{i} \cup e_i) - \Phi_{i+1}(S_{i})$
\If {$\Phi_{i+1}(S_{i} \cup e_i) - \Phi_{i+1}(S_{i}) > 0$}
\State $S_{i+1} \gets S_{i} \cup e_i$
\Else \hspace{0cm} {$ S_{i+1} \gets S_{i} $}
\EndIf
\EndFor
\State \textbf{return} $R = S_k$
\end{algorithmic}
\end{algorithm}
 \vspace*{-5pt} 

\begin{theorem}\label{thm: distorted greedy}
Suppose $\rho : 2^{[n]} \rightarrow \mathbb{R}$ is $\gamma$-weakly submodular and $\rho(\varnothing) = 0$. Then the solution $R = S_k$ obtained by the distorted greedy algorithm satisfies
\[ \rho(R) = \eta(R) - c(R)  \geq \bigg ( 1 - \frac{1}{e} \bigg ) \bigg ( \eta(\text{OPT} ) - \frac{1}{2} \ell ( \ell -1) \gamma \bigg ) - c(\text{OPT}), \]
where $\ell := \abs{\text{OPT}} \leq k$.
\end{theorem}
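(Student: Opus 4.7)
The plan is to mirror the distorted-greedy potential-function analysis of Feldman and Harshaw, with the $\gamma$-weak submodularity introducing exactly one loss of $\tfrac{1}{2}\ell(\ell-1)\gamma$ at a single telescoping step. Write $O := \mathrm{OPT}$ and define the potential $\Phi_i(S) := (1-1/k)^{k-i}\eta(S) - c(S)$. Since $\rho(\varnothing) = 0$ forces $\eta(\varnothing) = c(\varnothing) = 0$, we have $\Phi_0(\varnothing) = 0$, while $\Phi_k(S_k) = \eta(R) - c(R) = \rho(R)$ by construction. Hence the whole proof reduces to a per-step lower bound on $\Phi_{i+1}(S_{i+1}) - \Phi_i(S_i)$ that, after telescoping and using the geometric identity $\sum_{i=0}^{k-1}\tfrac{1}{k}(1-1/k)^{k-i-1} = 1 - (1-1/k)^k \geq 1 - 1/e$, delivers the theorem.

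The key new ingredient is a $\gamma$-weak telescoping inequality: an induction on $|A|$ using the defining inequality $\rho(S) + \rho(S\cup\{i,j\}) \leq \gamma + \rho(S\cup i) + \rho(S\cup j)$ proves
\[
\eta(S\cup A) - \eta(S) \;\leq\; \sum_{e\in A}\bigl[\eta(S\cup e) - \eta(S)\bigr] + \binom{|A|}{2}\gamma
\]
(modularity of $c$ lets this inequality pass from $\rho$ to $\eta$, since $c$ does not affect second differences). Next, decompose $\Phi_{i+1}(S_{i+1}) - \Phi_i(S_i) = [\Phi_{i+1}(S_i) - \Phi_i(S_i)] + [\Phi_{i+1}(S_{i+1}) - \Phi_{i+1}(S_i)]$; the first summand simplifies algebraically to $\tfrac{1}{k}(1-1/k)^{k-i-1}\eta(S_i)$, and the greedy rule (max $\geq$ average over $O$, with a non-negative floor on the increment) gives $\Phi_{i+1}(S_{i+1}) - \Phi_{i+1}(S_i) \geq \tfrac{1}{k}\sum_{e\in O}[\Phi_{i+1}(S_i\cup e) - \Phi_{i+1}(S_i)]$. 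Expanding $\Phi_{i+1}$, invoking the weak telescoping with $A = O\setminus S_i$, and bounding $\sum_{e\in O} c_e \leq c(O)$ assembles to
\[
\Phi_{i+1}(S_{i+1}) - \Phi_i(S_i) \;\geq\; \tfrac{1}{k}(1-1/k)^{k-i-1}\bigl[\eta(O) - \tfrac{1}{2}\ell(\ell-1)\gamma\bigr] - \tfrac{1}{k}c(O),
\]
which telescopes to the stated bound.

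The delicate point is a step tucked inside the above derivation: the weak telescoping in fact produces $\eta(S_i\cup O)$ on the right-hand side, and one must replace it by $\eta(O)$. In the exactly submodular setting this is automatic from monotonicity of $\eta$, which is precisely what the choice $c_e = \max\{\rho([n]\setminus e) - \rho([n]),\, 0\}$ is designed to enforce. Under only $\gamma$-weak submodularity, $\eta$ is merely approximately monotone, so the replacement has to be carried out via a second, carefully structured, use of the weak-submodular slack — with the crucial requirement that the resulting loss fits inside the single $\tfrac{1}{2}\ell(\ell-1)\gamma$ budget already accounted for, rather than scaling with $n$ or accumulating per element of $S_i\setminus O$. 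Controlling this loss is the main technical obstacle.
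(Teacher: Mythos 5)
Your proposal follows the paper's argument essentially step for step: the same distorted potential $\Phi_i$, the same boundary values $\Phi_0(\varnothing)=0$ and $\Phi_k(S_k)=\rho(R)$, the same split of the per-step increment into the drift term $\tfrac{1}{k}(1-1/k)^{k-i-1}\eta(S_i)$ plus a greedy gain bounded below by the average over $\mathrm{OPT}$ (the paper packages these as Lemmas~\ref{lemma: phi increment} and~\ref{lemma: psi bound}), and the same weak telescoping inequality carrying the single $\tfrac{1}{2}\ell(\ell-1)\gamma$ loss before summing the geometric series.

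The ``delicate point'' you flag at the end is genuine, and it is worth saying that the paper does not close it either: the paper simply asserts $\eta(\mathrm{OPT})-\eta(S_i)\le \sum_{e\in\mathrm{OPT}}\eta(e\mid S_i)+\tfrac{1}{2}\ell(\ell-1)\gamma$ ``by telescoping and weak submodularity,'' but the telescoping argument produces $\eta(\mathrm{OPT}\cup S_i)$ on the left-hand side, and passing from there to $\eta(\mathrm{OPT})$ uses monotonicity of $\eta=\rho+c$. The choice $c_e=\max\{\rho([n]\setminus e)-\rho([n]),0\}$ forces $\eta(e\mid T)\ge 0$ only when $\rho$ is exactly submodular; under $\gamma$-weak submodularity one only gets $\eta(e\mid T)\ge -\abs{[n]\setminus(T\cup e)}\,\gamma$, so replacing $\eta(\mathrm{OPT}\cup S_i)$ by $\eta(\mathrm{OPT})$ costs an extra additive term that can scale like $\abs{S_i\setminus\mathrm{OPT}}\cdot n\gamma$ rather than fitting inside the stated $\tfrac{1}{2}\ell(\ell-1)\gamma$ budget. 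In short, your write-up is exactly as complete as the paper's own proof: the one step you leave open is the one step the paper handles by assertion, and actually closing it appears to require either an enlarged error term in the theorem or an additional hypothesis (e.g., monotonicity of $\eta$, or $S_i\subseteq\mathrm{OPT}$-type structure) that the present statement does not supply.
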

 \vspace*{-5pt} 
 
Note any weakly submodular function can be brought into the required form by subtracting $\rho(\varnothing)$ if it is non-zero. If $\nu$ is weakly log-submodular, we can decompose $\nu = \eta / c$ such that $\log \eta$ and $\log c$ perform the same role as $\eta$ and $c$ did in the weakly submodular setting. Then by appling Theorem \ref{thm: distorted greedy} to $\log \nu$ we obtain the following corollary.

\begin{cor}\label{corollary: weak log-submodular distorted greedy}
Suppose $\nu : 2^{[n]} \rightarrow \mathbb{R}_+$ is $\gamma$-weakly log-submodular and $\nu(\varnothing) = 1$. Then the solution $R = S_k$ obtained by the distorted greedy algorithm satisfies
\begin{align*}
     \nu(R) = \frac{\eta(R)}{ c(R)}  \; \geq \; \gamma^{ - \frac{1}{2} \ell( \ell - 1) (1 - 1/e)} \frac{ \eta(\text{OPT})^{1- 1/e} }{c(\text{OPT})}.
\end{align*}
\end{cor}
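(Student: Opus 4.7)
The plan is to apply Theorem \ref{thm: distorted greedy} directly to $\rho := \log \nu$ and then exponentiate. By the definition of weak log-submodularity, since $\nu$ is $\gamma$-weakly log-submodular, the function $\rho = \log \nu$ is $(\log \gamma)$-weakly submodular. Moreover, the hypothesis $\nu(\varnothing) = 1$ translates to $\rho(\varnothing) = 0$, so the normalization condition of Theorem \ref{thm: distorted greedy} is satisfied without modification.

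Next I would reconcile the multiplicative decomposition $\nu = \eta/c$ alluded to in the corollary with the additive decomposition $\rho = \eta' - c'$ used in the theorem. Specifically, one defines $c'_e = \max\{\rho([n]\setminus e) - \rho([n]), 0\}$ and $c'(S) = \sum_{e\in S} c'_e$; on the multiplicative side this corresponds to $c_e = \max\{\nu([n]\setminus e)/\nu([n]), 1\}$ and $c(S) = \prod_{e \in S} c_e$, so that $\log c(S) = c'(S)$. Setting $\eta(S) = \nu(S)\, c(S)$ gives $\log \eta = \rho + c' = \eta'$, so the logarithms of the multiplicative pieces $(\eta, c)$ are exactly the additive pieces $(\eta', c')$ that the distorted greedy algorithm of Theorem \ref{thm: distorted greedy} operates on. Consequently, running the distorted greedy algorithm on $\log \nu$ is identical to what the statement of Corollary \ref{corollary: weak log-submodular distorted greedy} implicitly invokes.

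Applying Theorem \ref{thm: distorted greedy} to $\rho = \log \nu$ with weak-submodularity constant $\log \gamma$ and with $\ell = |\text{OPT}| \le k$ yields
\[
\log \nu(R) \;=\; \log \eta(R) - \log c(R) \;\geq\; \Bigl(1-\tfrac{1}{e}\Bigr)\Bigl(\log \eta(\text{OPT}) - \tfrac{1}{2}\ell(\ell-1)\log \gamma\Bigr) - \log c(\text{OPT}).
\]
Exponentiating both sides (and using that $\exp$ is monotone) produces
\[
\nu(R) \;=\; \frac{\eta(R)}{c(R)} \;\geq\; \gamma^{-\tfrac{1}{2}\ell(\ell-1)(1-1/e)} \cdot \frac{\eta(\text{OPT})^{1-1/e}}{c(\text{OPT})},
\]
which is exactly the claimed bound.

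The argument is essentially a translation between additive and multiplicative language, so there is no serious technical obstacle; the only subtlety worth flagging is the bookkeeping in the previous paragraph, ensuring that the $\eta$ and $c$ appearing in the corollary's statement are the exponentials of the $\eta$ and $c$ produced by the distorted greedy decomposition when applied to $\log \nu$. Once this identification is made, the conclusion follows by a single exponentiation step.
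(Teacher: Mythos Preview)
Your proposal is correct and follows exactly the approach the paper indicates: apply Theorem~\ref{thm: distorted greedy} to $\rho=\log\nu$ (which is $(\log\gamma)$-weakly submodular with $\rho(\varnothing)=0$) and exponentiate. Your careful identification of the multiplicative pieces $(\eta,c)$ with the exponentials of the additive pieces in the distorted greedy decomposition is precisely the bookkeeping the paper glosses over when it writes ``we can decompose $\nu=\eta/c$ such that $\log\eta$ and $\log c$ perform the same role as $\eta$ and $c$ did in the weakly submodular setting.''
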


 \vspace*{-5pt} 
\section{Experiments}\label{section: experiments}

In this section we empirically evaluate the mixing time of Algorithm \ref{alg: rescaled general SLC sampling MCMC}. We use the standard \emph{potential scale reduction factor} metric to measure convergence to the stationary distribution \cite{brooks1998general}. The method involves running several chains in parallel and computing the average variance within each chain and between the chains. The PSRF score is the ratio of the between variance over the within variance and is usually above $1$.  When the PSRF score is close to $1$ then the chains are considered to be mixed. In all of our experiments we run three chains in parallel and declare them to be mixed once the PSRF score drops below $1.05$. 

\begin{figure}
\centering
\begin{subfigure}{.5\textwidth}
  \centering
  \includegraphics[width=0.9\linewidth]{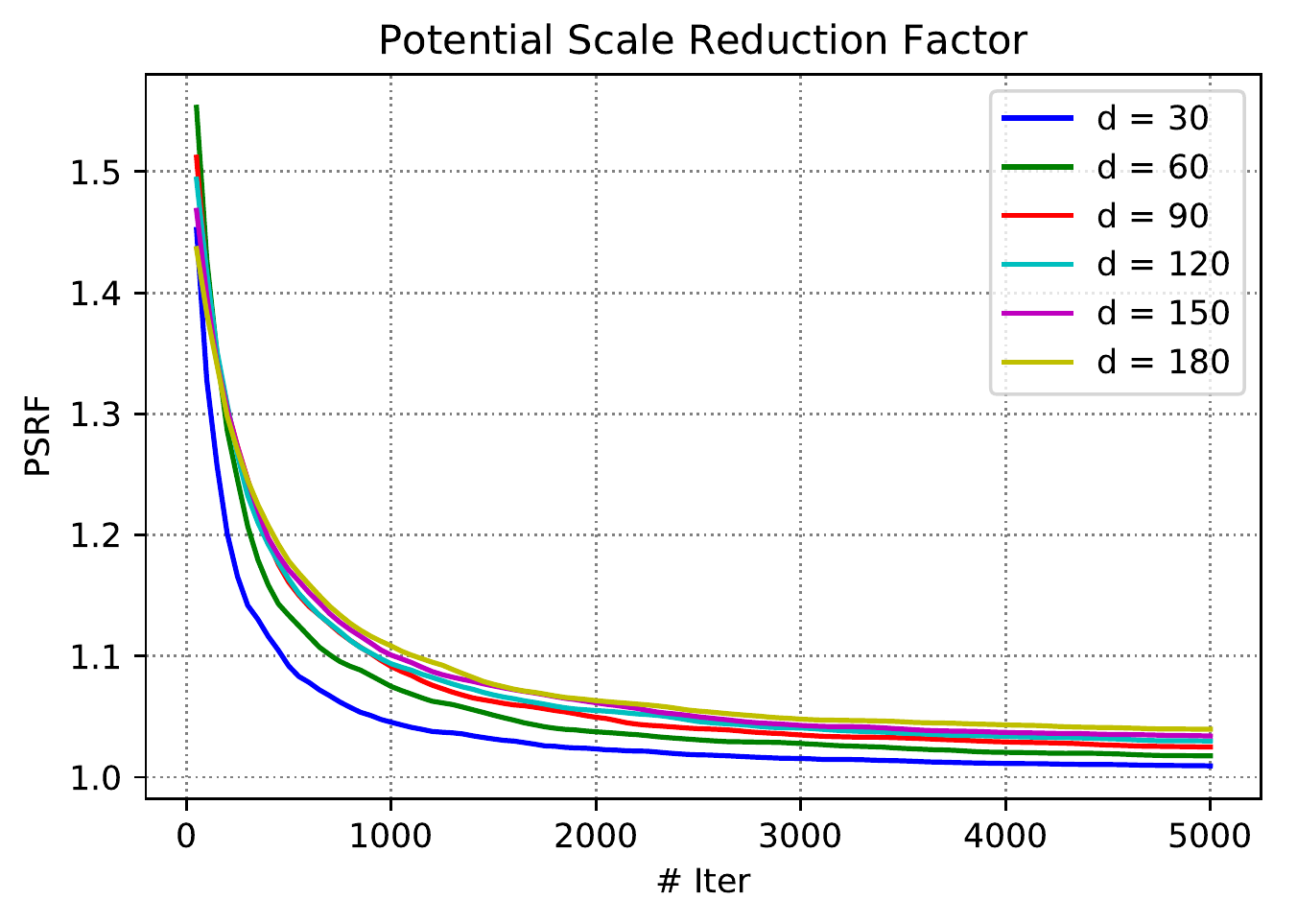}
  \caption{}
\end{subfigure}%
\begin{subfigure}{.5\textwidth}
  \centering
  \includegraphics[width=0.9\linewidth]{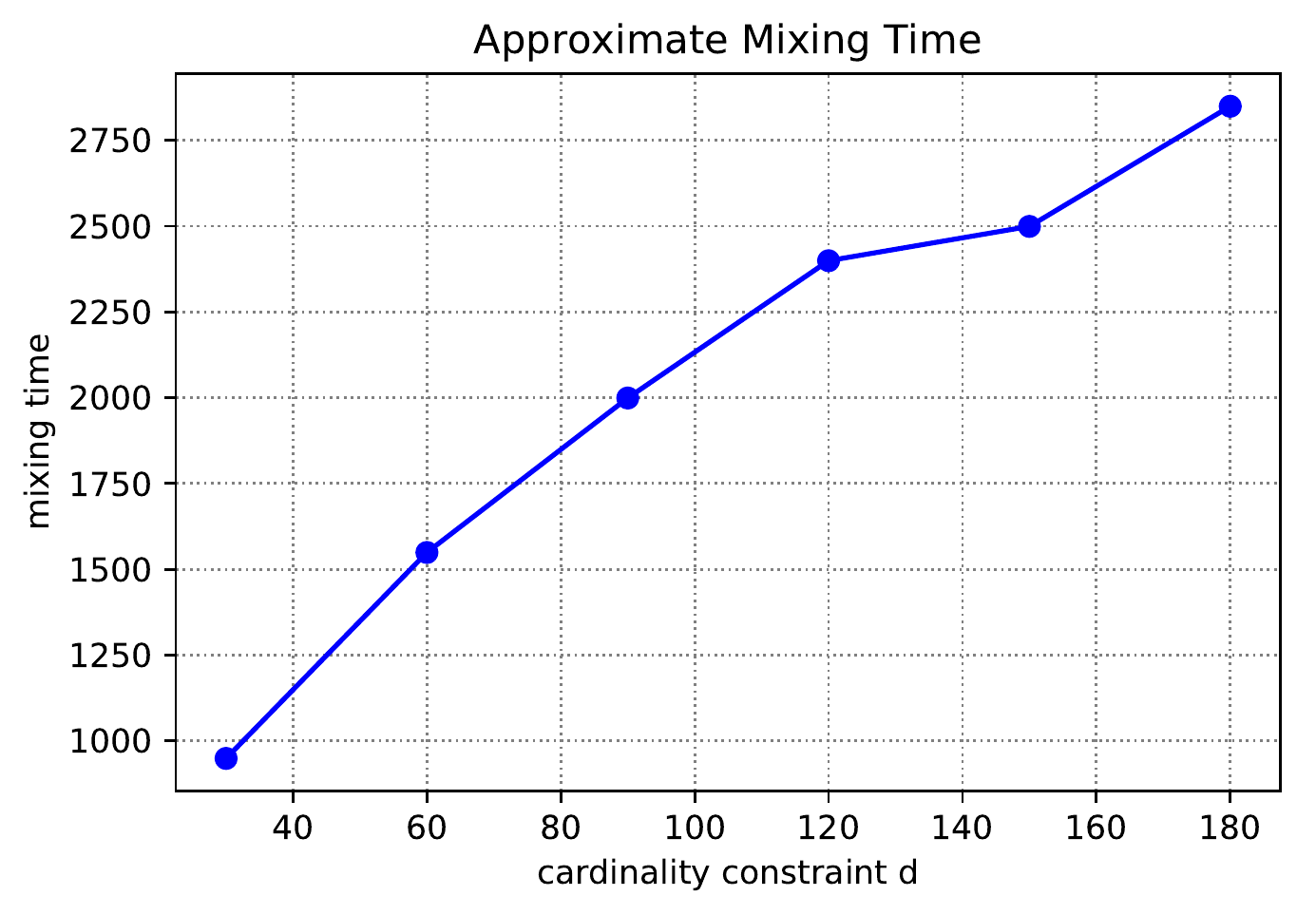}
  \caption{}
\end{subfigure}
\caption{Empirical mixing time analysis for sampling a ground set of size $n = 250$ and various cardinality constraints $d$, (a) the PSRF score for each set of chains, (b) the approximate mixing time obtained by thresholding at PSRF equal to $1.05$.}
\label{fig: PSRF and mix time varying d}
\end{figure}

\begin{figure}
\centering
\begin{subfigure}{.33\textwidth}
  \centering
  \includegraphics[width=1.0\linewidth]{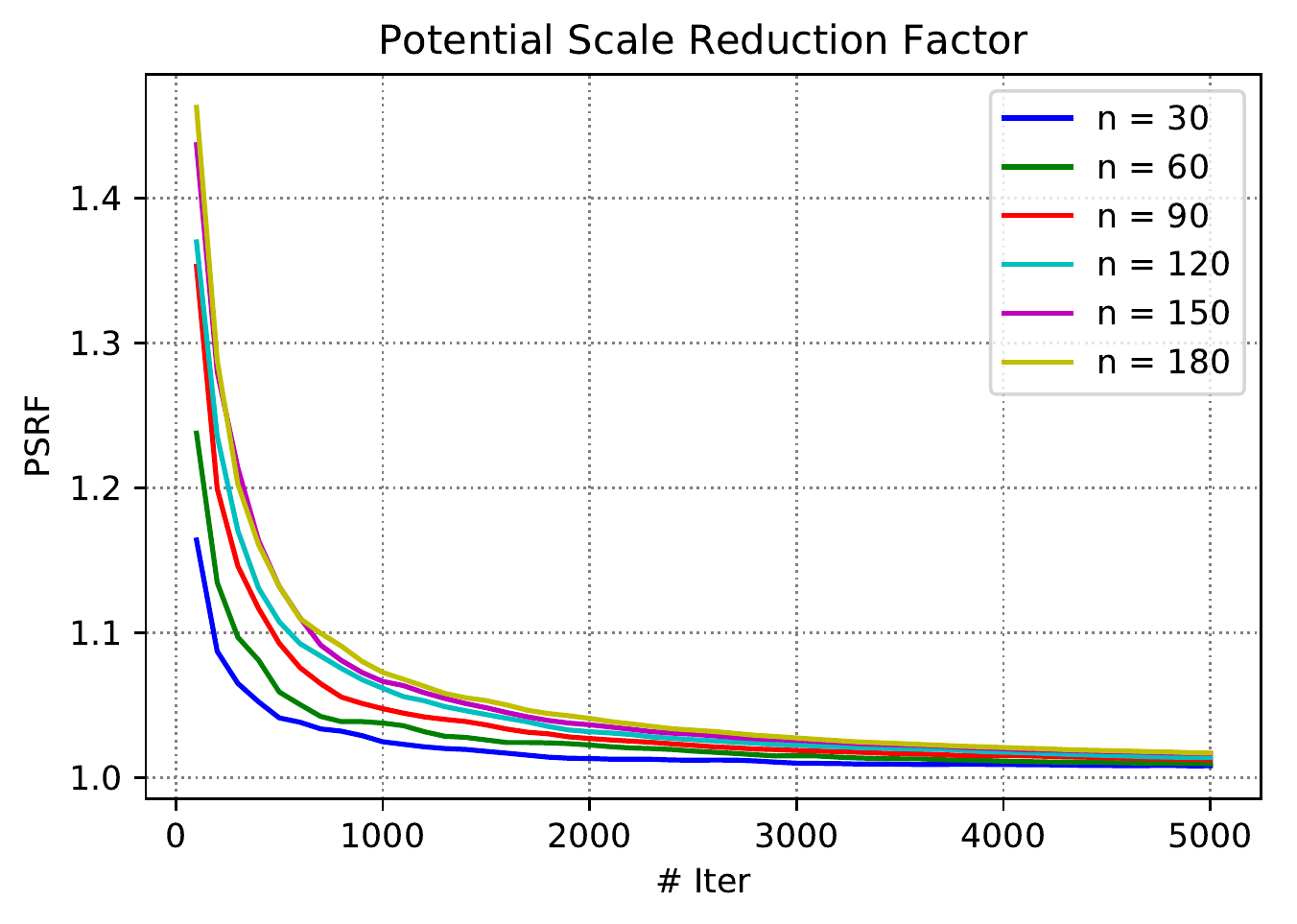}
  \caption{}
\end{subfigure}%
\begin{subfigure}{.33\textwidth}
  \centering
  \includegraphics[width=1.0\linewidth]{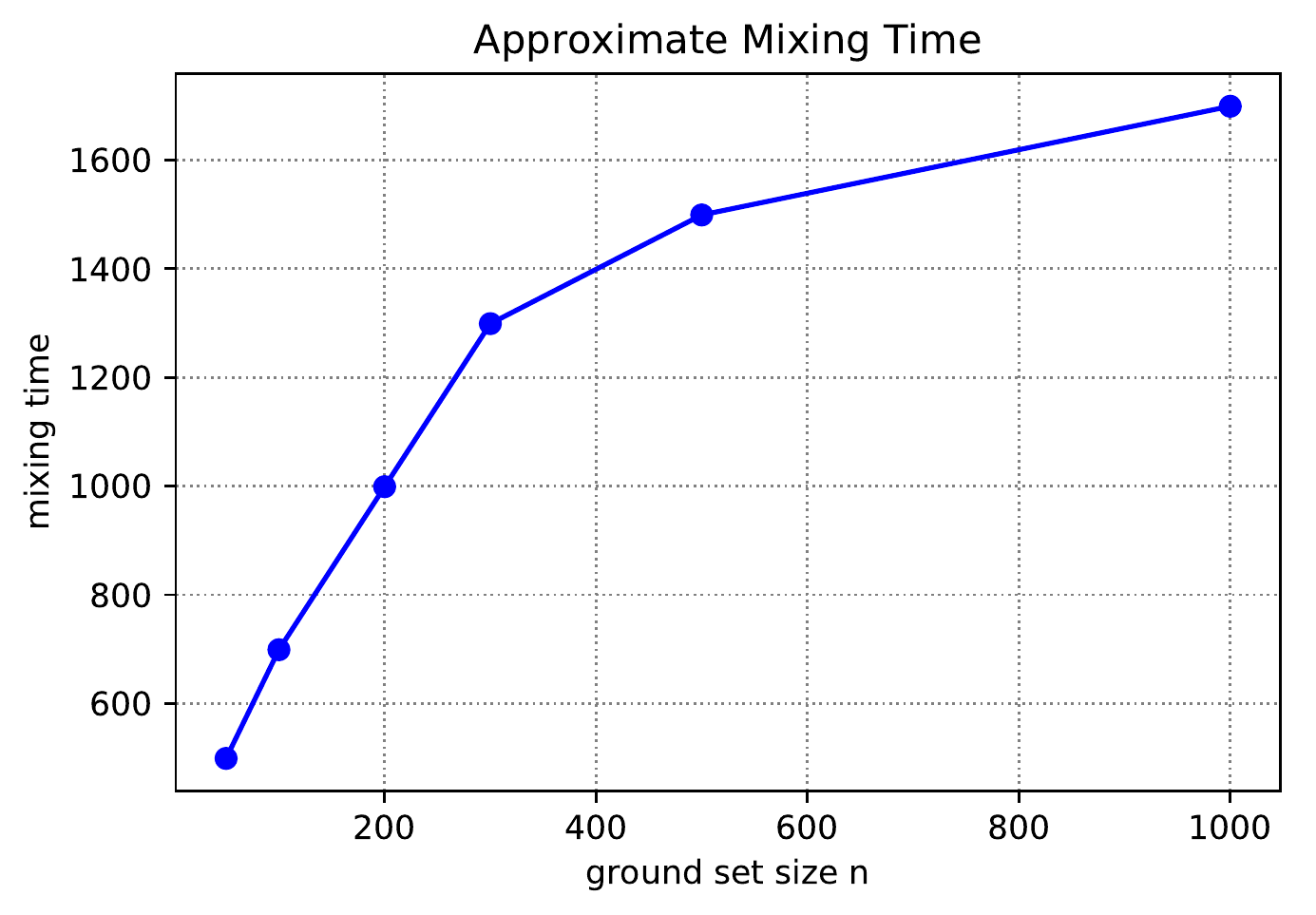}
  \caption{}
\end{subfigure}
\begin{subfigure}{.33\textwidth}
  \centering
  \includegraphics[width=1.0\linewidth]{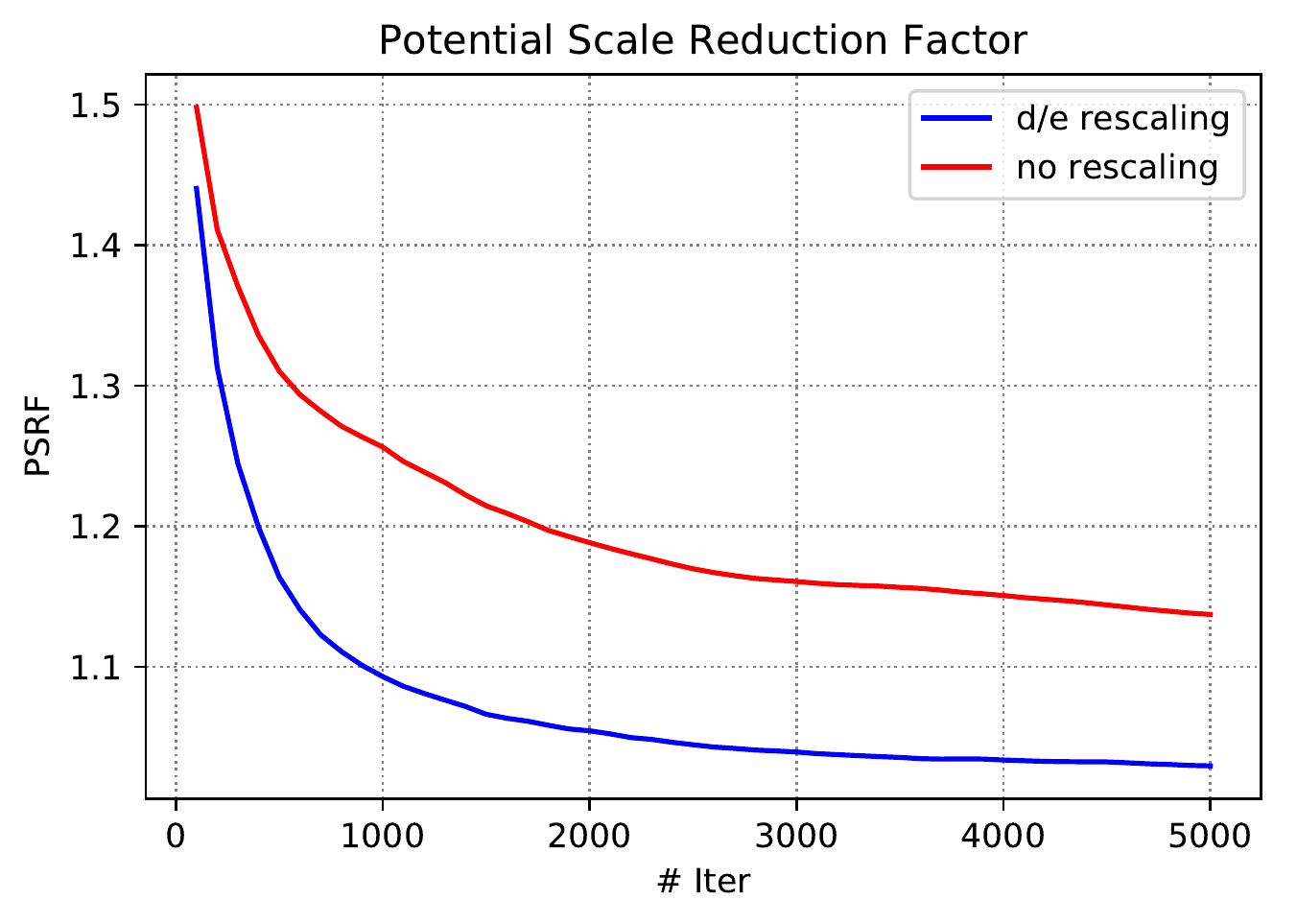}
  \caption{}
\end{subfigure}
\caption{(a,b) Empirical mixing time analysis for sampling a set of size at most $d = 40$ for varying ground set sizes, (a) the PSRF score for each set of chains, (b) the approximate mixing time obtained by thresholding at PSRF equal to $1.05$, (c) comparison of Algorithm \ref{alg: rescaled general SLC sampling MCMC} and a M-H algorithm where the proposal is built using $\mathcal{H}_{d} \nu$: $d= 100$ and $n=250$.}
\label{fig: PSRF and mix time varying n}
\end{figure}

Figure \ref{fig: PSRF and mix time varying d} considers the results of running the Metropolis-Hastings algorithm on a sequence of problems with different cardinality constraints $d$. In each case we considered the distribution $\nu(S) \propto \sqrt{\det( L_{S})} \mathbf{1}\{ \abs{S} \leq d \}$ where $L$ is a randomly generated $250 \times 250$ PSD matrix. Here $L_{S}$ denotes the $\abs{S} \times \abs{S}$ submatrix of $L$ whose indices belong to $S$. These simulations suggest that the mixing time grows linearly in $d$ for a fixed $n$.

Figure \ref{fig: PSRF and mix time varying n} considers the results of running the Metropolis-Hastings algorithm on a sequence of problems with different ground set sizes. In each case we considered the distribution $\nu(S) \propto \sqrt{\det( L_{S})} \mathbf{1}\{ \abs{S} \leq 40 \}$ where $L$ is a randomly generated PSD matrix where of appropriate size $n$. These simulations suggest that the mixing time grows sublinearly in $n$ for a fixed $d$.

It is important to know whether the mixing time is robust to different spectra $\sigma_L$ of $L$. We consider three cases, (i) smooth decay $\sigma_L = [n]$, (ii) a single large eigenvalue $\sigma_L = \{ n , (n-1)/2, (n-2)/2, \ldots, 2/2, 1/2\} $, and (iii) one fifth of the eigenvalues are equal to $n$, the  rest equal to $1/n$. Note that due to normalization, multiplying the spectrum by a constant does not affect the resulting distribution. The results for (i) are the content of  Figures \ref{fig: PSRF and mix time varying d} and \ref{fig: PSRF and mix time varying n} (a,b). Figures \ref{fig: PSRF and mix time varying d one big eval} and  \ref{fig: PSRF and mix time varying n one big eval} show the results for (ii) and figures \ref{fig: PSRF and mix time varying d step spectrum} and  \ref{fig: PSRF and mix time varying n step spectrum} show the results for (iii). Figures  \ref{fig: PSRF and mix time varying d one big eval}-\ref{fig: PSRF and mix time varying n step spectrum}  can be found in Appendix \ref{appendix: experiments}.

Finally, we address the question of why the proposal distribution was built using the particular choice of $\mu$ we made. Indeed one may use { \tt Base Exchange Walk } for \emph{any} homogenous distribution on $2^{[n]}$ to build a sampler, one simply needs to compute the appropriate acceptance probabilities. We restrict our attention to SLC distributions so as to be able to build on the recent mixing time results for homogenous SLC distributions. An obvious alternative to using $\mu$ to build the proposal is to use $\mathcal{H}_{d} \nu$. Figure \ref{fig: PSRF and mix time varying n}(c) compares the empirical mixing time of these two chains. The strong empirical improvement justifies our choice of adding the extra rescaling factor $d/e$.

 \vspace*{-5pt} 
\section{Discussion}

In this paper we introduced strongly log-concave distributions as a promising class of models for diversity. They have flexibility beyond that of strongly Rayleigh distributions, e.g., via exponentiated and cardinality constrained distributions (which do not preserve the SR property). We derived a suite of MCMC samplers for general SLC distributions and associated mixing time  bounds. For optimization, we showed that SLC distributions satisfy a weak submodularity property and proved mode finding guarantees. 

Still, many open problems remain. Although the mixing time bound has the interesting property of not directly depending on $n$, the $O(2^d)$ dependence seems quite conservative compared to the empirical mixing time results. An important future direction would be to bridge this gap. More fundamentally, the negative dependence properties of SLC distributions need to be explored in greater detail. Although in this work we proved a weak submodularity property for SLC distributions, we know of no examples of SLC distributions that are not log-submodular in the usual strong sense. This leads to the following conjecture, which if true would lead to stronger optimization guarantees.

\begin{Conjecture}
All strongly log-concave distributions are log-submodular.
\end{Conjecture}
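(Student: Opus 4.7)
The plan is a two-stage attack: first, lift $f$ to a homogeneous multiaffine SLC polynomial via the closure results of Section 3; second, attempt to establish log-submodularity in that restricted setting and transfer it back. By Corollary \ref{corollary: k swh closure theorem}, $\Pi(\mathcal{H}_d f)$ is SLC, homogeneous, and multiaffine on the extended ground set $[n+d]$, so the program would be to prove log-submodularity for all homogeneous multiaffine SLC distributions and then push this property back to $f$ through the marginalization identity that relates coefficients of $\Pi(\mathcal{H}_d f)$ to those of $f$.

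Before committing to either stage, however, the very first thing I would do is test the conjecture on the smallest non-trivial case $n = 2$. Any multiaffine $f = c_\emptyset + c_1 z_1 + c_2 z_2 + c_{12} z_1 z_2$ with non-negative coefficients is SLC exactly when the Hessian of $\log f$ is negative semidefinite throughout $\mathbb{R}_+^2$, and a direct calculation shows that the binding constraint is $c_\emptyset c_{12} \leq 2 c_1 c_2$, which is tight at the origin. Log-submodularity, by contrast, requires the strictly stronger inequality $c_\emptyset c_{12} \leq c_1 c_2$. Taking $c_\emptyset = c_{12} = 1$ and $c_1 = c_2 = 1/\sqrt{2}$ exactly saturates the SLC inequality, and one computes that the determinant of the Hessian of $\log f$ equals $((1/2 + (z_1 + z_2)/\sqrt{2} + z_1 z_2)^2 - 1/4)/f^4 \geq 0$ at every $(z_1, z_2) \in \mathbb{R}_+^2$, with strict positivity in the interior, so this $f$ genuinely is SLC; yet $c_\emptyset c_{12} = 1 > 1/2 = c_1 c_2$, contradicting log-submodularity. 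Notably the support is the independent sets of the free matroid of rank two, so even the matroid-support hypothesis used elsewhere in the paper is met. The example exactly saturates the $\gamma = 4(1 - 1/d) = 2$ bound of Theorem \ref{thm: weak submodular property} at $d = 2$, suggesting the constant there is already sharp.

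The main obstacle to the conjecture is therefore not a missing technical step but the conjecture itself: the gap between $\gamma = 2$ (delivered for free by the $2 \times 2$ Hessian determinant at the origin) and $\gamma = 1$ (required for log-submodularity) is populated by explicit SLC polynomials that saturate the weaker inequality. The productive revision would be to add a structural hypothesis — homogeneity, a strict-interior condition on the SLC cone, or support equal to the bases of a non-trivial matroid rather than merely its independent sets — and then attempt the homogenization-based reduction under that restricted statement. Under homogeneity many of the pairwise log-submodularity inequalities become vacuous for degree reasons, so the reduced problem becomes a question about matroidal Lorentzian polynomials and matroid-style inequalities (ultra-log-concavity, negative correlation on bases), which seems to be the correct object of study.
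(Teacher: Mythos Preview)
The paper offers no proof of this statement: it is posed as an open \emph{conjecture} in the Discussion, with the authors explicitly saying they ``know of no examples of SLC distributions that are not log-submodular.'' There is therefore nothing in the paper to compare your argument against.

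What you have actually done is \emph{disprove} the conjecture, and your counterexample is correct. The polynomial $f(z_1,z_2) = 1 + z_1/\sqrt{2} + z_2/\sqrt{2} + z_1 z_2$ is SLC: this follows directly from the lemma the paper itself quotes from \cite{anari2018log2}, which says that $a + bz_1 + cz_2 + dz_1z_2$ with non-negative coefficients is SLC if and only if $2bc \geq ad$; here $2 \cdot \tfrac{1}{\sqrt{2}} \cdot \tfrac{1}{\sqrt{2}} = 1 = 1 \cdot 1$, so the condition holds with equality. Your direct Hessian computation confirms this independently. Yet $c_\varnothing c_{12} = 1 > \tfrac{1}{2} = c_1 c_2$, so log-submodularity fails. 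The support is the full Boolean lattice on two elements, i.e.\ the independent sets of the free matroid of rank $2$, so the matroid-support hypothesis used throughout the paper is satisfied. Your observation that this example saturates the constant $\gamma = 4(1-1/d) = 2$ from Theorem~\ref{thm: weak submodular property} at $d=2$ is also correct and shows that the weak-log-submodularity constant there cannot be improved to $1$ in general.

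In short: the paper conjectures, you refute. The counterexample is valid and, somewhat ironically, is an immediate consequence of a characterization the paper already cites. Your closing remarks about which additional hypotheses (homogeneity, support on matroid bases rather than independent sets) might salvage a version of the statement are a sensible direction for follow-up.
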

 \vspace*{-5pt} 
Finally, in order for SLC models to be deployed in practice the user needs a way to learn a good SLC model from data. Both exponentiation and cardinality constraint add a single parameter that must be learned. We leave the question of how best to learn these parameters as an important topic for future work. 

{

  \bibliographystyle{plainnat}
  \setlength{\bibsep}{3pt}
  \bibliography{bibliography}
}

\clearpage
\appendix

\section{Proofs for operations preserving strong log-concavity}

In this section we prove Theorems \ref{thm,: weighted homogenization}, \ref{thm: weighted exponentiation}, and \ref{thm: closure under polarization}.

\subsection{Closure under scaled homogenization}

Let us begin this section by observing that closure under homogenization and symmetric homogenization both fail for strongly log-concave polynomials. The homogenization of a polynomial $f(z) = \sum_{ \abs{S} \leq d} c_S z^S$ is $f_{\text{h}}(z,y) = \sum_{\abs{S} \leq d} c_S z^S y^{d - \abs{S}}$, and its symmetric homogenization is $ f_{\text{sh} } = \Pi( f_{\text{h}})$.

We will use the following lemma.

\begin{lemma} \cite{anari2018log2}
$f(y,z) = a + by + cz + dyz$ with $a,b,c,d \in \mathbb{R}_+$ is SLC if and only if $2bc \geq ad$.
\end{lemma}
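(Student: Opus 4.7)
The approach is to directly check the SLC definition by examining all partial derivatives of $f$. Since $f$ is multiaffine in two variables, we need only consider $f$, $\partial_y f = b + dz$, $\partial_z f = c + dy$, and $\partial_y \partial_z f = d$. The last three are constant or linear with non-negative coefficients, hence log-concave (or zero) on $\mathbb{R}_+^2$ automatically. Therefore the entire content of the lemma is to characterize when $f$ itself is log-concave on $\mathbb{R}_+^2$.

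The plan is then to compute the Hessian of $\log f$ and enforce negative semidefiniteness. Since $\partial_y^2 f = \partial_z^2 f = 0$, the diagonal entries simplify to
\[ \partial_y^2 \log f = -\frac{(\partial_y f)^2}{f^2}, \qquad \partial_z^2 \log f = -\frac{(\partial_z f)^2}{f^2}, \]
which are $\leq 0$ automatically. The off-diagonal is $\partial_y \partial_z \log f = (f \cdot d - \partial_y f \cdot \partial_z f)/f^2$. The Hessian is negative semidefinite on $\mathbb{R}_+^2$ iff its determinant is non-negative there, and a short calculation shows this is equivalent to
\[ 2\,\partial_y f \cdot \partial_z f \;\geq\; f \cdot \partial_y \partial_z f \qquad \text{on } \mathbb{R}_+^2. \]

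Substituting and expanding, the inequality becomes $2bc - ad + bd\,y + cd\,z + d^2 yz \geq 0$ for all $y, z \geq 0$. Since the coefficients of $y$, $z$, and $yz$ are non-negative, the left-hand side is minimized at $y = z = 0$, giving precisely $2bc \geq ad$. Conversely, if $2bc \geq ad$ then the expression is manifestly non-negative for $y, z \geq 0$. Combining both directions yields the stated characterization.

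The main thing to be careful about is the sign analysis: one must verify that the inequality $A^2 \geq (B-A)^2$ (with $A = \partial_y f \cdot \partial_z f \geq 0$ and $B = fd \geq 0$) correctly reduces to $2A \geq B$ rather than $|2A - B| \leq $ something else, and that on the boundary where $f$ or its derivatives vanish the log-concavity definition is still satisfied in the sense of $[-\infty, \infty)$-valued concavity. These are routine but worth a sentence to confirm no degenerate cases are missed.
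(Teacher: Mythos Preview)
Your argument is correct: reducing SLC to log-concavity of $f$ itself (since the first and second partials are affine or constant), then checking that the $2\times 2$ Hessian of $\log f$ is negative semidefinite via its determinant, is exactly the right approach. The key computation $A^2 - (B-A)^2 = (2A-B)B$ with $A=\partial_y f\,\partial_z f$ and $B=fd\ge 0$ cleanly gives the condition $2\,\partial_y f\,\partial_z f \ge f\,\partial_y\partial_z f$, and your minimization at $y=z=0$ is valid because all the remaining coefficients are non-negative.

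Note that the paper does not actually prove this lemma; it is quoted from \cite{anari2018log2} and used only as a tool to build counterexamples to closure under (symmetric) homogenization. So there is no in-paper proof to compare against. Your direct Hessian computation is the standard and natural route. The only cosmetic point: your final paragraph flags the degenerate cases ($f$ vanishing on the boundary, or $d=0$) as ``routine,'' which is fine, but in a polished write-up it is worth one explicit line noting that when $d=0$ the polynomial is affine (hence log-concave) and $2bc\ge 0=ad$ holds trivially, and that when $a=0$ the inequality $2bc\ge ad$ is again automatic while log-concavity on the interior follows from the same computation.
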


The counterexample is as follows: by the preceding lemma $f(y,z) = 1 + 2y + z + 3yz$ is SLC. Then note that its homogenization is $f_{\text{h}}(w,y,z) = w^2 + 2wy + wz + 3yz$. A quick computational check then shows that $\nabla^2(f_{\text{h}})$ has eigenvalues $-3.1, 0.4, 4.7$ each to one decimal place. Furthermore the symmetric homogenization of $f$ is,

\[ f_{\text{sh}}(w_1,w_2,y,z) = w_1w_2 + (w_1 + w_2)y + 1/2(w_1 + w_2)z + 3yz ,\]

and  one may check that $\nabla^2(f_{\text{sh}})$ has eigenvalues $3.8, -3.1, -1.0, 0.3$  to one decimal place. This shows that SLC is not closed under homogenization or symmetric homogenization. So we seek modified operations that are conserved by SLC. In Section 
\ref{section: preserving SLC} we introduced the rescaled homogenization of $f = \sum_{ \abs{S} \leq d} c_S z^S$,

\[\mathcal{H}_{k} f(z,y) = \sum_{\abs{S} \leq k} \frac{c_S}{(k - \abs{S})!} z^Sy^{k-\abs{S}}. \]

\begin{theorem}\label{conj: weighted homogenization}
Let $M = ( [n], \mathcal{I})$ be a rank $d$ matroid and $f = \sum_{S \in \mathcal{I}}c_S z^S \in \mathbb{R}[z_1, \ldots , z_n]$ be SLC where $c_S  > 0 $ for all $S \in \mathcal{I}$. For any $k \leq d$ the polynomial $\mathcal{H}_{k} f$ is SLC. 
\end{theorem}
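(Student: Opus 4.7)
The plan is to invoke the characterization of homogeneous SLC (equivalently, Lorentzian) polynomials due to Br\"and\'en--Huh and Anari et al.: a $k$-homogeneous polynomial $p\in\mathbb{R}_+[z_1,\ldots,z_n,y]$ is SLC iff (i) $\mathrm{supp}(p)$ is an M-convex set of exponent vectors, and (ii) for every $\alpha$ with $\abs{\alpha}=k-2$ such that $\partial^\alpha p\neq 0$, the constant Hessian of the quadratic $\partial^\alpha p$ has at most one positive eigenvalue. Since $\mathcal{H}_k f$ is manifestly $k$-homogeneous, only (i) and (ii) require verification.

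For (i), the support of $\mathcal{H}_k f$ is $\{\mathbf{1}_S+(k-\abs{S})e_y : S\in\mathcal{I},\ \abs{S}\leq k\}$. M-convexity is checked by a direct case analysis on a pair of exponent vectors $\alpha_S,\alpha_T$ and a coordinate $i$ at which $\alpha_S$ strictly exceeds $\alpha_T$: when $i=y$ one has $\abs{S}<\abs{T}$, and the matroid augmentation axiom supplies some $j\in T\setminus S$ with $S\cup j\in\mathcal{I}$; when $i\in[n]$ and $\abs{S}\leq\abs{T}$, augmentation applied to $S\setminus i$ versus $T$ again supplies the required $j\in T\setminus S$; when $i\in[n]$ and $\abs{S}>\abs{T}$, taking $j=y$ works since $S\setminus i\in\mathcal{I}$ and still has size $\leq k$.

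For (ii), the heart of the argument is an exact reduction to the quadratic case. Writing $\alpha=\beta+je_y$ with $B=\mathrm{supp}(\beta)$, one may assume $\beta\in\{0,1\}^n$, else $\partial^\alpha\mathcal{H}_k f\equiv 0$ by multilinearity in $z$. A short direct computation, combining $\partial^\beta z^S=z^{S\setminus B}\mathbf{1}[B\subseteq S]$ with $\partial_y^j\, y^{k-\abs{S}}=\tfrac{(k-\abs{S})!}{(k-\abs{S}-j)!}y^{k-\abs{S}-j}$, collapses the factorials and delivers the key identity
\[ \partial^\alpha \mathcal{H}_k f \;=\; \mathcal{H}_2 g, \qquad g\;:=\;\partial^B f, \]
where $g$ is itself SLC (every derivative of an SLC polynomial is SLC) and is supported on the independent sets of the contracted matroid $M/B$. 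Moreover, if $\partial^\alpha\mathcal{H}_k f\not\equiv 0$ then $B\in\mathcal{I}$, which forces $g(0)=c_B>0$.

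It then suffices to show: for any SLC polynomial $g$ with $g(0)>0$, the Hessian of $\mathcal{H}_2 g$ has at most one positive eigenvalue. Writing $a=g(0)$, $b_i=g_{\{i\}}$, and $c_{ij}=g_{\{i,j\}}$, this Hessian is the block matrix
\[ H \;=\; \begin{pmatrix} a & b^{\top} \\ b & C \end{pmatrix}, \]
with $C$ the symmetric off-diagonal matrix of the $c_{ij}$'s (zero on the diagonal). Log-concavity of $g$ on the open positive orthant, extended by continuity to $z=0$ using $g(0)>0$, gives the Hessian-of-$\log g$ inequality $aC-bb^{\top}\preceq 0$; the Schur complement $H/a=C-bb^{\top}/a\preceq 0$ then certifies that $H$ has exactly one positive eigenvalue. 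The main technical obstacle is precisely this Schur-complement/boundary-continuity step, together with the careful bookkeeping that identifies $\partial^\alpha\mathcal{H}_k f$ with $\mathcal{H}_2(\partial^B f)$; once both are in hand, the Br\"and\'en--Huh criterion delivers strong log-concavity of $\mathcal{H}_k f$.
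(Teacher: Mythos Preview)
Your proof is correct and follows essentially the same route as the paper. Both arguments reduce the $(k-2)$-fold derivative of $\mathcal{H}_k f$ to $\mathcal{H}_2(\partial^B f)$ (the paper writes this as $\partial_y^{k-|B|-2}\mathcal{H}_{k-|B|}(\partial^B f)$, which is the same polynomial), and both establish the single-positive-eigenvalue condition for its Hessian via the inequality $g(0)\,\nabla^2 g(0)-(\nabla g(0))(\nabla g(0))^\top\preceq 0$ coming from log-concavity of $g=\partial^B f$ at the origin. The only cosmetic differences are that you verify the support condition by a direct M-convexity case analysis (Br\"and\'en--Huh criterion), whereas the paper checks indecomposability of each derivative (Anari et al.\ criterion) via a short ``star-graph centered at $y$'' observation; and you phrase the Hessian step as a Schur complement, whereas the paper invokes the equivalent point-evaluation criterion $(a^\top Qa)Q-(Qa)(Qa)^\top\preceq 0$ at $a=e_y$.
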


A key component of proving this theorem is the following lemma.

\begin{lemma}\label{lemma: pertial deriv of f_wh is log-conc}
Let $f(z) = \sum_{ S \in \mathcal{I}} c_S z^S \in \mathbb{R}[z_1, \ldots , z_n]$ be multiaffine and SLC and suppose that $M = ([n], \mathcal{I}) $ is a matroid of rank $d$ and $k \leq d$. Then  $\partial_y^{k-2} (\mathcal{H}_{k} f)$ is log-concave.
\end{lemma}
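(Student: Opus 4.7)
The plan is to compute $g := \partial_y^{k-2}(\mathcal{H}_k f)$ explicitly, recognize it as a quadratic form in $(z,y)$, and then reduce its log-concavity to the log-concavity of $f$ at the origin via a Schur-complement argument.

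First, using the closed form $\mathcal{H}_k f(z,y) = \sum_{|S|\leq k} \tfrac{c_S}{(k-|S|)!}\, z^S y^{k-|S|}$, only monomials with $|S|\leq 2$ survive $k-2$ differentiations in $y$, giving
\[
g(z,y) \;=\; \tfrac12\, c_\emptyset\, y^2 \;+\; y \sum_{i} c_{\{i\}} z_i \;+\!\!\! \sum_{\{i,j\} \in \mathcal{I}} \!\! c_{\{i,j\}}\, z_i z_j \;=\; \tfrac12\, \tilde z^{\top} A\, \tilde z,
\]
where $\tilde z := (z_1,\ldots,z_n,y)^{\top}$ and, using multiaffinity of $f$,
\[
A \;=\; \begin{pmatrix} H & v \\ v^{\top} & c_\emptyset \end{pmatrix}, \qquad H_{ij} = \partial_i\partial_j f(0),\ \ v_i = \partial_i f(0),\ \ c_\emptyset = f(0).
\]
Thus $g$ is a $2$-homogeneous polynomial with nonnegative coefficients on $\mathbb{R}_+^{n+1}$.

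Next, I would invoke the standard Lorentzian criterion for degree-$2$ polynomials (immediate from writing $\nabla^2\log g = \tfrac{1}{g}A - \tfrac{1}{g^2}(A\tilde z)(A\tilde z)^{\top}$ and taking limits): a quadratic form with nonnegative coefficients is log-concave on $\mathbb{R}_+^{n+1}$ if and only if its Hessian has at most one positive eigenvalue. Since the matroid axioms give $\emptyset \in \mathcal{I}$, we have $c_\emptyset > 0$, so the Schur-complement inertia identity
\[
\operatorname{inertia}(A) \;=\; \operatorname{inertia}\!\big(H - v v^{\top}/c_\emptyset\big) + (1,0,0)
\]
shows that $A$ has at most one positive eigenvalue if and only if $c_\emptyset\, H \preceq v v^{\top}$.

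Finally, this matrix inequality is exactly what log-concavity of $f$ supplies at $z = 0$. Because $f(0) = c_\emptyset > 0$, the function $\log f$ is smooth in a neighborhood of the origin, and SLC of $f$ forces
\[
\nabla^2 \log f(0) \;=\; \frac{\nabla^2 f(0)}{f(0)} \;-\; \frac{\nabla f(0)\,\nabla f(0)^{\top}}{f(0)^2} \;\preceq\; 0,
\]
which rearranges to precisely $c_\emptyset H \preceq v v^{\top}$. The main (mild) obstacle is citing the degree-$2$ Lorentzian characterization cleanly; beyond that the argument is a one-line Schur complement combined with log-concavity of $f$ at the single point $z = 0$. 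In particular, no induction on $k$ is needed, since $g$ depends only on the degree-$\leq 2$ part of $f$ and is insensitive to the value of $k$.
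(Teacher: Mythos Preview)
Your proposal is correct and follows essentially the same approach as the paper: compute $\partial_y^{k-2}(\mathcal{H}_k f)$ explicitly as a quadratic, reduce its log-concavity to the matrix inequality $c_\varnothing H \preceq vv^\top$, and read this off from $\nabla^2\log f(0)\preceq 0$. The only cosmetic difference is that the paper reaches the same matrix inequality via the criterion $(a^\top Qa)Q-(Qa)(Qa)^\top\preceq 0$ evaluated at $a=e_y$, whereas you use the equivalent ``at most one positive eigenvalue'' formulation together with a Schur complement; both routes land on the identical $n\times n$ condition $[c_\varnothing c_{ij}-c_ic_j]\preceq 0$.
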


\begin{proof}
Let $q =   \partial_y^{k-2} (\mathcal{H}_{k} f)$. We compute, 

\begin{align*}
  q(y,z) &= \partial_y^{k-2} \bigg ( \sum_{S \in \mathcal{I}_k} \frac{c_S}{(k-\abs{S})!}  z^S y^{k-\abs{S}} \bigg  ) \\
    &= \frac{1}{2}c_\varnothing y^2 + y  \sum_{  \{i\}\in \mathcal{I}_k} c_i z_i  + \sum_{  \{i,j\}\in \mathcal{I}_k} c_{i,j} z_i z_j 
\end{align*}

Let $Q = \nabla^2  q$. Note that since $ q $ is of degree two, $Q$ is in fact a constant and does not depend on $y$ or $z$. Therefore, $q$ is log-concave on $\mathbb{R}_{\geq 0}^{n+1}$ if and only if it is log-concave at $a = (1,0,\ldots,0)^\top $. This happens if and only if $(a^\top Q a)Q - (Qa)(Qa)^\top $ is negative semidefinite by Lemma \ref{lemma: log conc iff one pos eval}. But this is only true if and only if the matrix $[c_\varnothing c_{ij} - c_ic_j ]_{i,j \in [n]}$ is negative semidefinite. By definition

\[ \nabla^2 \log f = \frac{(\nabla^2 f) f - (\nabla f)(\nabla f)^\top}{f^2} \preccurlyeq 0\]

and evaluating at $z =0$ we notice that $f(0) = c_\varnothing$, $\partial_i f(0) = c_i$, and $\partial_{ij}f(0) = c_{ij}$.  So indeed we have that $[c_\varnothing c_{ij} - c_ic_j ]_{i,j \in [n]} \preccurlyeq 0$. 
\end{proof}

\begin{proof}[Proof of Theorem \ref{conj: weighted homogenization}]
To prove strong log-concavity we proceed by verifying the hypotheses of Theorem \ref{thm: anari log-concave characterization}. Let $\alpha \in \mathbb{N}^d$ and $m \in \mathbb{N}$ such that $\abs{\alpha } + m \leq k-2$. The first order of business is to show that $\partial^\alpha_z \partial^m_y (\mathcal{H}_{k} f)$ is indecomposable.  If $\alpha_i > 1$ for any $i$ then the expression equals $0$, so we may assume $\alpha = \mathbf{1}_K$ for some $K \subseteq [n]$. Then note that 

\begin{align*}
 \partial^K_z \partial^m_y (\mathcal{H}_{k} f) &= \partial^m_y  \sum_{S \in \mathcal{I}_k/K}\frac{c_{S \cup K}}{(k-\abs{S}-\abs{K})!} z^{S } y^{k-\abs{S} -\abs{K} } \\ 
 &= \partial^m_y  \sum_{S \in (\mathcal{I}/K)_{k - \abs{K}}}\frac{c_{S \cup K}}{(k-\abs{S}-\abs{K})!} z^{S } y^{k-\abs{S} -\abs{K} } \\
 &=: \partial^m_y  g,
 \end{align*}
 
 where $\mathcal{I}/K$ is the family of independent sets of $M/K$, the matroid contraction of $M$ by $K$. We first check indecomposability of $\partial^K_z \partial^m_y (\mathcal{H}_{k} f)$. Note that if $i \in [n] \setminus K$ is a loop of $M/K$ then the variable $z_i$ does not appear in $g$ and $\partial_i g = 0$. Similarly $\partial_i g= 0$ for all $i \in K$. Otherwise the monomial $z_i y^{k-1-m}$ appears in $\partial^m_y g$ with non-zero coefficient. Since $k-1-m \geq 1$ this implies that $\partial_i \partial_y g$ is non-zero. In particular the graph formed in the definition of indecomposability is a star centered at $y$ and therefore connected, proving that $ \partial^K_z \partial^m_y (\mathcal{H}_{k} f)$ is indecomposable.
 
Now suppose that $\abs{K} + m = k-2$. Notice that $\partial_z^K f = \partial_z^K \sum_{S \in \mathcal{I}} c_Sz^S = \sum_{S \in \mathcal{I}/K } c_{S \cup K} z^S$ is SLC, and

\[ \mathcal{H}_{k - \abs{K}}( \partial_z^K f) = \sum_{S \in ( \mathcal{I}/K)_{k - \abs{K}}} \frac{c_{S\cup K}}{(k - \abs{S} - \abs{K})!}z^S y^{k - \abs{S} - \abs{K}}.\]

So $\partial^K_z \partial^m_y (\mathcal{H}_{k} f) = \partial_y^m \mathcal{H}_{k - \abs{K}}( \partial_z^K f)  = \partial_y^{k - \abs{K} -2} \mathcal{H}_{k - \abs{K}}( \partial_z^K f)$ and we may apply Lemma \ref{lemma: pertial deriv of f_wh is log-conc} to conclude $\partial^K_z \partial^m_y (\mathcal{H}_{k} f) $ is log-concave.
\end{proof}

\subsection{Closure under scaled exponentiation}\label{sec: closure under exponentiation} 

For matrices $A = [a_{ij}]$ and $B = [b_{ij} ]$ and scalar $\alpha \in \mathbb{R}$ we write $A^{\circ \alpha}$  to denote the element-wise power $[a_{ij}^\alpha]$ and $A \circ B = [ a_{ij}b_{ij}]_{ij}$ to denote the Hadamard (element-wise) product. The proof of Theorem \ref{thm: weighted exponentiation} and of Theorem $1.7$ from \cite{anari2018log2} both boil down to the following linear algebra fact.

\begin{lemma}\footnote{This result was independently discovered by Br{\"a}nd{\'e}n and Huh \cite{branden2019lorentzian}. }\label{lemma: one pos eval implies exp has one pos eval}
Suppose $A= [a_{ij}]$ is symmetric, has non-negative entries and at most one positive eigenvalue. Then $A^{\circ \alpha}$also has at most one positive eigenvalue for $0 \leq \alpha \leq 1$.
\end{lemma}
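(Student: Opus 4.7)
The plan is to reduce the lemma to a positive-semidefiniteness statement and then dispatch it via an integral representation combined with the Schur product theorem. Since $A$ is symmetric with at most one positive eigenvalue, the spectral theorem gives $A \preceq \lambda vv^\top$ in Loewner order for some $\lambda \geq 0$; since $A$ is nonneg entrywise, Perron--Frobenius lets us take $v \geq 0$. After a block-decomposition argument handling any zero coordinates of $v$ (such coordinates decouple from the positive support because $Av = \lambda v$ and $A, v \geq 0$), I may assume $v$ is strictly positive. Conjugating by $D := \mathrm{diag}(v)$ and rescaling by $\lambda$, the hypothesis becomes $J - B \succeq 0$, where $J := \mathbf{1}\mathbf{1}^\top$ is the all-ones matrix and $B := D^{-1} A D^{-1}/\lambda$ is symmetric nonneg. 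Since the Hadamard power commutes with diagonal congruence, namely $(D^{-1} A D^{-1})^{\circ \alpha} = D_\alpha^{-1} A^{\circ \alpha} D_\alpha^{-1}$ with $D_\alpha := \mathrm{diag}(v^{\circ \alpha})$, the lemma reduces to the following claim: if $B$ is symmetric nonneg with $J - B \succeq 0$, then $J - B^{\circ \alpha} \succeq 0$ for all $\alpha \in [0,1]$. Back-transforming this yields $A^{\circ \alpha} \preceq \lambda^\alpha (v^{\circ \alpha})(v^{\circ \alpha})^\top$, a rank-one PSD matrix, which indeed forces $A^{\circ \alpha}$ to have at most one positive eigenvalue.

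To prove the claim I would invoke the classical integral representation $x^\alpha = \frac{\alpha}{\Gamma(1-\alpha)} \int_0^\infty (1 - e^{-sx})\, s^{-\alpha-1}\, ds$ valid for $x \geq 0$ and $\alpha \in (0,1)$. Specializing at $x=1$ and subtracting yields $1 - x^\alpha = \frac{\alpha}{\Gamma(1-\alpha)} \int_0^\infty (e^{-sx} - e^{-s})\, s^{-\alpha-1}\, ds$, so applied entrywise,
\[
J - B^{\circ \alpha} \;=\; \frac{\alpha}{\Gamma(1-\alpha)} \int_0^\infty e^{-s}\bigl(e^{s C_\circ} - J\bigr)\, s^{-\alpha-1}\, ds,
\]
where $C := J - B$ and $e^{s C_\circ}$ denotes the entrywise exponential of $sC$. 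Expanding $e^{s C_\circ} - J = \sum_{k \geq 1} (s^k/k!)\, C^{\circ k}$ and using the reduction hypothesis $C \succeq 0$, the Schur product theorem applied inductively gives $C^{\circ k} \succeq 0$ for every $k \geq 1$; thus the integrand is a convergent nonneg combination of PSD matrices, hence PSD for each $s \geq 0$. Integrating a PSD-valued function against the positive measure $s^{-\alpha-1}\, ds$ preserves PSD-ness, so $J - B^{\circ \alpha} \succeq 0$ as required. The boundary cases $\alpha = 1$ (trivially the hypothesis) and $\alpha = 0$ (where $B^{\circ 0}$ is essentially $J$ on the strictly positive block) are handled by direct inspection and continuity.

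The main obstacle is conceptual: realizing that the spectral condition ``at most one positive eigenvalue'' can be upgraded to a Loewner-order statement $A \preceq \lambda vv^\top$ with a nonneg Perron $v$, which then behaves cleanly under diagonal conjugation and renders the Hadamard power amenable to the above integral analysis over PSD building blocks $C^{\circ k}$. The only technical delicacy is handling Perron eigenvectors with zero coordinates, which is routine via block decomposition once the main argument is in place.
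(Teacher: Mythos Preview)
Your argument is correct and shares the paper's overall architecture---reduce to a non-degenerate case, use the Perron--Frobenius eigenvector to diagonally normalize, establish that a suitable negativity condition survives Hadamard $\alpha$-powers, and then back-transform---but the engine you use for the crucial middle step is genuinely different. The paper works with the notion of \emph{conditionally negative definite} matrices: it cites a result of Bapat--Raghavan that the Perron-normalized matrix $[a_{ij}/(v_iv_j)]$ is conditionally negative definite, then invokes a black-box lemma of Berg et al.\ that CND is preserved under entrywise powers in $[0,1]$, and finishes with Sylvester's law of inertia. You instead upgrade the spectral hypothesis to the Loewner statement $A \preceq \lambda vv^\top$, reduce to showing $J - B \succeq 0 \Rightarrow J - B^{\circ\alpha} \succeq 0$, and prove this directly via the Bernstein integral representation $x^\alpha = c_\alpha \int_0^\infty (1-e^{-sx})\,s^{-\alpha-1}\,ds$ together with the Schur product theorem applied to the PSD matrix $C = J - B$. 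Your route is more self-contained (no external CND lemmas) and makes the mechanism transparent: the Hadamard power is expressed as an integral over PSD building blocks $C^{\circ k}$. The paper's route is shorter on the page but defers the real work to the cited references; in fact, the integral representation you use is essentially how one proves the Berg et al.\ result, so you are unpacking what the paper imports. Your block-decomposition treatment of zero Perron coordinates is also slightly more explicit than the paper's one-line ``limiting argument,'' and is correct as stated: the zero block is forced to vanish because a nonnegative-entry NSD matrix is identically zero.
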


To prove Lemma \ref{lemma: one pos eval implies exp has one pos eval} we recall a couple of of facts from linear algebra. We shall call a matrix $A$ conditionally negative definite if $z^\top A z \leq 0$ for all $z$ such that $z^\top \mathbf{1} =0$.

\begin{lemma}\label{lemma: one pos eval implies perron normalization is cnd}\cite{bapat1997nonnegative}
Suppose $A= [a_{ij}]$ is symmetric, has positive entries, and at most one positive eigenvalue. Then $ \bigg [ \frac{a_{ij} }{v_iv_j} \bigg ] $ is conditionally negative definite, where $v$ is the Perron-Frobenius eigenvector of $A$.
\end{lemma}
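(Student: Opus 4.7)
The plan is to exploit Perron--Frobenius together with the rank-one spectral truncation of $A$, and then to transfer the resulting operator inequality to the rescaled matrix $B = [a_{ij}/(v_i v_j)]$ via a diagonal congruence. Writing $D = \mathrm{diag}(v)$, I would first observe that $B = D^{-1} A D^{-1}$ and that $D^{-1} v = \mathbf{1}$. Since $A$ has strictly positive entries, Perron--Frobenius gives a simple, strictly positive largest eigenvalue $\lambda > 0$ with a strictly positive eigenvector; under the hypothesis of at most one positive eigenvalue, this $\lambda$ is necessarily that eigenvalue and every other eigenvalue of $A$ is non-positive.

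Next I would use the spectral decomposition. Setting $\hat v = v / \| v \|$, we can write
\[
A \;=\; \lambda\, \hat v \hat v^\top \;+\; \sum_{i \geq 2} \mu_i\, u_i u_i^\top, \qquad \mu_i \leq 0,
\]
which immediately yields the operator inequality $A \preceq \lambda\, \hat v \hat v^\top$. Conjugating both sides by the positive diagonal matrix $D^{-1}$ (congruence preserves the semidefinite order) gives
\[
B \;=\; D^{-1} A D^{-1} \;\preceq\; \lambda\, D^{-1} \hat v \hat v^\top D^{-1} \;=\; \frac{\lambda}{\| v \|^2}\, \mathbf{1}\mathbf{1}^\top,
\]
where the last equality uses $D^{-1} v = \mathbf{1}$, i.e.\ $D^{-1} \hat v = \mathbf{1}/\|v\|$.

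To conclude, for any $z$ with $z^\top \mathbf{1} = 0$ the right-hand side contributes $\tfrac{\lambda}{\|v\|^2} (z^\top \mathbf{1})^2 = 0$, so $z^\top B z \leq 0$, which is exactly conditional negative definiteness of $B$. The only subtlety I anticipate is bookkeeping: one must confirm that the unique positive eigenvalue of $A$ really coincides with the Perron eigenvalue (which is where positivity of $v$ is needed to rule out cancellation), and that the degenerate case of zero positive eigenvalues does not arise---but the latter is ruled out automatically by $A$ having strictly positive entries, so no separate case analysis is required. Everything else is a one-line spectral inequality followed by a one-line congruence, making the argument essentially mechanical once the Perron--Frobenius setup is in place.
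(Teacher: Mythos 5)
Your argument is correct. Note that the paper does not prove this lemma at all --- it is imported verbatim from Bapat and Raghavan's book --- so there is no internal proof to compare against; what you have written is a valid self-contained derivation along the classical lines. The key steps all check out: $B = D^{-1} A D^{-1}$ with $D = \mathrm{diag}(v)$ is exactly the matrix $[a_{ij}/(v_i v_j)]$; positivity of the entries forces the Perron root $\lambda = \rho(A) > 0$ to be the unique positive eigenvalue, so the spectral decomposition gives $A \preceq \lambda \hat v \hat v^\top$; congruence by the invertible diagonal $D^{-1}$ preserves the semidefinite order and sends $\hat v \hat v^\top$ to $\mathbf{1}\mathbf{1}^\top / \|v\|^2$; and restricting to $z$ with $z^\top \mathbf{1} = 0$ kills the rank-one term. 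The only cosmetic remark is that the lemma is later applied (in the proof of Lemma~\ref{lemma: one pos eval implies exp has one pos eval}) after a reduction to strictly positive entries by a limiting argument, so your standing assumption of strict positivity is exactly the regime in which the lemma is used.
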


\begin{lemma}\label{lemma: cnd implies exp is cnd}\cite{berg1984harmonic}
Suppose $A \in \mathbb{R}^{n \times n}$ is conditionally negative definite. Then $A^{\circ \alpha}$ is conditionally negative definite for $0 \leq \alpha \leq 1$.
\end{lemma}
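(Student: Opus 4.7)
The plan is to prove this classical result via the integral representation of the fractional power combined with Schoenberg's theorem on entrywise exponentials of CND matrices. The boundary cases $\alpha=0$ (giving the all-ones matrix $J$, trivially CND) and $\alpha=1$ (giving $A$ itself) are immediate, so I would focus on $\alpha \in (0,1)$.

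First I would invoke the integral representation
\[
  x^\alpha \;=\; \frac{\alpha}{\Gamma(1-\alpha)} \int_0^\infty \bigl(1 - e^{-tx}\bigr)\, t^{-\alpha-1}\, dt, \qquad x \geq 0,
\]
valid for $\alpha \in (0,1)$. Applied entrywise (this is legitimate because the matrices to which this lemma is applied in the proof of Lemma~\ref{lemma: one pos eval implies exp has one pos eval} have non-negative entries, so the Hadamard power is well defined), it yields
\[
  A^{\circ \alpha} \;=\; c_\alpha \int_0^\infty \bigl(J - \exp(-tA)_H\bigr)\, t^{-\alpha-1}\, dt,
\]
where $J = \mathbf{1}\mathbf{1}^\top$, $c_\alpha = \alpha / \Gamma(1-\alpha) > 0$, and $\exp(-tA)_H$ denotes the matrix with entries $e^{-t a_{ij}}$. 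Now fix $z \in \mathbb{R}^n$ with $\mathbf{1}^\top z = 0$. Plugging into the quadratic form kills the $J$-term and leaves
\[
  z^\top A^{\circ \alpha} z \;=\; -\, c_\alpha \int_0^\infty z^\top \exp(-tA)_H z \cdot t^{-\alpha-1}\, dt.
\]
The final step is Schoenberg's theorem: $A$ is CND if and only if $\exp(-tA)_H$ is positive semidefinite for every $t \geq 0$. Granted this, the integrand is non-negative, so $z^\top A^{\circ\alpha} z \leq 0$, which is exactly the CND condition for $A^{\circ\alpha}$.

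The main obstacle is Schoenberg's theorem itself. Its proof is classical but not short: the easy direction comes from differentiating $\exp(-tA)_H$ at $t=0$; the harder direction uses the semigroup/Hadamard-product identity $\exp(-(s+t)A)_H = \exp(-sA)_H \circ \exp(-tA)_H$ together with Schur's theorem that Hadamard products of PSD matrices are PSD, plus a limiting argument from small~$t$. In the writeup I would simply cite \cite{berg1984harmonic}, as the authors do, since the novel content of the present lemma lies in combining that theorem with the fractional-power representation. A secondary, routine task is to verify integrability of $t^{-\alpha-1}(J - \exp(-tA)_H)$ at $t=0$ (using $1 - e^{-tx} = O(tx)$) and at $t=\infty$ (using boundedness), which works precisely for $\alpha \in (0,1)$.
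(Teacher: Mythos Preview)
The paper does not actually prove this lemma: it is stated with a citation to \cite{berg1984harmonic} and used as a black box in the proof of Lemma~\ref{lemma: one pos eval implies exp has one pos eval}. So there is no ``paper's own proof'' to compare against.

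That said, your argument is correct and is in fact the standard route to this result. The integral representation of $x^\alpha$ for $\alpha\in(0,1)$ reduces the problem to Schoenberg's theorem, and your handling of the integrability near $0$ and $\infty$ is fine. Your remark about non-negative entries is also well placed: the lemma as stated in the paper omits this hypothesis, but $A^{\circ\alpha}$ is only real-valued for non-integer $\alpha$ when $a_{ij}\ge 0$, and indeed the only application (to the Perron-normalized matrix $[a_{ij}/(v_iv_j)]$) has strictly positive entries. If you want the writeup to be self-contained you would need to include at least a sketch of Schoenberg's theorem; otherwise citing \cite{berg1984harmonic} for both Schoenberg and the fractional-power consequence, as the authors implicitly do, is perfectly adequate.
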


With these two facts in hand we are now ready to prove Lemma \ref{lemma: one pos eval implies exp has one pos eval}.

\begin{proof}[Proof of Lemma \ref{lemma: one pos eval implies exp has one pos eval}]
Assume that $a_{ij} > 0$ for all $i$ and $j$. The general case is then obtained by a limiting argument. Since $A= [a_{ij}]$ is symmetric, has positive entries, and at most one positive eigenvalue, Lemma \ref{lemma: one pos eval implies perron normalization is cnd} implies that $ \bigg [ \frac{a_{ij} }{v_iv_j} \bigg ] $ is conditionally negative definite, where $v$ is the  Perron-Frobenius eigenvector of $A$.  Then Lemma \ref{lemma: cnd implies exp is cnd} tell us that 

\[ B = \bigg [ \frac{a_{ij} ^\alpha}{v_i^\alpha v_j^\alpha} \bigg ] =  [a_{ij} ^\alpha] \circ ( v v^\top)^{ \circ - \alpha} = A^{\circ \alpha} \circ ( v v^\top)^{ \circ - \alpha} \]

is also conditionally negative definite. Note the identity,

\[ A^{\circ \alpha} = [a_{ij}^\alpha]  =  B \circ ( v v^\top)^{ \circ  \alpha} = \text{diag}( v ^{ \circ  \alpha} ) B \text{diag}( v ^{ \circ  \alpha} ).\]

Since the entries of the Perron-Frobenius eigenvector are all strictly positive, $\text{diag}( v ^{ \circ  \alpha} )$ is non-singular. We may therefore apply Sylvester's law of inertia to conclude that $B$ and $ A^{\circ \alpha} = \text{diag}( v ^{ \circ  \alpha} ) B \text{diag}( v ^{ \circ  \alpha} )$ have the same number of positive eigenvalues: one. 

\end{proof}

Next, we prove Theorem \ref{thm: weighted exponentiation} by showing how it reduces to exactly the statement of Lemma \ref{lemma: one pos eval implies exp has one pos eval}.

\begin{proof}[Proof of Theorem \ref{thm: weighted exponentiation}]
Consider $S \subseteq [n]$ and $m \in \mathbb{N}$ such that $\abs{S}  + m = d-2$. Denoting $\nabla^2( \partial^S_z \partial^m_z \mathcal{H}_{k}f) = A = [a_{ij}]_{i,j = 1}^{n+1}$ one observes that 

\[ \nabla^2( \partial^S_z \partial^m_z \mathcal{H}_{k, \alpha} f) = A^{\circ \alpha} = [a_{ij}^\alpha]_{i,j = 1}^{n+1}. \]

By Theorem \ref{thm,: weighted homogenization} the matrix $A$ has at most one positive eigenvalue and so we may apply Lemma \ref{lemma: one pos eval implies exp has one pos eval} to yield the result.
\end{proof}

Note Lemma \ref{lemma: cnd implies exp is cnd} also permits a simplified proof of the following theorem due to Anari et al. concerning the homogeneous case.

\begin{theorem}\label{thm: homog closur under exp}
Suppose $f = \sum_{ \abs{S} = d} c_S  z^S \in \mathbb{R}_+[z_1, \ldots , z_n]$ is SLC. Then $f_\alpha = \sum_{ \abs{S} = d} c_S^\alpha  z^S $ is SLC for any $0 \leq \alpha \leq 1$.
\end{theorem}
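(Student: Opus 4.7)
The plan is to reduce this directly to Lemma \ref{lemma: one pos eval implies exp has one pos eval}, essentially mimicking the proof of Theorem \ref{thm: weighted exponentiation} but in the purely homogeneous setting, where no extra $y$ variable is needed. The whole point is that homogeneity plus multiaffineness collapses the SLC characterization to a statement about Hessians of quadratic forms, and those Hessians transform by an entrywise power when we pass from $f$ to $f_\alpha$.

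First I would unpack the SLC condition. Since $f$ is $d$-homogeneous and multiaffine (the support consists of monomials $z^S$ with $|S|=d$), the only surviving partial derivatives are $\partial_z^S f$ for $S \subseteq [n]$. For any such $S$ with $|S| = d-2$, the polynomial $\partial_z^S f = \sum_{i\neq j,\, i,j \notin S} c_{S \cup \{i,j\}} z_i z_j$ is a pure quadratic form with constant Hessian $H^{(S)} = [h^{(S)}_{ij}]$, where $h^{(S)}_{ij} = c_{S\cup\{i,j\}}$ if $i\neq j$ and $i,j\notin S$, and $h^{(S)}_{ij}=0$ otherwise. By the homogeneous SLC characterization (Theorem \ref{thm: anari log-concave characterization}, combined with Lemma \ref{lemma: log conc iff one pos eval}), $f$ is SLC if and only if each such $H^{(S)}$ has at most one positive eigenvalue.

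Next, I would observe that the same computation applied to $f_\alpha$ produces exactly the Hadamard $\alpha$-power: the Hessian of $\partial_z^S f_\alpha$ is $(H^{(S)})^{\circ \alpha}$, since zero entries remain zero and positive entries $c_{S\cup\{i,j\}}$ get raised to the power $\alpha$. Now Lemma \ref{lemma: one pos eval implies exp has one pos eval} directly says that a symmetric nonnegative matrix with at most one positive eigenvalue retains at most one positive eigenvalue under the entrywise $\alpha$-power for $0 \le \alpha \le 1$. Applying this to each $H^{(S)}$ and re-invoking the characterization in the reverse direction gives that $f_\alpha$ is SLC.

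The main thing to be careful about is the indecomposability side-condition that typically accompanies the SLC characterization, but since Hadamard exponentiation by $\alpha>0$ preserves the set of zero entries, it preserves the induced ``connectedness'' graph on variables, so indecomposability of $\partial_z^S f_\alpha$ follows from that of $\partial_z^S f$. I expect the real content of the argument to live entirely in Lemma \ref{lemma: one pos eval implies exp has one pos eval}, via the chain \emph{one positive eigenvalue} $\Rightarrow$ \emph{Perron-normalized matrix is conditionally negative definite} (Lemma \ref{lemma: one pos eval implies perron normalization is cnd}) $\Rightarrow$ \emph{CND is closed under Hadamard $\alpha$-powers} (Lemma \ref{lemma: cnd implies exp is cnd}) $\Rightarrow$ \emph{Sylvester's law of inertia restores one positive eigenvalue for $(H^{(S)})^{\circ\alpha}$}. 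Once that matrix-analytic kernel is in place, the bookkeeping above is essentially automatic, which is precisely why the authors present this theorem as a corollary of the same machinery used for Theorem \ref{thm: weighted exponentiation}.
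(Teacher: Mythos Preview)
Your proposal is correct and follows essentially the same route as the paper's proof: for each $S$ with $|S|=d-2$, identify the Hessian of $\partial_z^S f$ as the matrix $[c_{S\cup\{i,j\}}]_{i,j}$, observe that passing to $f_\alpha$ replaces it by its entrywise $\alpha$-power, and invoke Lemma~\ref{lemma: one pos eval implies exp has one pos eval} together with Lemma~\ref{lemma: log conc iff one pos eval} and Theorem~\ref{thm: anari log-concave characterization}. The only cosmetic difference is that the paper handles the indecomposability hypothesis by first assuming $c_S>0$ for all $|S|=d$ (so the underlying graph is complete) and then passing to the general case by a pointwise limit, whereas you argue directly that the zero pattern---and hence the connectivity graph---is preserved under Hadamard $\alpha$-powers for $\alpha>0$.
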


\begin{proof}

We prove strong log-concavity of $f_\alpha$ by verifying the hypotheses of Theorem \ref{thm: anari log-concave characterization}. Assume $c_S > 0$ for all $\abs{S} = d$. The general case is then obtained by taking point-wise limits. Let $S \subseteq [n]$ be such that $\abs{S} = d-2$. Then notice that 

\[ \nabla^2 (\partial^Tf ) = \big  [ c_{S\cup \{ i,j\} } \big ]_{ij} \hspace{0.5cm} \text{and} \hspace{0.5cm} \nabla^2 (\partial^Tf _\alpha) = \big [ c_{S\cup \{ i,j\}}^\alpha \big ]_{ij} .\]

Since $f$ is SLC, Lemma \ref{lemma: log conc iff one pos eval} implies $ \nabla^2 (\partial^Tf ) $ has at most one positive eigenvalue. So Lemma \ref{lemma: one pos eval implies exp has one pos eval} implies that $\nabla^2 (\partial^Tf _\alpha) $ also has at most one positive eigenvalue, which proves the strong log-concavity of $f_\alpha$ by applying Lemma \ref{lemma: log conc iff one pos eval} in the other direction.
\end{proof}

It is a reasonable question to ask whether or not the preceding theorem or Theorem \ref{thm: weighted exponentiation} can be extended to the regime $\alpha > 1$. This is in fact not the case. First we show that if either holds for \emph{any} $\alpha > 1$ then it must hold for \emph{all} $\alpha >1$, then we give a counterexample showing that it fails for $\alpha=2$ for both cases. Note carefully that the conclusion is therefore stronger than a mere existence claim. In fact we may conclude: Theorems \ref{thm: weighted exponentiation} and \ref{thm: homog closur under exp} both fail for \emph{all} $\alpha >1$.

To make the following statement succinct let us define $\mathcal{A}$ to be the set of all symmetric real-valued matrices with non-negative entries and at most one positive eigenvalue.

\begin{lemma}
 Suppose there is a $\alpha^* > 1$ such that: if $A \in \mathcal{A}$ then $A^{ \circ \alpha^*} \in \mathcal{A}$. Then for any $\alpha > 1$:  if $A \in \mathcal{A}$ then $A^{ \circ \alpha} \in \mathcal{A}$
\end{lemma}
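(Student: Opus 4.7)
The plan is to bootstrap from a single exponent $\alpha^* > 1$ to all $\alpha > 1$ by iterating the hypothesis and then filling in intermediate values with the already-proven subunital case.

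First I would iterate. By the assumption, the map $A \mapsto A^{\circ \alpha^*}$ preserves $\mathcal{A}$. Composing this with itself $n$ times and using the trivial identity $(A^{\circ \beta})^{\circ \gamma} = A^{\circ \beta\gamma}$ for non-negative matrices, a straightforward induction on $n$ shows $A^{\circ (\alpha^*)^n} \in \mathcal{A}$ for every $n \in \mathbb{N}$. Since $\alpha^* > 1$, the sequence $(\alpha^*)^n$ is unbounded.

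Next, given an arbitrary $\alpha > 1$, I pick $n$ large enough that $(\alpha^*)^n \geq \alpha$ and set $\beta = \alpha / (\alpha^*)^n \in (0,1]$. Starting from $A \in \mathcal{A}$, the previous step gives $B := A^{\circ (\alpha^*)^n} \in \mathcal{A}$. Now I apply Lemma \ref{lemma: one pos eval implies exp has one pos eval}, which is precisely the statement that raising a matrix in $\mathcal{A}$ to a Hadamard power in $[0,1]$ keeps it in $\mathcal{A}$, to $B$ with exponent $\beta$. This yields
\[
A^{\circ \alpha} = A^{\circ \beta \cdot (\alpha^*)^n} = B^{\circ \beta} \in \mathcal{A},
\]
which is exactly what is claimed.

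There is no real obstacle here; the whole argument rides on the semigroup property of Hadamard powers and the fact that we can always sandwich an arbitrary $\alpha > 1$ between the integer orbit $\{(\alpha^*)^n\}$ and a subunital correction. The only slightly subtle point, which I would flag but not belabor, is that combining the $[0,1]$-closure (Lemma \ref{lemma: one pos eval implies exp has one pos eval}) with the hypothesized closure at the single value $\alpha^*$ automatically yields closure on the dense set $\{(\alpha^*)^n \beta : n \in \mathbb{N},\ \beta \in [0,1]\} \supseteq [0,\infty)$, so in fact one gets closure on \emph{all} non-negative exponents at once. The counterexample at $\alpha = 2$ that the authors exhibit afterwards therefore simultaneously rules out every $\alpha > 1$, which is exactly the sharpening the lemma is set up to enable.
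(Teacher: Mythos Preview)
Your proof is correct and follows the same strategy as the paper: iterate the hypothesis to reach arbitrarily large exponents, then invoke Lemma~\ref{lemma: one pos eval implies exp has one pos eval} to interpolate down to any given $\alpha > 1$. The paper records the iterated exponent as $m\alpha^*$ rather than your $(\alpha^*)^n$; your version is the one that literally results from composing $A \mapsto A^{\circ \alpha^*}$ with itself, though either unbounded sequence makes the argument work.
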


\begin{proof}
Let $A \in \mathcal{A}$. We may repeatedly apply the hypothesis to conclude that $A^{ \circ m \alpha^*} \in \mathcal{A}$ for \emph{any} $m \in \mathbb{N}$. So in particular we may pick $m$ sufficiently big that $m \alpha^* > \alpha$. But now $\delta = \alpha / m \alpha^* <1$ so we may apply Lemma \ref{lemma: one pos eval implies exp has one pos eval} to conclude that $ A^{\circ \alpha} = A^{ \circ \delta m \alpha^*} \in \mathcal{A}$ .
\end{proof}

\begin{cor}
Suppose there is a $\alpha^* >1$ such that: if $f = \sum_{ \abs{S} = d} c_S  z^S \in \mathbb{R}_+[z_1, \ldots , z_n]$ is SLC, then  $f_{\alpha^*} = \sum_{ \abs{S} = d} c_S^{\alpha^*} z^S $ is SLC. Then for any $\alpha > 1$: if $f \in \mathbb{R}_+[z_1, \ldots , z_n]$ is SLC then $f_\alpha$ is SLC.
\end{cor}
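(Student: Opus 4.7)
The plan is to mirror the argument of the preceding Lemma, but at the level of polynomials instead of matrices. The two ingredients are (a) iteration of the hypothetical exponent $\alpha^*$, and (b) the already-established closure under $\delta$-exponentiation for $\delta \in [0,1]$, namely Theorem~\ref{thm: homog closur under exp}.

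First, I would observe that the coefficient-exponentiation operation $f \mapsto f_\beta := \sum_{|S|=d} c_S^\beta z^S$ composes cleanly: $(f_\beta)_\gamma = f_{\beta\gamma}$, since $(c_S^\beta)^\gamma = c_S^{\beta\gamma}$. Moreover $f_\beta$ is still a $d$-homogeneous polynomial in $\mathbb{R}_+[z_1,\ldots,z_n]$, so the hypothesis of the corollary applies to it as well. Starting from an SLC polynomial $f$, one application of the hypothesis gives that $f_{\alpha^*}$ is SLC; applying the hypothesis again to $f_{\alpha^*}$ yields $f_{(\alpha^*)^2}$ SLC; and by induction $f_{(\alpha^*)^m}$ is SLC for every $m \in \mathbb{N}$.

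Next, given any target exponent $\alpha > 1$, I would pick $m$ large enough that $(\alpha^*)^m \geq \alpha$, which is possible because $\alpha^* > 1$. Set $\delta := \alpha/(\alpha^*)^m \in (0,1]$. Applying Theorem~\ref{thm: homog closur under exp} to the SLC polynomial $f_{(\alpha^*)^m}$ with exponent $\delta$ shows that $(f_{(\alpha^*)^m})_\delta$ is SLC, and by the composability identity this polynomial equals $f_{(\alpha^*)^m \cdot \delta} = f_\alpha$. This establishes the corollary.

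I do not anticipate any substantive obstacle here: the argument is a direct polynomial analogue of the preceding Lemma on matrices, and the only nontrivial input, closure under $\delta$-exponentiation for $\delta \leq 1$, has already been proved in the homogeneous setting. The two sanity checks to keep in mind are that the operation composes correctly on coefficients, and that each intermediate polynomial $f_{(\alpha^*)^m}$ still lies in the homogeneous degree-$d$ class where both the hypothesis and Theorem~\ref{thm: homog closur under exp} apply; both are immediate from the definitions.
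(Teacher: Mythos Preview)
Your proposal is correct and follows essentially the same approach the paper intends: the paper states this corollary without a separate proof, relying implicitly on the argument of the preceding lemma, and your proof is exactly the polynomial-level analogue of that lemma's proof (iterate the hypothetical $\alpha^*$-exponentiation, then scale down using closure under $\delta$-exponentiation for $\delta\le 1$). If anything, your use of $(\alpha^*)^m$ is the cleaner formulation of ``repeatedly apply the hypothesis'' than the paper's $m\alpha^*$, though either growing sequence suffices.
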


\begin{cor}
Suppose there is a $\alpha^* >1$ such that: if $f = \sum_{ \abs{S} \leq d} c_S  z^S \in \mathbb{R}_+[z_1, \ldots , z_n]$ is SLC, then  $\mathcal{H}_{k, \alpha^*}f $ is SLC for $k \leq d$. Then for any $\alpha > 1$: if $f \in \mathbb{R}_+[z_1, \ldots , z_n]$ is SLC then $\mathcal{H}_{k, \alpha}f $ is SLC.
\end{cor}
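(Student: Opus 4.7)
The plan is to parallel the preceding matrix-level lemma and its first (homogeneous-case) corollary. The key identity, extracted from the proof of Theorem~\ref{thm: weighted exponentiation}, is that for any SLC $f$ and any $S \subseteq [n]$, $m \in \mathbb{N}$ with $|S| + m = k-2$,
\[
\nabla^2\bigl(\partial^S_z \partial^m_y \mathcal{H}_{k, \alpha} f\bigr) \;=\; \bigl(\nabla^2\, \partial^S_z \partial^m_y \mathcal{H}_k f\bigr)^{\circ \alpha}.
\]
By Theorem~\ref{thm,: weighted homogenization} the matrix $A := \nabla^2\, \partial^S_z \partial^m_y \mathcal{H}_k f$ lies in $\mathcal{A}$, via Lemma~\ref{lemma: log conc iff one pos eval}. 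Hence $\mathcal{H}_{k, \alpha} f$ is SLC if and only if $A^{\circ \alpha} \in \mathcal{A}$ for every such $A$ arising from every SLC $f$, and the corollary's hypothesis already gives this at $\alpha = \alpha^*$.

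A key reduction is that the present hypothesis, restricted to homogeneous SLC inputs, implies the hypothesis of the preceding (homogeneous-case) corollary. Indeed, for a homogeneous SLC polynomial $g$ of degree $d$ and $k = d$, the operator $\mathcal{H}_{d, \alpha^*}$ collapses (since every term with $|S| < d$ vanishes) to pure coefficient-exponentiation $g_{\alpha^*}$, which formally lives in $n+1$ variables but does not depend on the auxiliary variable; its SLC property is therefore equivalent to SLC of $g_{\alpha^*}$ in $n$ variables. Combining the homogeneous-case corollary (for $\beta > 1$) with Theorem~\ref{thm: homog closur under exp} (for $\beta \le 1$), we conclude that $h_\beta$ is SLC for every multiaffine homogeneous SLC polynomial $h$ and every $\beta > 0$.

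To finish, set $g := \mathcal{H}_{k, \alpha^*} f$, which is homogeneous SLC of degree $k$ by hypothesis, and polarize: by Theorem~\ref{thm: closure under polarization}, $\Pi g$ is multiaffine homogeneous SLC. The previous step gives that $(\Pi g)_\beta$ is SLC for every $\beta > 0$, so by the standard identity $\nabla^2 \partial^T (\Pi g)_\beta = (\nabla^2 \partial^T \Pi g)^{\circ \beta}$ for multiaffine homogeneous polynomials, all of its relevant Hessian derivatives lie in $\mathcal{A}$. Tracking Hessians across polarization and invoking Sylvester's law of inertia (positive-diagonal conjugations preserve $\mathcal{A}$), one identifies a suitable such matrix with $A^{\circ \alpha^* \beta}$ up to diagonal conjugation, so choosing $\beta := \alpha/\alpha^*$ for the desired $\alpha > 1$ yields $A^{\circ \alpha} \in \mathcal{A}$, completing the argument via the Hessian identity above. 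The main obstacle will be this final bookkeeping: carefully tracing the Hessian of the polarized, exponentiated polynomial so that the diagonal-conjugation discrepancies introduced by polarization and the factorials in $\mathcal{H}_{k,\alpha}$ do not disrupt membership in $\mathcal{A}$.
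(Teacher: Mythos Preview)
Your reduction of the present hypothesis to that of the preceding homogeneous-case corollary is correct, and from it you validly conclude (via that corollary together with Theorem~\ref{thm: homog closur under exp}) that $h_\beta$ is SLC for every multiaffine homogeneous SLC $h$ and every $\beta>0$. Applying this to $h=\Pi g$ with $g=\mathcal{H}_{k,\alpha^*}f$ is also sound.

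The gap is in the final step, and it is not routine bookkeeping but a genuine obstruction. You assert that a suitable Hessian derivative of $(\Pi g)_\beta$ agrees with $A^{\circ\alpha^*\beta}$ up to positive-diagonal conjugation; this is false. Concretely, set $y_1=\cdots=y_k=y$ in $(\Pi g)_\beta$ (an affine specialization, hence SLC-preserving) to get $h(z,y)=\sum_{|S|\le k} b_{|S|}\,c_S^{\alpha}\,z^S y^{k-|S|}$ with $b_j=\binom{k}{j}^{1-\beta}(k-j)!^{-\beta}$ and $\alpha=\alpha^*\beta$. For $|S_0|=s$ and $m=k-s-2$, the Hessian of $\partial^{S_0}_z\partial^m_y h$ has $(y,y)$, $(i,y)$, $(i,j)$ entries $b_s(m+2)!\,c_{S_0}^\alpha$, $b_{s+1}(m+1)!\,c_{S_0\cup i}^\alpha$, $b_{s+2}\,m!\,c_{S_0\cup\{i,j\}}^\alpha$. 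For this to equal $D\,A^{\circ\alpha}\,D$ with a positive diagonal $D$ one needs
\[
b_s\,b_{s+2}\,m!(m+2)! \;=\; b_{s+1}^2\,((m+1)!)^2,
\]
which for $\beta\ne 1$ reduces to $(s+2)!\,s!=((s+1)!)^2$, i.e.\ $s+2=s+1$. The same mismatch persists if you skip the specialization and work with the polarized Hessian directly: the factorial weights from $\mathcal{H}_k$ and the binomial weights from polarization simply do not combine into a diagonal congruence, so Sylvester's law of inertia cannot carry membership in $\mathcal{A}$ from the Hessians of $(\Pi g)_\beta$ over to $A^{\circ\alpha}$. In short, exponentiating after polarizing is not the same as polarizing after exponentiating, and that discrepancy is exactly what blocks your route.

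The paper states this corollary without proof, immediately after the matrix-level lemma; the intended argument is to iterate directly at the level of the Hessian matrices (as in that lemma's proof: pass from $A$ to $A^{\circ m\alpha^*}$ and then contract via Lemma~\ref{lemma: one pos eval implies exp has one pos eval} with exponent $\alpha/(m\alpha^*)<1$), rather than to detour through polarization of $\mathcal{H}_{k,\alpha^*}f$.
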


For the counterexample, consider $f = 10wx + 3wy + 2wz + 2xy + 6xz$. The Hessian of $f$ equals 
\[ \nabla^2 f =
\begin{bmatrix}
    0 & 10 & 3 & 2\\
    10 & 0 & 2 & 6\\
    3 & 2 & 0 & 1\\
    2 & 6 & 1 & 0
\end{bmatrix}.
\]

One can numerically check that $\nabla^2 f$ has eigenvalues $\{1  13.6, -10.9, -2.2, -0.4\}$ to one decimal place, so $f$ is log-concave by Lemma \ref{lemma: log conc iff one pos eval} and hence SLC since it is of degree $2$. However, $f_2$ has Hessian equal to 

\[ \nabla^2 f_2 =
\begin{bmatrix}
    0 & 100 & 9 & 4\\
    100 & 0 & 4 & 36\\
    9 & 4 & 0 & 1\\
    4 & 36 & 1 & 0
\end{bmatrix}.
\]

which has eigenvalues $\{108.4, -105.2, -4.0, 0.8\}$ to one decimal place. So $f_2$ is not SLC.

The same example can be used to build a counterexample to Theorem \ref{thm: weighted exponentiation} in the regime $\alpha > 1$. Indeed setting $w=1$ in $f$ we obtain an SLC polynomial $g =  10x + 3y + 2z + 2xy + 6xz$ such that $\mathcal{H}_{2, 2} g = f_2 $ is not SLC.
\subsection{Closure under polarization}

We begin by observing an algebraic identity that allows one to push derivatives inside the polarization operation $\Pi$. 

\begin{lemma}
Let $f = \sum_{ \abs{S} \leq d} c_S  z^S y^{d-\abs{S}} \in \mathbb{R}[z_1, \ldots , z_n, y]$. Then $\partial_{y_{i}} \Pi (f) = \frac{1}{d} \Pi (\partial _y f)$  and $\partial_{z_{j}} \Pi (f) =  \Pi (\partial _{z_j} f)$  for $i \in [d]$ and $j \in [n]$.
\end{lemma}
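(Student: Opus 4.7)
The plan is to verify both identities by a direct monomial-by-monomial computation, since $\Pi$, $\partial_{y_i}$, and $\partial_{z_j}$ are all $\mathbb{R}$-linear. It suffices to trace a single term $c_S z^S y^{d-|S|}$ through both sides of each identity and check they land in the same place.

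For the $\partial_{z_j}$ identity, the key observation is that $\Pi$ transforms only the $y$-factor of each monomial (sending $y^m$ to $\binom{d}{m}^{-1} e_m(y_1,\ldots,y_d)$) while leaving $z^S$ untouched. Combined with multiaffinity of $z^S$, which gives $\partial_{z_j} z^S = z^{S\setminus j}$ when $j \in S$ and $0$ otherwise, one gets
\[
\partial_{z_j}\Pi(f) \;=\; \sum_{S \ni j} c_S\, z^{S \setminus j}\binom{d}{|S|}^{-1} e_{d-|S|}(y_1,\ldots,y_d),
\]
and reindexing by $T = S \setminus j$ produces exactly $\Pi(\partial_{z_j} f)$ term for term. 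So this direction is essentially formal.

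For the $\partial_{y_i}$ identity, the algebraic input is the standard identity $\partial_{y_i} e_k(y_1,\ldots,y_d) = e_{k-1}(y_1,\ldots,\hat{y}_i,\ldots,y_d)$. Applying this to the definition of $\Pi f$ gives
\[
\partial_{y_i}\Pi(f) \;=\; \sum_{|S| \le d-1} c_S\, z^S \binom{d}{|S|}^{-1} e_{d-|S|-1}(y_{[d]\setminus i}).
\]
On the other side, $\partial_y f = \sum_{|S|\le d-1} c_S(d-|S|)\,z^S y^{d-|S|-1}$ is $(d-1)$-homogeneous, and its polarization uses $d-1$ copies of the $y$-variable, namely
\[
\Pi(\partial_y f) \;=\; \sum_{|S|\le d-1} c_S(d-|S|)\,z^S \binom{d-1}{|S|}^{-1} e_{d-|S|-1}(y_{[d] \setminus i}),
\]
where by symmetry we are free to label the $d-1$ polarization variables as $y_{[d]\setminus i}$. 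The scalar factors reconcile via the binomial identity $\binom{d-1}{|S|} = \tfrac{d-|S|}{d}\binom{d}{|S|}$, which rearranges to $(d-|S|)\binom{d-1}{|S|}^{-1} = d\binom{d}{|S|}^{-1}$, yielding exactly $\partial_{y_i}\Pi(f) = \tfrac{1}{d}\Pi(\partial_y f)$.

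The main obstacle, such as it is, is purely bookkeeping: one must be careful that $\Pi$ is being applied with the appropriate number of $y$-variables on each side (the original $d$ for the $z_j$ identity; the reduced $d-1$ for the $y_i$ identity, matching the lowered degree of $\partial_y f$), and then invoke the symmetry of the resulting polynomials to identify variables. Modulo this convention-tracking, both identities reduce to a single binomial coefficient manipulation.
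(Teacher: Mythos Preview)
Your proposal is correct and follows essentially the same route as the paper: a direct term-by-term computation using $\partial_{y_i} e_k(y_1,\ldots,y_d) = e_{k-1}(y_{[d]\setminus i})$ together with the binomial identity $\binom{d}{|S|}^{-1} = \tfrac{1}{d}(d-|S|)\binom{d-1}{|S|}^{-1}$, with the $z_j$ case handled as the easier commutation observation. The paper reduces to $i=d$ by symmetry and then performs exactly this calculation, likewise dismissing the $\partial_{z_j}$ identity as ``a similar, but simpler, calculation''; your explicit flag about tracking the number of polarization variables is, if anything, slightly more careful than the paper on that point.
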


\begin{proof}
Since $\Pi (f) \in \mathbb{R}[z_1, \ldots , z_n, y_1, \ldots , y_d]$ is symmetric in $y_1, \ldots, y_d$, to prove the $y_i$ part of the claim it suffices to prove the claim for $\partial_{y_{d} }\Pi (f)$ only. Recall that the polarization of $f$ is,

\[ \Pi (f)(z_1, \ldots , z_n , y_1, \ldots , y_d ) = \sum_{\abs{S} \leq d}  c_S  z^S  {d \choose \abs{S}}^{-1}  e_{d - \abs{S}}(y_1, \ldots , y_d) \]

where $e_k(y_1, \ldots , y_d)$ is the $k$th elementary symmetric polynomial in $d$ variables. We begin computing directly, 

\begin{align*}
    \partial_{y_d} \Pi (f) &= \partial_{y_d} \bigg ( \sum_{ \abs{S} \leq d}  c_S  z^S  {d \choose \abs{S}}^{-1}  e_{d - \abs{S}}(y_1, \ldots , y_d) \bigg ) \\
    &=  \sum_{\abs{S} \leq d}  c_S  z^S  {d \choose \abs{S}}^{-1}  \partial_{y_d} \big ( e_{d - \abs{S}}(y_1, \ldots , y_d) \big ) \\ 
    &=  \sum_{\abs{S} < d}  c_S  z^S  {d \choose \abs{S}}^{-1}  e_{d - \abs{S} - 1}(y_1, \ldots , y_{d-1}) \big ) \\ 
    &= \sum_{\abs{S} < d}  c_S  z^S \bigg ( \frac{1}{d} (d- \abs{S}) {d-1 \choose \abs{S}}^{-1} \bigg ) e_{d - \abs{S} - 1}(y_1, \ldots , y_{d-1}) \\
    &= \frac{1}{d} \sum_{\abs{S} < d} (d- \abs{S})  c_S  z^S  {d-1 \choose \abs{S}}^{-1} e_{d - \abs{S} - 1}(y_1, \ldots , y_{d-1}) \\
    &= \frac{1}{d} \Pi \bigg ( \sum_{\abs{S} < d} (d- \abs{S})  c_S  z^S y^{d-1 - \abs{S}} \bigg ) \\
    &= \frac{1}{d} \Pi \bigg ( \partial_y \bigg \{ \sum_{ \abs{S} \leq d}  c_S  z^S y^{d- \abs{S}}  \bigg \} \bigg ) \\
    &= \frac{1}{d} \Pi ( \partial _y f ) ,
\end{align*}

where we used the elementary relation 

\[ { d \choose \abs{S} } ^{-1} = \frac{1}{d} (d - \abs{S}) { d-1 \choose \abs{S }}^{-1}. \]

The $\partial_{z_{j}} \Pi (f)$ part of the  claim follows by a similar, but simpler, calculation. 
\end{proof}

\begin{cor}\label{corollary: differentiation under polarization}
Let $f = \sum_{ \abs{S} \leq d} c_S  z^S y^{d-\abs{S}} \in \mathbb{R}[z_1, \ldots , z_n, y]$. Then $\partial_z ^\alpha \partial^\beta_y \Pi (f) = c \Pi (\partial_z ^\alpha \partial^{\abs{\beta}}_y f)$ for any $\alpha, \beta \in \mathbb{N}^n$ where $c > 0$ is some constant.
\end{cor}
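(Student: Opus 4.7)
The plan is to proceed by induction on the total order $|\alpha|+|\beta|$, peeling off one derivative at a time and applying the previous lemma. The base case $|\alpha|+|\beta|=0$ holds trivially with $c=1$. Before running the induction, I would note that the previous lemma extends, with an identical proof, to any polynomial $g = \sum_{|S| \leq m} c_S z^S y^{m-|S|}$ of $y$-degree $m$: one has $\partial_{y_i} \Pi(g) = \frac{1}{m}\Pi(\partial_y g)$ and $\partial_{z_j}\Pi(g) = \Pi(\partial_{z_j} g)$, because the key combinatorial identity $\binom{m}{|S|}^{-1} = \frac{m-|S|}{m}\binom{m-1}{|S|}^{-1}$ used in the previous lemma does not depend on $m$ being equal to the original value $d$.

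For the inductive step, I would split into two cases depending on whether I peel off a $z$- or a $y$-derivative. If $\alpha_j \geq 1$ for some $j$, the inductive hypothesis applied to $\partial_z^{\alpha-e_j} \partial_y^\beta \Pi(f) = c'\, \Pi(\partial_z^{\alpha - e_j} \partial_y^{|\beta|} f)$ together with the identity $\partial_{z_j} \Pi = \Pi \partial_{z_j}$ immediately yields the desired equality with the same constant $c = c'$. If instead $\beta_i \geq 1$ for some $i$, the inductive hypothesis gives $\partial_z^\alpha \partial_y^{\beta-e_i} \Pi(f) = c''\, \Pi(\partial_z^\alpha \partial_y^{|\beta|-1} f)$. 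The polynomial $\partial_z^\alpha \partial_y^{|\beta|-1} f$ has $y$-degree $d - |\beta| + 1$, so the generalized previous lemma applied to it produces $\partial_{y_i} \Pi(\partial_z^\alpha \partial_y^{|\beta|-1} f) = \frac{1}{d-|\beta|+1} \Pi(\partial_z^\alpha \partial_y^{|\beta|} f)$, so that the resulting constant is $c = c''/(d-|\beta|+1)$, still strictly positive.

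Unrolling the induction gives the closed form $c = \prod_{k=0}^{|\beta|-1} \frac{1}{d-k} = \frac{(d-|\beta|)!}{d!}$, which is strictly positive whenever $|\beta| \leq d$; the remaining case $|\beta| > d$ is trivial since then both sides vanish, and similarly any $\beta$ with some $\beta_i \geq 2$ kills the left-hand side because $\Pi(f)$ is multilinear in the $y_i$. The only real obstacle is the bookkeeping: one must track how the $y$-degree of the polynomial inside $\Pi$ drops by one with each $y$-differentiation, which is precisely why the generalized form of the previous lemma (for arbitrary $m$, not only $m = d$) is needed. Beyond this degree-tracking, there is no substantive mathematical difficulty.
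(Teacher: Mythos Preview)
Your approach is exactly what the paper intends: the corollary is stated without a separate proof, as an immediate consequence of iterating the preceding lemma, and your induction on $|\alpha|+|\beta|$ makes this explicit. One small bookkeeping correction: the $y$-degree of $\partial_z^{\alpha}\partial_y^{|\beta|-1} f$ is $d-|\alpha|-|\beta|+1$, not $d-|\beta|+1$, since each $\partial_{z_j}$ also drops the degree of this homogeneous polynomial by one. Your factor $\tfrac{1}{d-|\beta|+1}$ is nevertheless the right one, because what governs it is the number of auxiliary $y$-variables currently carried by the polarization (namely $d$ minus the number of $y$-derivatives already taken), not the intrinsic $y$-degree of the polynomial inside $\Pi$; this is also why the $\partial_{z_j}$ step introduces no extra constant even though it lowers the degree. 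With that clarification your closed form $c=(d-|\beta|)!/d!$ is correct and independent of the order in which derivatives are peeled off.
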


\begin{lemma}\label{lemma: f indecomposable if and only if Pi(f) is}
$f = \sum_{ \abs{S} \leq d} c_S  z^S y^{d-\abs{S}}$ is indecomposable if and only if $\Pi (f) $ is indecomposable. 
\end{lemma}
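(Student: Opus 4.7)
The plan is to characterize indecomposability of a polynomial $h \in \mathbb{R}[w_1,\ldots,w_N]$ by the connectivity of a variable interaction graph $G(h)$: the vertex set is $\{w_1,\ldots,w_N\}$ and $\{w_i,w_j\}$ is an edge whenever some monomial of $h$ with non-zero coefficient involves both $w_i$ and $w_j$. The connected components of $G(h)$ are exactly the variable blocks of the finest additive splitting of $h$, so the lemma reduces to proving that $G(f)$ is connected iff $G(\Pi f)$ is connected.

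Reading the edges off $\Pi f = \sum_{|S|\leq d} c_S \binom{d}{|S|}^{-1} z^S e_{d-|S|}(y_1,\ldots,y_d)$: in $G(f)$, $\{z_i,z_j\}$ is an edge iff some non-zero $c_S$ contains $\{i,j\}$, and $\{z_i,y\}$ is an edge iff some non-zero $c_S$ has $i\in S$ with $|S|<d$; in $G(\Pi f)$ the $\{z_i,z_j\}$-condition is identical, $\{z_i,y_k\}$ is an edge iff some non-zero $c_S$ has $i\in S$ with $|S|<d$ (independently of $k$), and $\{y_k,y_\ell\}$ for $k\neq\ell$ is an edge iff some non-zero $c_S$ has $|S|\leq d-2$ (so that $e_{d-|S|}$ has a monomial containing both $y_k$ and $y_\ell$). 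The surjective collapse $\varphi: V(G(\Pi f)) \to V(G(f))$ defined by $\varphi(z_i)=z_i$, $\varphi(y_k)=y$ is then a graph homomorphism (ignoring self-loops at $y$ arising from $\{y_k,y_\ell\}$-edges), so connectivity of $G(\Pi f)$ immediately forces connectivity of $G(f)$, settling one direction.

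For the converse, assume $G(f)$ is connected. Given any $u',v'\in V(G(\Pi f))$, pick a simple path in $G(f)$ from $\varphi(u')$ to $\varphi(v')$ and lift it by replacing each occurrence of $y$ along it by a single arbitrarily chosen $y_k$; every edge lifts since the $\{z_i,y\}$- and $\{z_i,y_k\}$-conditions coincide. The only case this misses is $u'=y_k$, $v'=y_\ell$ with $k\neq\ell$, where the projected path is the single vertex $y$. Here I would split: if some non-zero $c_S$ has $|S|\leq d-2$, then $\{y_k,y_\ell\}$ is itself an edge of $G(\Pi f)$; otherwise every non-zero $c_S$ has $|S|\in\{d-1,d\}$, and (outside the trivial case $n=0$) connectivity of $G(f)$ forces some $S$ with $|S|=d-1$ and $c_S\neq 0$, so any $i\in S$ gives both $\{z_i,y_k\}$ and $\{z_i,y_\ell\}$ in $E(G(\Pi f))$, connecting $y_k$ to $y_\ell$ through $z_i$.

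The main obstacle is this last case: when no direct $\{y_k,y_\ell\}$-edge is available in $G(\Pi f)$, one must route through a common $z$-neighbour, and it is precisely here that full connectivity of $G(f)$ (rather than a weaker notion) is essential to guarantee a usable routing vertex. The remainder of the argument is routine bookkeeping on the edge descriptions of the two graphs.
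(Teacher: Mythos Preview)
Your proof is correct and follows essentially the same route as the paper: both exploit that the $z$-variable interactions are identical in $f$ and $\Pi f$, and that all the $y_k$ are tied together through any monomial $z^S e_{d-|S|}(y_1,\dots,y_d)$ with $|S|<d$. Your explicit graph-homomorphism/path-lifting formulation is the graph-theoretic rephrasing of the paper's direct decomposition argument, and your case split on $|S|\le d-2$ versus $|S|=d-1$ for connecting $y_k$ to $y_\ell$ is in fact more careful than the paper's own wording at that step.
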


\begin{proof}
$f = 0$ if and only if $\Pi (f) = 0$ and if this is the case then we are done. So suppose $f$ is not identically $0$. Similarly if $c_S = 0 $ for all $\abs{S} < d$ we are done since $f = \Pi(f)$.  Let $\mathcal{Z} = \{ z_1, \ldots , z_n \}$ and $\mathcal{Y} = \{ y_1, \ldots , y_d \}$. Suppose that $\Pi (f) $ is  \emph{not} indecomposable. Then there exists partitions $\mathcal{Z}_1 \cup \mathcal{Z}_2 = \mathcal{Z}$ and $\mathcal{Y}_1 \cup \mathcal{Y}_2 = \mathcal{Y}$ such that $\Pi (f) = g_1 + g_1$ where neither $g_1$ nor $g_2$ are identically $0$ and $g_1$ depends only on the variables $\mathcal{Z}_1 \cup \mathcal{Y}_1$ and $g_2$ depends only on the variables $\mathcal{Z}_2 \cup \mathcal{Y}_2$. However we must have $\{ \mathcal{Y}_1, \mathcal{Y}_2 \} = \{ \varnothing, \mathcal{Y} \}$ since  we may pick an $S$ with $\abs{S} < d$ such that $c_S \neq 0$, and so $\Pi (f)$ contains the sum of monomials $c_S  z^S  {d \choose \abs{S}}^{-1}  e_{d - \abs{S}}(y_1, \ldots , y_d)$, which includes some terms containing $y_i$ and $y_j$ for any distinct $i,j \in [d]$. This observation is sufficient since all coefficients are non-negative and hence do not cancel each other out. Without loss of generality suppose $\mathcal{Y}_1 = \mathcal{Y}$ and $\mathcal{Y}_1 = \varnothing $. However, upon setting $y = y_1 = \ldots = y_d$ in $g_1$ we discover that we may write $f = g_1 + g_2$ where $g_1$ depends only on $\mathcal{Z}_1 \cup \{ y \}$ and $g_2$ depends only on $\mathcal{Z}_2$. In other words, $f$ is not indecomposable. 

Conversely, suppose $f$ is not indecomposable. Then we may write  $f = g_1 + g_2$ where $g_1$ depends only on $\mathcal{Z}_1 \cup \{ y \}$ and $g_2$ depends only on $\mathcal{Z}_2$ were $\mathcal{Z} = \mathcal{Z}_1 \cup \mathcal{Z} _2 $ is a partition. But, since polarization is a linear operator, this implies that $\Pi (f) $ may be decomposed into the sum of two non-zero polynomials, one depending only on $\mathcal{Z}_1 \cup \mathcal{Y}$ and the other on $\mathcal{Z}_2$. So $\Pi (f) $ is not indecomposable either. 
\end{proof}

We recall a fact from linear algebra that we shall use in the coming proof.

\begin{lemma}\label{lemma: PAP^top eigenvalue lemma}
Suppose $A \in \mathbb{R}^{n \times n}$ is symmetric and let $R \in \mathbb{R}^{m \times n}$. If $A$ has at most one positive eigenvalue, then $RAR^\top$ has at most one positive eigenvalue.
\end{lemma}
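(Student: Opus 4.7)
The plan is to argue by the subspace characterization of the number of positive eigenvalues, which avoids any case analysis about whether $R$ has full rank. Let me write $n_+(M)$ for the number of positive eigenvalues of a symmetric matrix $M$. The standard fact I would use is that $n_+(M)$ equals the maximum dimension of a subspace on which the quadratic form $x \mapsto x^\top M x$ is positive definite (this follows from the spectral theorem: the eigenspace of positive eigenvalues realizes the maximum, and any larger subspace must meet the non-positive eigenspace non-trivially).

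Given this, the claim reduces to showing $n_+(RAR^\top) \le n_+(A)$. Set $k = n_+(RAR^\top)$ and pick a $k$-dimensional subspace $W \subseteq \mathbb{R}^m$ on which $RAR^\top$ is positive definite. The key step is to push $W$ through $R^\top$ and argue that $R^\top$ is injective on $W$: if $w \in W$ were nonzero with $R^\top w = 0$, then $w^\top R A R^\top w = 0$, contradicting positive definiteness of $RAR^\top$ on $W$. Hence $R^\top W \subseteq \mathbb{R}^n$ is also $k$-dimensional, and for any nonzero $v = R^\top w$ with $w \in W$ we have $v^\top A v = w^\top R A R^\top w > 0$. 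Thus $A$ is positive definite on a $k$-dimensional subspace, so $n_+(A) \ge k$. Applying the hypothesis $n_+(A) \le 1$ yields $n_+(RAR^\top) \le 1$, as required.

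I do not expect any real obstacle here; the lemma is a purely linear-algebraic statement and the subspace argument is short. The only thing to be careful about is invoking the subspace characterization of $n_+$ correctly, and ensuring that the injectivity step is justified (which it is, directly from the definiteness assumption on $W$). One could alternatively give a decomposition proof — write $A = \lambda v v^\top - P$ with $\lambda \ge 0$ and $P \succeq 0$, so that $R A R^\top = \lambda (Rv)(Rv)^\top - R P R^\top$ is a rank-one positive semidefinite matrix minus a positive semidefinite matrix, and then appeal to Weyl's inequalities to conclude at most one positive eigenvalue — but the subspace argument is cleaner and avoids invoking Weyl.
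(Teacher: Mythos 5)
Your proof is correct: the subspace characterization of the number of positive eigenvalues, together with the observation that $R^\top$ is injective on any subspace where $RAR^\top$ is positive definite, gives a clean and complete argument. The paper itself offers no proof to compare against—it states this lemma as a recalled linear-algebra fact and moves on—so your write-up actually supplies the missing justification; the alternative route you sketch via $A = \lambda vv^\top - P$ and Weyl's inequalities would also work, but the subspace argument is the tighter of the two.
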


\begin{proof}[Proof of Theorem \ref{thm: closure under polarization}]
Suppose $\Pi(f)$ is SLC. Then by definition, upon setting $y_i= y$ for all $i$ we obtain $f$. Therefore $f$ is SLC since Lemma \ref{lemma: SLC closed under affine transformations} states that the SLC property is invariant under affine transformations of the coordinates. Conversely, suppose $f$ is SLC. We shall check that $\Pi (f)$ satisfies the hypotheses of Theorem \ref{thm: anari log-concave characterization}. Note that $d = \deg (f) = \deg ( \Pi(f) ) $. Take any $\alpha , \beta \in \mathbb{N}^n$ such that $\abs{\alpha} + \abs{\beta} \leq d-2$. Then by Corollary \ref{corollary: differentiation under polarization}, $\partial_z^\alpha \partial_y^\beta \Pi (f) = c \Pi (\partial_z ^\alpha \partial^{\abs{\beta}}_y f)$ for some constant $c$. Since $f$ is SLC, so is $\partial_z ^\alpha \partial^{\abs{\beta}}_y f$ and so by Theorem \ref{thm: support is set of bases of a matroid} its support is M-convex (see Theorem \ref{thm: support is set of bases of a matroid}  for definintion), which implies indecomposability. Hence by Lemma \ref{lemma: f indecomposable if and only if Pi(f) is} $\partial_z^\alpha \partial_y^\beta \Pi (f)$ is indecomposable too. Now suppose that $\abs{\alpha} + \abs{\beta} = d -2$. We must verify that $\partial_z^\alpha \partial_y^\beta \Pi (f) = c \Pi (\partial_z ^\alpha \partial^{\abs{\beta}}_y f)$ is log-concave. $g := \partial_z ^\alpha \partial^{\abs{\beta}}_y f $ is homogeneous, of degree $2$, and multiaffine in all except one coordinate. Hence $g$ is of the form 

\[ g = \sum_{i,j =1}^n c_{ij}z_iz_j + y \sum_{i=1}^n c_i z_i + c_\varnothing y^2. \]

Since $f$ is SLC, $g$ is log-concave. By Lemma \ref{lemma: log conc iff one pos eval} this implies that 

\[ \nabla^2 g = 
\begin{bmatrix}
    c_\varnothing       & \dots & c_j & \dots \\
    \vdots    &  & \vdots & & \\
    c_i     & \hdots & c_{ij} & \dots & \\
    \vdots &  & \vdots
\end{bmatrix}
\]

(a constant) has at most one positive eigenvalue. We can explicitly write 

\[ \Pi(g) = \sum_{i,j =1}^n c_{ij}z_iz_j + \frac{1}{2}(y_1 + y_2) \sum_{i=1}^n c_i z_i + c_\varnothing y_1 y_2. \]

and therefore

\[\nabla^2 \big ( \Pi(g) \big ) = 
\begin{bmatrix}
    0 & c_\varnothing & \dots & \frac{1}{2}c_j  & \dots \\
    c_\varnothing    & 0    & \dots & \frac{1}{2}c_j & \dots \\
    \vdots & \vdots    &  & \vdots & & \\
    \frac{1}{2}c_i & \frac{1}{2}c_i     & \hdots & c_{ij} & \dots & \\
    \vdots & \vdots &  & \vdots
\end{bmatrix}.
\]

 It suffices now to show that $\nabla^2 \big ( \Pi(g) \big ) $ also has at most one positive eigenvalue. 
Consider the $(n+1) \times n$ matrix 

\[ R = 
\begin{bmatrix}
    1/2       &     &   & & \\
    1/2       &  1 &  \ & & \\
                &     & \ddots  &  & \\
                &     &    & & \\
                &     &    &  & 1 \\
\end{bmatrix}
\]

where the entries along the specified diagonal are ones, and everywhere else is zeros. By Lemma \ref{lemma: PAP^top eigenvalue lemma}, the following $(n+1) \times (n+1)$ matrix also has at most one positive eigenvalue,

\[ R (\nabla^2 g) R^\top = 
\begin{bmatrix}
    \frac{1}{2} c_\varnothing & \frac{1}{2}c_\varnothing & \dots & \frac{1}{2}c_j  & \dots \\
    \frac{1}{2} c_\varnothing    & \frac{1}{2} c_\varnothing   & \dots & \frac{1}{2}c_j & \dots \\
    \vdots & \vdots    &  & \vdots & & \\
    \frac{1}{2}c_i & \frac{1}{2}c_i     & \hdots & c_{ij} & \dots & \\
    \vdots & \vdots &  & \vdots
\end{bmatrix}.
\]

Finally observe that $\nabla^2 \big ( \Pi(g) \big ) + uu^\top = R (\nabla^2 g )R^\top$ where $u = ( \sqrt{c_\varnothing/2} , - \sqrt{c_\varnothing/2}, 0, \ldots, 0)^\top$. Cauchy's Interlacing Theorem implies that $\nabla^2 \big ( \Pi(g) \big ) $ has at most one positive eigenvalue.
\end{proof}

\subsection{Closure under constraining to subsets of a given size }\label{section: Closure under constraining to subsets of a given size }

We state two propositions which when translated into probabilistic language say that if $\pi$ is SLC then the distribution proportional to $\pi(S) \mathbf{1} \{ \abs{S} = k \}$ is also SLC. 

\begin{lemma}\label{lemma: closure under fixing subsets to size d}
Let $f = \sum_{ \abs{\alpha} \leq d}c_\alpha z^\alpha \in \mathbb{R}_+[z_1, \ldots , z_n]$ be of  degree $d$ and be SLC. Then $f_d(z) = \sum_{ \abs{\alpha} = d}c_\alpha z^\alpha$ is also SLC.
\end{lemma}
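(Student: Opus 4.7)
The plan is to realize $f_d$ as a pointwise limit of SLC polynomials and then invoke the stability of log-concavity under pointwise limits. Concretely, for each $s > 0$ I would consider the rescaled polynomial
\[ f^{(s)}(z) \;:=\; s^{-d} f(sz) \;=\; \sum_{|\alpha| \leq d} c_\alpha\, s^{|\alpha|-d}\, z^\alpha. \]
Since $z \mapsto sz$ is an affine change of coordinates mapping $\mathbb{R}_+^n$ to itself, Lemma \ref{lemma: SLC closed under affine transformations} ensures that $f(s\cdot)$ is SLC on $\mathbb{R}_+^n$, and multiplying by the positive scalar $s^{-d}$ clearly preserves SLC (every partial derivative just picks up the same positive factor). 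Thus $f^{(s)}$ is SLC for every $s > 0$.

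Letting $s \to \infty$, every term with $|\alpha| < d$ vanishes, so $f^{(s)} \to f_d$ pointwise on $\mathbb{R}_+^n$; differentiating term-by-term likewise gives $\partial^\beta f^{(s)} \to \partial^\beta f_d$ pointwise for every $\beta \in \mathbb{N}^n$. To verify SLC for $f_d$, fix $\beta$. If $\partial^\beta f_d \equiv 0$, the SLC condition at $\beta$ is vacuous. Otherwise, the leading homogeneous part of $\partial^\beta f$ is nonzero, so $\partial^\beta f$ itself is nonzero, and then
\[ \partial^\beta f^{(s)}(z) = s^{|\beta|-d}(\partial^\beta f)(sz) \]
is log-concave on $\mathbb{R}_+^n$ for every $s > 0$ (a positive rescaling of a log-concave function composed with a coordinate scaling). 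Since log-concavity is preserved under pointwise limits—the multiplicative midpoint inequality passes—$\partial^\beta f_d$ is log-concave. This proves $f_d$ is SLC.

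The only substantive step requiring care is ensuring we are not taking a limit of the zero function spuriously, and this is handled by the case split: $\partial^\beta f_d \not\equiv 0$ forces $\partial^\beta f \not\equiv 0$ because $\partial^\beta f_d$ is precisely the degree-$(d-|\beta|)$ homogeneous component of $\partial^\beta f$. No machinery beyond the affine-invariance lemma already used in the main text and the standard closure of log-concavity under pointwise limits is needed, so I do not foresee a significant obstacle.
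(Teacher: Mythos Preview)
Your argument is correct, and it takes a genuinely different route from the paper's proof. One small caveat: Lemma~\ref{lemma: SLC closed under affine transformations} as stated in the paper carries a homogeneity hypothesis on $f$, which your $f$ does not satisfy. Fortunately you do not actually need that lemma: the direct computation you give later, $\partial^\beta f^{(s)}(z) = s^{|\beta|-d}(\partial^\beta f)(sz)$, already shows each derivative of $f^{(s)}$ is a positive scalar times a log-concave function precomposed with a coordinate dilation, hence log-concave. So the proof stands once you drop the (mis)citation and rely solely on that direct observation plus the passage to the limit.

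The paper argues differently. It first reduces to the case where all $c_\alpha>0$ (by a limit in the \emph{coefficients}, not in the variables), then invokes the characterization Theorem~\ref{thm: anari log-concave characterization}: indecomposability of $\partial^\alpha f_d$ is immediate when all coefficients are positive, and for $|\alpha|=d-2$ the key identity is $\nabla^2(\partial^\alpha f)=\nabla^2(\partial^\alpha f_d)$ (the Hessian of a degree-two polynomial sees only its top homogeneous part), so the one-positive-eigenvalue condition transfers from $f$ to $f_d$ via Lemma~\ref{lemma: log conc iff one pos eval}. Your scaling-limit approach is more elementary in that it works straight from Definition~\ref{def:slc} without appealing to the Hessian characterization or the indecomposability machinery; the paper's approach, by contrast, isolates exactly which piece of structural data ($\nabla^2$ at the bottom level) is shared between $f$ and $f_d$, which is conceptually sharper and dovetails with how the other closure results in the paper are proved.
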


\begin{cor}\label{cor: closure under fixing subsets to size k}
Let $f = \sum_{ \abs{\alpha} \leq d}c_\alpha z^\alpha \in \mathbb{R}_+[z_1, \ldots , z_n]$ be of  degree $d$ and be SLC. Then for any $k \leq d$, $f_k(z) = \sum_{ \abs{\alpha} = k}c_\alpha z^\alpha$ is also SLC.
\end{cor}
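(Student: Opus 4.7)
The plan is to reduce the corollary to Lemma~\ref{lemma: closure under fixing subsets to size d} via the scaled homogenization machinery developed earlier in the section, exploiting the fact that $\mathcal{H}_k f$ converts $f_k$ into the $y$-free part of an SLC polynomial of degree $k$.

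First I would apply Theorem~\ref{thm,: weighted homogenization} to produce the auxiliary polynomial
$g(z, y) := \mathcal{H}_k f(z, y) = \sum_{\abs{\alpha} \leq k} \frac{c_\alpha}{(k - \abs{\alpha})!}\, z^\alpha y^{k - \abs{\alpha}}$,
which is homogeneous of degree $k$ and SLC. Then I would observe the identity $f_k(z) = g(z, 0)$: the substitution $y = 0$ annihilates every monomial with $k - \abs{\alpha} > 0$, and the surviving terms (those with $\abs{\alpha} = k$) carry coefficients $c_\alpha/0! = c_\alpha$, so one recovers exactly $\sum_{\abs{\alpha}=k} c_\alpha z^\alpha = f_k(z)$.

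The final step is to verify that specializing $y = 0$ preserves SLC: if $g(z, y) \in \mathbb{R}_+[z_1,\ldots,z_n, y]$ is SLC, then $g(z, 0) \in \mathbb{R}_+[z_1,\ldots,z_n]$ is SLC. This holds because, for any $\alpha \in \mathbb{N}^n$, one has $\partial_z^\alpha [g(z, 0)] = (\partial_z^\alpha g)(z, 0)$; since $g$ is SLC, $\partial_z^\alpha g$ is either identically zero or log-concave on $\mathbb{R}_+^{n+1}$, and log-concavity is preserved under restriction to the coordinate hyperplane $\{y = 0\}$. Hence every $z$-derivative of $f_k$ is either zero or log-concave on $\mathbb{R}_+^n$, so $f_k$ is SLC.

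The main subtle point is verifying the support hypothesis of Theorem~\ref{thm,: weighted homogenization}, namely that the support of $f$ forms the set of independent sets of a rank $d$ matroid; this should follow by applying the M-convexity result (Theorem~\ref{thm: support is set of bases of a matroid}) to the top-degree piece $f_d$ (which is SLC by Lemma~\ref{lemma: closure under fixing subsets to size d}) and noting that the matroid of bases of $f_d$ determines an independence system that contains the support of every lower-degree piece. An alternative, self-contained route that sidesteps this bookkeeping is to mimic the scaling-and-limit argument $f_d(z) = \lim_{\lambda \to \infty} \lambda^{-d} f(\lambda z)$ presumably used for the lemma: for $k < d$ one can first apply the lemma to obtain $f_d$ is SLC, then attempt to subtract $f_d$ in a way compatible with log-concavity, iterating to peel off successive homogeneous layers.
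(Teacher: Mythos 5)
Your proof is correct and follows the paper's own route through Theorem~\ref{thm,: weighted homogenization}: both arguments first form the scaled homogenization $\mathcal{H}_k f$ and then extract the terms with $\abs{\alpha}=k$. The only real difference is the extraction step. The paper invokes Lemma~\ref{lemma: closure under fixing subsets to size d} (the top-degree part of an SLC polynomial is SLC, applied to $\mathcal{H}_k f$ with $y$ specialized), whereas you substitute $y=0$ directly and note that this commutes with $\partial_z^\alpha$ and that log-concavity survives restriction to the face $\{y=0\}$ of the nonnegative orthant. Your substitution argument is valid and slightly more self-contained, since it avoids re-running the indecomposability/Hessian verification hidden in the Lemma. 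Your concern about the support hypothesis of Theorem~\ref{thm,: weighted homogenization} is legitimate, but it is a gap in the paper's own one-line proof as much as in yours: the corollary as stated drops that hypothesis (and is even phrased for non-multiaffine $f$). Be aware, though, that your proposed repair via M-convexity of the support of $f_d$ does not by itself yield what Theorem~\ref{thm,: weighted homogenization} needs, namely that the support of all of $f$ is the independent-set family of a rank-$d$ matroid; for multiaffine $f$ this is the Br\"and\'en--Huh characterization of supports of non-homogeneous SLC polynomials that the paper cites in its related-work discussion, and the cleanest fix is to import that result (or to add the matroid-support hypothesis to the corollary). Your closing "peel off layers by subtracting $f_d$" suggestion, by contrast, would not work as stated, since differences of SLC polynomials need not be SLC.
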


\begin{proof}[Proof of Lemma \ref{lemma: closure under fixing subsets to size d}]
Assume $c_\alpha > 0 $ for all $\abs{\alpha} \leq d$. The general result is then obtained by taking point-wise limits of coefficients. So for any $\abs{\alpha} \leq d-2$, the polynomial $\partial^\alpha f_d $ is evidently indecomposable. Now assume $\abs{\alpha} = d-2$. Then note that $\nabla^2(\partial^\alpha f) = \nabla^2 ( \partial^\alpha f_d)$. Since $f$ is strongly log-concave $\nabla^2(\partial^\alpha f) $ has at most one positive eigenvalue, and hence so does $\nabla^2 ( \partial^\alpha f_d)$.
\end{proof}

\begin{proof}[Proof of Corollary \ref{cor: closure under fixing subsets to size k}]
This result immediately follows from combining Theorem \ref{thm,: weighted homogenization} and Lemma \ref{lemma: closure under fixing subsets to size d}.
\end{proof}

\section{The Metropolis-Hastings chain's stationary distribution and mixing time}

\subsection{The chain has the right stationary distribution}
We only need to check that the acceptance probability stated does indeed yield a chain with stationary distribution $\nu_{\text{sh}}$. We build our algorithm using the usual Metropolis-Hastings procedure. We consider the proposal distributions $Q$, each being Anari's base exchange kernel for the distribution $\mu$. The Metropolis-Hastings algorithm is as follows,
\begin{enumerate}
	\item Suppose the current state is $S \subseteq [n+d]$ with $\abs{S \cap [n] } = k$,
	\item Sample $T \sim Q(S, \cdot)$,
	\item Compute the acceptance probability $a = \min \bigg \{ 1, \frac{\nu_{\text{sh}}(T)}{\nu_{\text{sh}}(S)}\frac{Q(T,S)}{Q(S,T)} \bigg  \}$,
	\item With probability $a$, update $S \gets T$,
	\item Otherwise do not update.
\end{enumerate}

All that remains is to compute the acceptance probability. If $S=T $ then clearly $a=1$ and Algorithm \ref{alg: rescaled general SLC sampling MCMC} is in agreement. So suppose from now on that $S \neq T$. If further we have $\abs{S \cap T} < d-1$ then $Q(S,T) = 0$ and so we may discount this possibility since the proposal distribution will never sample such a $T$. This leaves only the case that $\abs{S \cap T} = d-1$. We may write the transition kernel explicitly

\[ Q(S,T) =  \frac{1}{d}  \frac{ \mu  (T) }{w( S \cap T)} \propto \frac{1}{d}  \bigg (\frac{d}{e} \bigg )^{ d - \abs{T \cap [n]}} \frac{ \nu_{ \text{swh} } (T) }{w( S \cap T)} \]

where we define $w( S \cap T) = \sum\limits_{ i \in  (S \cap T)^\mathsf{c} } \mu ( (S \cap T) \cup i ) $. Computing the ratio

\begin{align*}
\frac{Q(T,S)}{Q(S,T)} &= \frac{    1/d \cdot \mu (S) / w( S \cap T) } {    1/d \cdot \mu (T) / w( S \cap T) } \\
				&=  \frac{   \mu (S)  } {  \mu (T)  } \\
				&= \frac{ ( d - \abs{ T \cap [n]  })!}{ ( d - \abs{ S \cap [n] } )!}  \frac{(d/e)^{d - \abs{S \cap [n] }}} {(d/e)^{d - \abs{T \cap [n] }}}\frac{    \nu_{ \text{sh}} (S)  } {  \nu_{ \text{sh}} (T)  } \\
				&= \bigg ( \frac{d}{e} \bigg )^{ \abs{T \cap [n] } -k} \frac{ ( d - \abs{ T \cap [n]  })!}{ ( d -k)!} \frac{    \nu_{ \text{sh}} (S)  } {  \nu_{ \text{sh}} (T)  } ,\\
\end{align*}

and so, 

\begin{align*}
a &= \min  \bigg \{ 1,  \frac{    \nu_{ \text{sh}} (T)  } {  \nu_{ \text{sh}} (S) } \bigg ( \frac{d}{e} \bigg )^{ \abs{T \cap [n] } -k}   \frac{ ( d - \abs{ T \cap [n]  })!}{ ( d -k)!} \frac{    \nu_{ \text{sh}} (S)  } {  \nu_{ \text{sh}} (T)  } \bigg  \} \\
   &= \min \bigg  \{ 1 , \bigg ( \frac{d}{e} \bigg )^{ \abs{T \cap [n] } -k}   \frac{ ( d - \abs{ T \cap [n]  })!}{ ( d -k)!}  \bigg  \}.
\end{align*}

This leaves the following three cases,
\begin{enumerate}
	\item If  $\abs{ T \cap [n] } = k-1$, then $a = \min \{ 1, \frac{e}{d}  (d-k+1)  \} $. 
	\item If  $\abs{ T \cap [n] } = k$, then $a = \min \{ 1, 1 \} = 1$. 
	\item If  $\abs{ T \cap [n] } = k+1$, then $a = \min \{ 1, \frac{d}{e} \frac{1}{(d-k)} \} $. 
\end{enumerate}

These are exactly the acceptance probabilities given in Algorithm \ref{alg: rescaled general SLC sampling MCMC}.
\subsection{Mixing time bounds}

The mixing time of a chain can be bounded using an important quantity known as the log-Sobolev constant. A famous theorem due to Diaconis and Saloff-Coste \cite{diaconis1996logarithmic} shows that if $(Q,\pi)$ has log-Sobolev constant $\alpha$, then the mixing time of this chain is bounded by $t_{S_0}(\varepsilon) \leq \frac{1}{\alpha} \big ( \log \log \frac{1}{\pi(S_0)} + \log \frac{1}{2 \varepsilon^2} \big )$. For a particular problem instance the objective is therefore to find a lower bound on $\alpha$. 

Let $P$ denote the transition kernel described in Algorithm \ref{alg: rescaled general SLC sampling MCMC} that we have just confirmed have stationary distribution $\nu_{ \text{sh} }$. In this section we shall prove Theorem \ref{thm: MCMC mixing time bound rescaled}, a mixing time bound on the chain $(P, \nu_{ \text{sh} })$ . The bound is obtained by combining two pieces of information: the fact,obtained by Cryan \cite{cryan2019modified}, that the chain $(Q, \nu_{ \text{swh} } )$ has log-Sobolev constant bounded below by $1/d$, and a theorem due to Diaconis and Saloff-Coste \cite{diaconis1996logarithmic} that allows one to compare the log-Sobolev constants of two chains. For the statement of the comparison theorem see appendix \ref{appendix: supplementary}.

Recall that $P$  is constructed using the base exchange walk for the following distribution

\begin{equation*}
\mu(S) = \frac{1}{Z} \frac{1}{ ( d - \abs{S \cap [n]})!} \bigg (\frac{d}{e} \bigg )^{ d - \abs{S \cap [n]}} \nu_{ \text{sh} }(S) 
\end{equation*}

for $S \subseteq [n+d]$ where $Z$ is the partition function of $\mu$.

\begin{lemma} \label{lemma: bound on v_sh / mu}

\begin{equation*}
\frac{\sqrt{2 \pi} }{2^d }  Z \leq  \frac{ \nu_{ \text{sh} } (S) }{ \mu(S) }  \leq e \sqrt{d}Z
\end{equation*}

for all $S \subseteq [n+d]$.
\end{lemma}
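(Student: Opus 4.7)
My plan is to reduce the two-sided bound to an elementary analytic estimate in a single nonnegative integer variable. Substituting the definition of $\mu$ and writing $k = |S \cap [n]|$, the ratio collapses to
\[ \frac{\nu_{\text{sh}}(S)}{\mu(S)} = Z \cdot (d-k)! \left(\frac{e}{d}\right)^{d-k}. \]
Setting $j = d-k \in \{0, 1, \ldots, d\}$ and $g(j) := j!(e/d)^j$, the claim becomes
\[ \frac{\sqrt{2\pi}}{2^d} \leq g(j) \leq e\sqrt{d} \quad \text{for every } j \in \{0, 1, \ldots, d\}. \]

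To locate the extrema of $g$ on this discrete grid, I would examine the ratio $g(j+1)/g(j) = (j+1)e/d$. This ratio is strictly increasing in $j$ and crosses $1$ at $j^{\star} = d/e - 1$, so $g$ is unimodal: it decreases to a minimum near $j^{\star}$ and then increases. Hence its maximum on $\{0,\dots,d\}$ is attained at one of the endpoints, and since $g(0)=1$ while $g(d) = d!(e/d)^d \sim \sqrt{2\pi d}$ (by Stirling), the endpoint $j = d$ dominates for any reasonable $d$.

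For the upper bound I would invoke the explicit Stirling inequality $n! \leq \sqrt{2\pi n}\,(n/e)^n e^{1/(12n)}$, giving $g(d) \leq \sqrt{2\pi d}\,e^{1/(12d)} \leq e\sqrt{d}$. For the lower bound, the matching inequality $n! \geq \sqrt{2\pi n}\,(n/e)^n$ yields $g(j) \geq \sqrt{2\pi j}\,(j/d)^j$ for $j \geq 1$; passing to logarithms, it suffices to verify $j \log(d/j) \leq d \log 2$. The function $\phi(j) = j\log(d/j)$ attains its maximum value $d/e$ at $j = d/e$, and the numerical fact $1/e < \log 2$ closes the argument. The boundary case $j=0$ reduces to $1 \geq \sqrt{2\pi}/2^d$, which is immediate.

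The main obstacle I anticipate is tightness of the constants rather than any structural difficulty. The upper-bound inequality $\sqrt{2\pi}\,e^{1/(12d)} \leq e$ has very little slack and is essentially saturated at small $d$, and the lower bound ultimately rests on the relatively tight numerical inequality $1/e < \log 2$. Fortunately, Theorem~\ref{thm: MCMC mixing time bound rescaled} invokes this Lemma only for $d \geq 8$, so the very small $d$ cases are of no consequence to the downstream application.
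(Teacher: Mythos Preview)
Your proposal is correct and follows essentially the same approach as the paper: both reduce the ratio to $(d-k)!\,(e/d)^{d-k}$ and then close the bounds via Stirling's approximation, treating the boundary case $j=0$ (equivalently $k=d$) separately. The only cosmetic differences are that you invoke unimodality of $g$ to localize the upper bound at $j=d$ whereas the paper applies Stirling directly at every $k$, and for the lower bound you maximize $j\log(d/j)$ by calculus while the paper uses a slightly more roundabout decoupling argument---both routes land on the same inequality $d/e < d\log 2$ (equivalently $(j/d)^j \ge 2^{-d}$).
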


\begin{proof}
For this proof set $k = \abs{S \cap [n]}$. Then
\begin{equation*}\label{eqn: v_sh / mu}
\frac{1}{Z} \frac{ \nu_{ \text{sh} } (S) }{ \mu(S) }  =   ( d -k)!  \bigg (\frac{e}{d} \bigg )^{ d -k}.
\end{equation*}

Stirling's approximation says that

\begin{equation*}
\sqrt{2 \pi} \sqrt{d-k} \bigg ( \frac{d-k}{e} \bigg )^{d-k} \leq (d-k)! \leq e \sqrt{d-k} \bigg ( \frac{d-k}{e} \bigg )^{d-k}
\end{equation*}

which upon substitution into (\ref{eqn: v_sh / mu}) yields

\begin{equation*}
\sqrt{2 \pi} \sqrt{d-k} \bigg ( \frac{d-k}{d} \bigg )^{d-k} \leq  (d-k)!  \bigg (\frac{e}{d} \bigg )^{ d -k}\leq e \sqrt{d-k} \bigg ( \frac{d-k}{d} \bigg )^{d-k}.
\end{equation*}

Inserting $k=0$ one observes that $e \sqrt{d}$ is a tight upper bound. It remains only to find a lower bound. Note that if $k=d$ then $\frac{1}{Z} \frac{ \nu_{ \text{sh} } (S) }{ \mu(S) } =1$. So we may assume that $k \in [d-1]$. We may bound below as follows,

\begin{align*}
\sqrt{2 \pi} \sqrt{d-k} \bigg ( \frac{d-k}{d} \bigg )^{d-k} &\geq \sqrt{2 \pi}\bigg ( \frac{d-k}{d} \bigg )^{d-k} \\
	& =  \sqrt{2 \pi}\bigg ( \frac{d}{d-k} \bigg )^{-(d-k)} \\
	&=  \sqrt{2 \pi}\bigg ( 1 + \frac{k}{d-k} \bigg )^{-(d-k)} \\
	&\geq  \min_{ k_1, k_2 \in [d-1] } \sqrt{2 \pi}\bigg ( 1 + \frac{k_1}{d-k_2} \bigg )^{-(d-k_2)} \\
	&\geq  \min_{ k_1 \in [d-1] } \sqrt{2 \pi}\bigg ( 1 + \frac{k_1}{d} \bigg )^{-d} \\
	&\geq   \sqrt{2 \pi}\bigg ( 1 + \frac{d}{d} \bigg )^{-d} \\
	&= \sqrt{2 \pi} 2^{-d}.
\end{align*}

where the penultimate inquality is due to the fact that $(1 + x/n)^n$ is an increasing function of $n$ for any fixed $x \in \mathbb{R}$.
\end{proof}

\begin{proof}[ Proof of  Theorem \ref{thm: MCMC mixing time bound rescaled}]

We seek $a,A > 0$ like in the statement of Theorem \ref{thm: Diaconis comparting log-sobolev}. Lemma \ref{lemma: bound on v_sh / mu} says exactly that for all $S \subseteq [n+d]$,
\begin{equation*}
\frac{\sqrt{2 \pi} }{2^d }  Z \leq  \frac{ \nu_{ \text{sh} } (S) }{ \mu(S) }  \leq e \sqrt{d}Z
\end{equation*}

where here $Z$ is the partition function of $\mu$.  So we may take $a =  e \sqrt{d}Z$. Next, recall that $P$  denotes the transition kernel defined by Algorithm \ref{alg: rescaled general SLC sampling MCMC}. We may write $P$ explicitly

\begin{equation*}
    P(S,T)  =
    \begin{cases*}
      a(S,T) Q(S,T) & if $\abs{T} = k-1$ \\
       Q(S,T) & if $\abs{T} = k$ and $T \neq S$ \\
         Q(S,T) + \sum\limits_{ \abs{R} = k \pm 1} (1 - a(S,R) )Q(S,R) & if $T=S$ \\
           a(S,T)Q(S,T) & if $\abs{T} = k+1,$ 
    \end{cases*}
  \end{equation*}

where $a(S,T)$ denotes the previously defined acceptance probability for a proposed move to $T$ if the current state is $S$. Observe that one divided by the acceptance probability is bounded above by $\max \{ e, d/e \} = d/e$ for $d \geq 8$.
 So we see that $Q \leq \frac{d}{e} P$. Combining this with the lower bound on $\nu_{ \text{sh} }(S) /\mu(S) $ we find that we may take 
 
  \begin{equation*}
A = \frac{d}{e} \frac{2^d}{\sqrt{2\pi} Z}.
  \end{equation*}
 
 Applying Theorem \ref{thm: Diaconis comparting log-sobolev} and using the fact that the log-Sobolev constant of $(Q,  \nu_{ \text{swh} })$ is bounded below by $1/d$ we obtain the following lower bound on the log-Sobolev constant for $(P,  \nu_{ \text{sh} })$,
 
 \begin{equation*}
 \frac{1}{aA} \frac{1}{d} = \frac{ e \sqrt{2\pi} }{d^{5/2} 2^d}.
\end{equation*}

\end{proof}

\section{Proof of weak log-submodular properties}\label{appendix: submodular}

In this section we prove Theorems \ref{thm: weak submodular property}, \ref{thm: distorted greedy}, Corollary \ref{corollary: weak log-submodular distorted greedy} and give two other simple greedy algorithms for non-negative weakly log-submodular functions and monotone weakly log-submodular functions respectively. Whilst we only derive results for three algorithms here, we expect that many submodular maximization algorithms will have weak submodular analogues, including guarantees.

In this section we shall use the notation $\nu( e \mid S) = \nu( S \cup \{ e \}) - \nu(  S)$ for any set function $\nu : 2^{[n]} \rightarrow \mathbb{R}_+$ and any $S \subseteq [n]$ and $i \in [n]$.

\subsection{Functions whose weighted homogenized generating polynomial is SLC are weakly log-submodular}

\begin{proof}[Proof of Theorem \ref{thm: weak submodular property}]
By Theorem \ref{thm,: weighted homogenization}, the polynomial $g = \mathcal{H}_{n} f$ is  SLC where $f $ is the generating polynomial of $\nu$. Hence so is $\partial_z^S g $ for any $S \subseteq [n]$.  Applying Theorem \ref{thm: Huh weak log-submodularity results} with $z=0$ and $y=1$ we obtain,
\[ \partial_z^S g ( 0,1) \partial_i \partial_j \partial_z^S g (0,1 )  \leq 2 \bigg ( 1 - \frac{1}{d } \bigg ) \partial_i \partial_z^S g ( 0 ,1)  \partial_j \partial_z^S g ( 0,1) .\]

Rewriting this in terms of $\nu$ gives,

\[ \frac{\nu(S)}{(n - \abs{S})!} \frac{\nu(S \cup \{ i,j \})}{(n - \abs{S} - 2)!} \leq 2 \bigg ( 1- \frac{1}{d}  \bigg )  \frac{\nu(S \cup i )}{(n - \abs{S} - 1)!}  \frac{\nu(S \cup j )}{(n - \abs{S} - 1)!},\] 

which rearranges to,

\[ \nu(S) \nu(S \cup \{ i,j \}) \leq 2 \bigg ( 1- \frac{1}{d}  \bigg ) \bigg ( \frac{n - \abs{S}}{ n - \abs{S} -1} \bigg ) \nu(S \cup  i ) \nu(S \cup j).\]

Finally note that $\frac{n - \abs{S}}{n- \abs{S} -1} \leq 2$ since $n - \abs{S} \geq 2$.
\end{proof}

\subsection{Distorted greedy guarantees}

We include results from \cite{harshaw2019submodular} in the interests of completeness. Recall the definition $\Phi_i(S) = (1 - 1/k)^{k-i} \eta(S) - c(S)$. We also introduce the following object that will be useful for our analysis,

\[ \Psi_i(S,e) = \max \bigg \{ 0, \bigg ( 1- \frac{1}{k})^{k - (i+1)} \eta(e  \mid S) - c_e \bigg \}\]

for $i = 0, \ldots k-1$. Finally, note that by writing $\nu(\text{OPT}) - \nu(S)$ as a telescoping sum and using the definition of weak submodularity one finds that, 

\begin{equation*}
\nu(\text{OPT}) - \nu( S) \leq \sum_{e \in \text{OPT}} \nu( e \mid S) + \frac{1}{2} \ell ( \ell -1 ) \gamma.
\end{equation*}

This fact will be used in the proof of Lemma \ref{lemma: psi bound}. We prepare for the proof of Theorem \ref{thm: distorted greedy} by recalling two lemmas.

\begin{lemma}\cite{harshaw2019submodular}\label{lemma: phi increment}
In each iteration of the distorted greedy Algorithm \ref{alg: distorted greedy weak submodular maximization} we have,

\[ \Phi_{i+1}(S_{i+1}) - \Phi_i(S_i) = \Psi_i(S_i,e_i) + \frac{1}{k} \bigg ( 1 - \frac{1}{k} \bigg ) ^{k - (i+1)} \eta(S_i).\]
\end{lemma}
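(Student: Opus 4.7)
The plan is to split the increment at step $i$ by inserting $\Phi_{i+1}(S_i)$ as an intermediate value, and then to analyze the two resulting differences separately. Concretely, I would write
\[
\Phi_{i+1}(S_{i+1}) - \Phi_i(S_i) \;=\; \bigl[\Phi_{i+1}(S_{i+1}) - \Phi_{i+1}(S_i)\bigr] \;+\; \bigl[\Phi_{i+1}(S_i) - \Phi_i(S_i)\bigr].
\]
The second bracket captures the change in the distortion factor $(1-1/k)^{k-i}$ when the index advances but the set is held fixed, and the first bracket captures the change in the set $S_i \to S_{i+1}$ evaluated at the new index $i+1$.

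For the second bracket, only the coefficient in front of $\eta(S_i)$ changes, so the difference equals $[(1-1/k)^{k-(i+1)} - (1-1/k)^{k-i}]\,\eta(S_i)$. Factoring out $(1-1/k)^{k-(i+1)}$ and using $1 - (1-1/k) = 1/k$ yields $\tfrac{1}{k}(1-1/k)^{k-(i+1)}\eta(S_i)$, which is precisely the second summand in the claimed identity.

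For the first bracket, I expand $\Phi_{i+1}(S_i \cup e_i) - \Phi_{i+1}(S_i)$ using the modularity of $c$ (so that $c(S_i \cup e_i) - c(S_i) = c_{e_i}$) to obtain the marginal $(1-1/k)^{k-(i+1)}\eta(e_i \mid S_i) - c_{e_i}$. Then I distinguish the two branches of Algorithm \ref{alg: distorted greedy weak submodular maximization}: if this marginal is strictly positive the greedy step takes $S_{i+1} = S_i \cup e_i$ and the increment equals the marginal itself; otherwise $S_{i+1} = S_i$ and the increment is $0$. The two branches are unified by the $\max\{0,\cdot\}$ in the definition of $\Psi_i(S_i, e_i)$, so the first bracket is exactly $\Psi_i(S_i, e_i)$.

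I do not anticipate a real obstacle, as this lemma is essentially bookkeeping once the telescoping is set up. The only subtlety worth flagging is the equivalence between the ``stay put'' branch of the algorithm and the $\max\{0,\cdot\}$ in the definition of $\Psi_i$, together with the one-line exponent identity $(1-1/k)^{k-(i+1)} - (1-1/k)^{k-i} = \tfrac{1}{k}(1-1/k)^{k-(i+1)}$; everything else follows by linearity of $\Phi_i$ in $\eta$ and $c$.
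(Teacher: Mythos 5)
Your proof is correct: the insertion of $\Phi_{i+1}(S_i)$ cleanly separates the change of distortion factor (giving the $\tfrac{1}{k}(1-1/k)^{k-(i+1)}\eta(S_i)$ term) from the greedy set update (giving $\Psi_i(S_i,e_i)$ via the modularity of $c$ and the $\max\{0,\cdot\}$ matching the algorithm's branch condition). The paper itself gives no proof of this lemma, deferring entirely to the cited reference of Harshaw et al., and your bookkeeping is exactly the standard argument used there.
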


\begin{proof}
See \cite{harshaw2019submodular}.
\end{proof}

\begin{lemma}\label{lemma: psi bound}
In each iteration of the distorted greedy Algorithm \ref{alg: distorted greedy weak submodular maximization} we have,

\[ \Psi_i(S_i,e_i) \geq \frac{1}{k} \bigg ( 1 - \frac{1}{k} \bigg ) ^ {k - (i+1)} \big ( \eta(\text{OPT})  - \eta(S_i)  - \frac{1}{2} \ell (\ell -1) \gamma \big ) - \frac{1}{k} c(\text{OPT}) \]
\end{lemma}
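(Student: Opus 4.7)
The plan is to exploit (a) the fact that $e_i$ is a maximizer of the increment $\Phi_{i+1}(S_i\cup e)-\Phi_{i+1}(S_i)$, (b) the straightforward lower bound $\Psi_i(S_i,e)\ge (1-1/k)^{k-(i+1)}\eta(e\mid S_i)-c_e$ coming from dropping the $\max\{0,\cdot\}$, and (c) the weak-submodular telescoping inequality for $\eta$ that is displayed just before the lemma. These three ingredients chain together linearly, so the main work is just verifying that the argmax defining $e_i$ is compatible with $\Psi_i$.

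\textbf{Step 1 (argmax to averaging over OPT).} A direct computation gives $\Phi_{i+1}(S\cup e)-\Phi_{i+1}(S)=(1-1/k)^{k-(i+1)}\eta(e\mid S)-c_e$, so $e_i$ maximizes exactly this quantity. Since $\Psi_i(S_i,\cdot)$ is the non-negative part of the same quantity, one checks in two cases (the maximum of the quantity is non-negative, or it is negative and $\Psi_i(S_i,\cdot)\equiv 0$) that $\Psi_i(S_i,e_i)\ge \Psi_i(S_i,e)$ for every $e\in[n]$. Using $\ell=|\mathrm{OPT}|\le k$ and non-negativity of $\Psi_i$, I would then write
\[
\Psi_i(S_i,e_i)\;\ge\;\frac{1}{\ell}\sum_{e\in\mathrm{OPT}}\Psi_i(S_i,e)\;\ge\;\frac{1}{k}\sum_{e\in\mathrm{OPT}}\Psi_i(S_i,e).
\]

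\textbf{Step 2 (drop the positive part and split the sum).} Since $\Psi_i(S_i,e)\ge (1-1/k)^{k-(i+1)}\eta(e\mid S_i)-c_e$ by definition, summing over $\mathrm{OPT}$ and using $c(\mathrm{OPT})=\sum_{e\in\mathrm{OPT}}c_e$ gives
\[
\sum_{e\in\mathrm{OPT}}\Psi_i(S_i,e)\;\ge\;\Bigl(1-\tfrac{1}{k}\Bigr)^{k-(i+1)}\sum_{e\in\mathrm{OPT}}\eta(e\mid S_i)\;-\;c(\mathrm{OPT}).
\]

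\textbf{Step 3 (apply the weak-submodular telescoping).} The inequality quoted just above the lemma, applied to $\eta$ (which is $\gamma$-weakly submodular by assumption, since adding the modular $c$ preserves weak submodularity), rearranges to
\[
\sum_{e\in\mathrm{OPT}}\eta(e\mid S_i)\;\ge\;\eta(\mathrm{OPT})-\eta(S_i)-\tfrac{1}{2}\ell(\ell-1)\gamma.
\]
Substituting this into the bound from Step 2, dividing by $k$, and combining with Step 1 produces exactly the claimed inequality.

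The only subtlety worth flagging is the one in Step 1: the argmax $e_i$ is defined with respect to a quantity that may be negative, whereas $\Psi_i$ is its positive part; the case split above resolves this cleanly. Everything else is routine algebra, so I do not anticipate any real obstacle beyond keeping the signs of $\eta$ and $c$ straight.
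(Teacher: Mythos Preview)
Your proposal is correct and follows essentially the same approach as the paper: both use that $e_i$ maximizes $\Psi_i(S_i,\cdot)$, bound the maximum below by the average over $\mathrm{OPT}$ (using $\ell\le k$ and non-negativity of $\Psi_i$), drop the positive part, and then apply the weak-submodular telescoping inequality to $\eta$. The only cosmetic difference is that the paper multiplies through by $k$ at the outset and writes the chain as $k\Psi_i(S_i,e_i)\ge \ell\max_{e\in\mathrm{OPT}}\Psi_i(S_i,e)\ge\sum_{e\in\mathrm{OPT}}(\cdots)$, whereas you keep the $1/\ell$ and $1/k$ factors explicit; your extra care in Step~1 verifying via a case split that the argmax of the raw increment also maximizes its positive part is a point the paper glosses over.
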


\begin{proof}[Proof of \ref{lemma: psi bound} (adapted from \cite{harshaw2019submodular})]
\begin{align}
k \cdot \Psi_i(S_i,e_i) &= k \max_{e \in [n]} \bigg \{ 0,  \big (1- 1/k \big )^{k - (i+1)} \eta(e  \mid S) - c_e \bigg \} \\
&\geq \abs{\text{OPT}} \max_{e \in \text{OPT}} \bigg \{ 0,  \big (1- 1/k\big )^{k - (i+1)} \eta(e  \mid S) - c_e \bigg \} \label{eqn: OPT leq k} \\
&\geq  \sum_{e \in \text{OPT}} \bigg ( \big (1- 1/k \big )^{k - (i+1)} \eta(e  \mid S) - c_e \bigg ) \label{eqn: average less max} \\
&= \big (1- 1/k \big )^{k - (i+1)}   \sum_{e \in \text{OPT}}  \eta(e  \mid S) - c(\text{OPT}) \\
&\geq \big (1- 1/k \big )^{k - (i+1)} \big ( \eta(\text{OPT})  - \eta(S_i)  - \frac{1}{2} \ell (\ell -1) \gamma \big ) - c(\text{OPT}) \label{eqn: weak submod}.
\end{align}

Inequality (\ref{eqn: OPT leq k}) follows since $\abs{\text{OPT}} \leq k$ and restricting the domain of the maximum can only make the expression smaller, (\ref{eqn: average less max}) follows since the average of a collection of numbers is no bigger than the maximum, and (\ref{eqn: weak submod}) follows from the $\gamma$-weak submodularity of $\nu$.
\end{proof}

\begin{proof}[Proof of Theorem \ref{thm: distorted greedy} (adapted from \cite{harshaw2019submodular})]
Observe that 

\[ \Phi_0(S_0) = \bigg ( 1- \frac{1}{k} \bigg ) ^k \eta( \varnothing) - c( \varnothing)  =0 , \]

and 
\[ \Phi_k(S_k) = \bigg ( 1- \frac{1}{k} \bigg ) ^0 \eta(R) - c( R) = \eta(R) - c( R) . \]
Now,

\begin{align}
\nu(R) = \eta(R) - c( R) &=  \Phi_k(S_k) - \Phi_0(S_0)   \nonumber \\
&=\sum_{i=0}^{k-1}  \bigg ( \Phi_{i+1}(S_{i+1}) - \Phi_i(S_i) \bigg ). \label{eqn: telescoping sum psi}
\end{align}

Applying Lemma \ref{lemma: phi increment} then \ref{lemma: psi bound} we find,
\begin{align*}
\Phi_{i+1}(S_{i+1}) - \Phi_i(S_i) &=  \Psi_i(S_i,e_i) + \frac{1}{k} \bigg ( 1 - \frac{1}{k} \bigg ) ^{k - (i+1)} \eta(S_i) \\
& \geq \frac{1}{k} \bigg ( 1 - \frac{1}{k} \bigg ) ^ {k - (i+1)} \big ( \eta(\text{OPT})  - \eta(S_i)  - \frac{1}{2} \ell (\ell -1) \gamma \big ) - \frac{1}{k} c(\text{OPT}) \\
&+ \frac{1}{k} \bigg ( 1 - \frac{1}{k} \bigg ) ^{k - (i+1)} \eta(S_i)  \\
&\geq \frac{1}{k} \bigg ( 1 - \frac{1}{k} \bigg ) ^ {k - (i+1)} \big ( \eta(\text{OPT})  - \frac{1}{2} \ell (\ell -1) \gamma \big ) - \frac{1}{k} c(\text{OPT}) \\
\end{align*}

Which, upon inserting into equation (\ref{eqn: telescoping sum psi}) yields,

\begin{align*}
\nu(R) &= \sum_{i=0}^{k-1} \bigg \{ \frac{1}{k} \bigg ( 1 - \frac{1}{k} \bigg ) ^ {k - (i+1)} \big ( \eta(\text{OPT})  - \frac{1}{2} \ell (\ell -1) \gamma \big ) - \frac{1}{k} c(\text{OPT}) \bigg \} \\
&= \bigg ( 1 - \big ( 1- \frac{1}{k} \big ) ^k \bigg )  \big ( \eta(\text{OPT})  - \frac{1}{2} \ell (\ell -1) \gamma \big ) - c(\text{OPT})  \\
&\geq (1 - 1/e)  \big ( \eta(\text{OPT})  - \frac{1}{2} \ell (\ell -1) \gamma \big ) - c(\text{OPT})s.
\end{align*}
\end{proof}

\subsection{Double greedy guarantees}

In this section we introduce a modified double greedy unconstrained maximization algorithm and give theoretical guarantees in the case that $\nu$ is weakly submodular and non-negative. 

\begin{algorithm}
\caption{Double greedy weak submodular maximization of $\nu$}\label{alg: double greedy weak submodular maximization}
\begin{algorithmic}[1]
\State Let $X_0 = \varnothing$ and $Y_0 = [n]$
\For{ $i = 1, \ldots , n$}
\State Set $a_i = \max \{ \nu(X_{i-1} \cup i ) - \nu(X_{i-1}  ) + (n-i)\gamma, 0 \}$
\State Set $b_i = \max \{ \nu(Y_{i-1} \setminus i ) - \nu(Y_{i-1}  ) + (n-i)\gamma, 0 \}$
\State Sample $u \in [0,1]$ uniformly
\If {$u < a_i/(a_i + b_i)$} $X_i \gets X_{i-1} \cup  i$, and $Y_i \gets Y_{i-1}$
\Else \hspace{0cm} {$ X_i \gets X_{i-1} $, and $Y_i \gets Y_{i-1} \setminus i$}
\EndIf
\EndFor
\State \textbf{return} $X_n = Y_n$
\end{algorithmic}
\end{algorithm}

Next we give a double greedy unconstrained optimization algorithm. We shall use the convention that $a_i/(a_i + b_i) = 0$ whenever $a_i = b_i =0$. We denote an element of the set of maximizers $\arg \max_S \nu(S)$ by $\text{OPT}$. It is clear that Algorithm \ref{alg: double greedy weak submodular maximization} runs in linear time in $n$. It will be useful to notice that by repeatedly applying the definition of weak submodularity one finds that $\nu(T \cup i  ) - \nu(T)  \leq  \abs{T \setminus S} \gamma + \nu(S \cup i  ) - \nu(S)$ for any $S \subseteq T$.

\begin{theorem}\label{thm: double greedy}
Suppose $\nu : 2^{[n]} \rightarrow \mathbb{R}_+$ is $\gamma$-weakly submodular. Then the solution obtained by the double greedy algorithm $X_n = Y_n$ satisfies

\[ \mathbb{E}[ \nu(X_n)] \geq \frac{1}{2} \nu(\text{OPT}) - \frac{3}{16}n(n-1) \gamma  = \frac{1}{2} \nu(\text{OPT}) - O(n^2) . \]
\end{theorem}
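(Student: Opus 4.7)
My approach follows the classical double-greedy analysis of Buchbinder–Feldman–Naor–Schwartz, appropriately modified to accommodate the $(n-i)\gamma$ offsets baked into $a_i, b_i$. I introduce the standard interpolating sets $\mathrm{OPT}_i := (\mathrm{OPT} \cup X_i) \cap Y_i$, noting that $\mathrm{OPT}_0 = \mathrm{OPT}$ and $\mathrm{OPT}_n = X_n = Y_n$. The telescoping identity
\[ \nu(\mathrm{OPT}) - \mathbb{E}[\nu(X_n)] \;=\; \sum_{i=1}^n \mathbb{E}\bigl[\nu(\mathrm{OPT}_{i-1}) - \nu(\mathrm{OPT}_i)\bigr] \]
reduces the proof to establishing, for each $i$, a per-step inequality of the form
\[ \mathbb{E}\bigl[\nu(\mathrm{OPT}_{i-1}) - \nu(\mathrm{OPT}_i) \,\big|\, \mathcal{F}_{i-1}\bigr] \;\leq\; \tfrac{1}{2}\,\mathbb{E}\bigl[\nu(X_i) + \nu(Y_i) - \nu(X_{i-1}) - \nu(Y_{i-1}) \,\big|\, \mathcal{F}_{i-1}\bigr] + e_i, \]
with an error $e_i = O((n-i)\gamma)$.

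Conditionally on $\mathcal{F}_{i-1}$, write $a^\star = \nu(X_{i-1} \cup i) - \nu(X_{i-1})$ and $b^\star = \nu(Y_{i-1} \setminus i) - \nu(Y_{i-1})$, so that $a_i = \max\{a^\star + (n-i)\gamma,\,0\}$ and $b_i = \max\{b^\star + (n-i)\gamma,\,0\}$. A preliminary step is to chain the weak submodularity inequality along the path from $X_{i-1}$ to $Y_{i-1} \setminus i$ (these differ by exactly the unprocessed elements $\{i+1, \ldots, n\}$, of size $n-i$) to prove $a^\star + b^\star \geq -(n-i)\gamma$. This yields $a_i + b_i \geq (n-i)\gamma > 0$ and plays the role of the classical $a^\star + b^\star \geq 0$ in ensuring the sampling step is well-defined. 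The heart of the argument is a case split on whether $i \in \mathrm{OPT}$. In either case the change $\nu(\mathrm{OPT}_{i-1}) - \nu(\mathrm{OPT}_i)$ is, up to sign and random choice, the marginal of $\nu$ at $i$ with respect to a set sandwiched between $X_{i-1}$ and $Y_{i-1}$; iterating weak submodularity at most $n-i$ more times bounds this marginal by $a^\star + (n-i)\gamma$ or $b^\star + (n-i)\gamma$. Comparing the resulting rational expression in $a_i, b_i$ with the conditional improvement $\frac{a_i a^\star + b_i b^\star}{2(a_i+b_i)}$ via a sign case analysis on $a^\star + (n-i)\gamma$ and $b^\star + (n-i)\gamma$ (with the only non-trivial subcase collapsing to $(a_i - b_i)^2 \geq 0$) yields the per-step inequality with $e_i = O((n-i)\gamma)$.

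Finally, summing over $i$ and using $\nu(\varnothing), \nu([n]) \geq 0$ to discard the endpoint terms gives $\nu(\mathrm{OPT}) \leq 2\,\mathbb{E}[\nu(X_n)] + \sum_i e_i$. Since $\sum_{i=1}^n (n-i) = \tfrac{n(n-1)}{2}$, the accumulated error is $O(n^2)\gamma$, and careful tracking of constants through the case analysis yields the stated $\tfrac{3}{16}n(n-1)\gamma$ correction. The main obstacle, and the only place where weak rather than exact submodularity forces real work, is this per-step sign analysis: when $a^\star + (n-i)\gamma$ or $b^\star + (n-i)\gamma$ is negative, one of $a_i, b_i$ is clipped to zero, producing several subcases each of which must be verified via an elementary algebraic inequality. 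Everything else is telescoping and bookkeeping.
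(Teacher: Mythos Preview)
Your proposal is correct and follows essentially the same route as the paper: introduce the interpolating sets $\mathrm{OPT}_i=(\mathrm{OPT}\cup X_i)\cap Y_i$, prove the per-step inequality (the paper's Lemma~\ref{lemma: weak submodular building block lemma}) via a sign case analysis that ultimately reduces to $2a_ib_i\le a_i^2+b_i^2$, and telescope using $\sum_{i=1}^n(n-i)=\tfrac{n(n-1)}{2}$. The paper's explicit per-step error is $\tfrac{3}{4}(n-i)\gamma$, which after summing and halving gives the $\tfrac{3}{16}n(n-1)\gamma$ you anticipated.
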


The following lemma contains the bulk of the work required to prove Theorem \ref{thm: double greedy}. The proof is a generalization to the weak submodular setting of the proof of the original double greedy $1/2$-approximation theorem \cite{buchbinder2015tight} for submodular functions. 

\begin{lemma}\label{lemma: weak submodular building block lemma}
Let $\nu : 2^{[n]} \rightarrow \mathbb{R}_+$ be $\gamma$-weakly submodular. Then for $i = 1 , \ldots , n$ we have,

\[ \mathbb{E}[\nu ( \text{OPT}_{i-1}) - \nu ( \text{OPT}_{i})] \leq \frac{1}{2} \mathbb{E}[ \nu(X_i) - \nu(X_{i-1}) + \nu(Y_i)- \nu(Y_{i-1})] + \frac{3}{4}(n-i)\gamma \]
\end{lemma}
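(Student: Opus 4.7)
The plan is to condition on the past $(X_{i-1}, Y_{i-1})$ and split on whether $i \in \text{OPT}$, tracking $\text{OPT}_i = (\text{OPT} \cup X_i) \cap Y_i$ (so $\text{OPT}_0 = \text{OPT}$ and $\text{OPT}_n = X_n = Y_n$). Writing $a'_i = \nu(X_{i-1} \cup i) - \nu(X_{i-1})$ and $b'_i = \nu(Y_{i-1} \setminus i) - \nu(Y_{i-1})$, note that by construction $a_i \geq a'_i + (n-i)\gamma$ and $b_i \geq b'_i + (n-i)\gamma$. The loss $\nu(\text{OPT}_{i-1}) - \nu(\text{OPT}_i)$ is nonzero only on the ``wrong'' coin flip at step $i$: with probability $b_i/(a_i + b_i)$ when $i \in \text{OPT}$ (the algorithm deletes $i$ from $Y$) and with probability $a_i/(a_i + b_i)$ when $i \notin \text{OPT}$.

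The key step is to bound the OPT-loss on the wrong flip using the approximate monotonicity of marginals obtained from iterated weak submodularity: for $S \subseteq T$ and $e \notin T$, $\nu(T \cup e) - \nu(T) \leq \nu(S \cup e) - \nu(S) + \abs{T \setminus S}\gamma$. Applied with $(S,T) = (X_{i-1}, \text{OPT}_{i-1} \setminus i)$ in the first case and with $(S,T) = (\text{OPT}_{i-1}, Y_{i-1} \setminus i)$ in the second---using that $\abs{Y_{i-1}} - \abs{X_{i-1}} = n - i + 1$ to bound $\abs{T \setminus S} \leq n - i$---this yields
\[
\nu(\text{OPT}_{i-1}) - \nu(\text{OPT}_{i-1} \setminus i) \leq a'_i + (n-i)\gamma \leq a_i, \qquad \nu(\text{OPT}_{i-1}) - \nu(\text{OPT}_{i-1} \cup i) \leq b'_i + (n-i)\gamma \leq b_i.
\]
Consequently, in either case the conditional expected OPT-loss is at most $a_i b_i/(a_i + b_i)$, while the conditional expected $(X, Y)$-gain equals $(a_i a'_i + b_i b'_i)/(a_i + b_i)$.

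In the generic regime $a_i, b_i > 0$, substituting $a'_i = a_i - (n-i)\gamma$ and $b'_i = b_i - (n-i)\gamma$ reduces the required inequality to $-(a_i - b_i)^2/[2(a_i + b_i)] \leq \tfrac{1}{4}(n-i)\gamma$, which holds trivially---this is the classical AM--GM step of the double-greedy analysis, plus a $\tfrac{1}{2}(n-i)\gamma$ correction absorbed into the $\tfrac{3}{4}(n-i)\gamma$ slack. The degenerate subcases where $a_i$ or $b_i$ vanishes are easy: the truncation forces the OPT-loss on the wrong flip to be nonpositive, while the right-hand side stays nonnegative. A prerequisite check is that $a_i + b_i > 0$ whenever $i < n$ so the sampling step is well-defined, which follows from $a'_i + b'_i \geq -(n-i)\gamma$---itself the same approximate-monotonicity inequality applied to $(X_{i-1}, Y_{i-1} \setminus i)$. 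The main obstacle is bookkeeping: verifying that the single buffer $\tfrac{3}{4}(n-i)\gamma$ simultaneously absorbs the $\tfrac{1}{2}(n-i)\gamma$ shift from converting $a_i, b_i$ back to $a'_i, b'_i$ in the main subcase and the slack required by each of the boundary subcases.
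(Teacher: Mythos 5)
Your proof is correct and takes essentially the same route as the paper's: condition on the history, split on whether $i \in \text{OPT}$, bound the expected drop in $\nu(\text{OPT}_i)$ on the ``wrong'' flip via the iterated weak-submodularity (approximate monotonicity of marginals) inequality with error $(n-i)\gamma$, and finish with $2a_ib_i \le a_i^2+b_i^2$. If anything your bookkeeping is the more consistent one: you keep the full $(n-i)\gamma$ offset in $a_i,b_i$ as defined in Algorithm~\ref{alg: double greedy weak submodular maximization}, whereas the paper's proof slips into a $(n-i)\gamma/2$ offset in its case analysis; both accountings arrive at the same $\tfrac{3}{4}(n-i)\gamma$ slack.
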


\begin{proof}
It suffices to prove the claim conditioning on the event that $X_{i-1} = A_{i-1} $ for any $A_{i-1}$ such that $X_{i-1} = A_{i-1} $ has non-zero probability. In particular therefore $A_{i-1} \subseteq [i-1]$. Fixing such an event we shall implicitly assume we have conditioned on this event for the rest of the proof. This means that the variables $X_{i-1}, Y_{i-1}, \text{OPT}_{i-1}, a_i$, and $b_i$ all become deterministic. 
First note that 

\begin{align*}
     \mathbb{E}[ \nu(X_{i} ) - \nu(X_{i-1})+ \nu(Y_{i} )-\nu(Y_{i-1} ) )] + (n-i)\gamma /2 \\
      &= \frac{a_i}{a_i + b_i} \bigg ( \nu(X_{i-1}\cup i ) - \nu(X_{i-1} )+ (n-i)\gamma/2 \bigg ) \\
  &+ \frac{b_i}{a_i + b_i} \bigg ( \nu(Y_{i-1}\setminus i ) - \nu(Y_{i-1}) + (n-i)\gamma /2 \bigg ) \\
  &= \frac{a_i^2 + b_i^2}{a_i + b_i}
\end{align*}

where the first equality follows by definition of the expectation. To see why the second equality holds first recall that due to $\gamma$-weak submodularity of $\nu$ we have, 
\begin{equation*}\label{eqn: weak submod expanded}
     \nu(X_{i-1} \cup i ) - \nu(X_{i-1})+ \nu(Y_{i-1} \setminus i )-\nu(Y_{i-1}) + (n-i) \gamma \geq 0,
\end{equation*}

where the coefficient in front of $\gamma$ is obtained by noting that $\abs{(Y_{i-1} \setminus i) \setminus X_{i-1}} = \abs{(Y_{i-1}  \setminus X_{i-1} \cup i} = n-i$. Now suppose that $\nu(X_{i-1}\cup i ) - \nu(X_{i-1} )+ (n-i)\gamma /2 < 0$. Then equation \ref{eqn: weak submod expanded} implies that we must have $\nu(Y_{i-1} \setminus i )-\nu(Y_{i-1}) + (n-i) \gamma /2 > 0$. This implies that $a_i =0$, $b_i > 0$ and the claimed equality is then readily seen to hold. Alternatively, if $\nu(Y_{i-1} \setminus i )-\nu(Y_{i-1}) + (n-i) \gamma /2 < 0$ . Then  $\nu(X_{i-1}\cup i ) - \nu(X_{i-1} )+ (n-i)\gamma /2 > 0$. This implies that $a_i >0$, $b_i =0 $ and once again the claimed equality holds. The remaining case, where both expressions are non-negative, is the simple case and also holds. 

Now let us bound the left hand side expression. We consider two cases. First suppose that $i \notin \text{OPT}$. Then $\text{OPT}_i = \text{OPT}_{i-1}\cup i$ if the updates were  $X_i \gets X_{i-1} \cup  i$, and $Y_i \gets Y_{i-1}$ and $\text{OPT}_i = \text{OPT}_{i-1}$ if the updates were $ X_i \gets X_{i-1} $, and $Y_i \gets Y_{i-1} \setminus i$. Hence,

\begin{align*}
    \mathbb{E}[\nu ( \text{OPT}_{i-1}) - \nu ( \text{OPT}_{i})] 
    &= \frac{a_i}{a_i + b_i} [\nu ( \text{OPT}_{i-1}) - \nu ( \text{OPT}_{i-1}\cup i)] \\
    &\leq  \frac{a_i}{a_i + b_i} [\nu (Y_{i-1}\setminus i) - \nu ( Y_{i-1}) + (n-i)\gamma] \\
    &\leq 
    \frac{a_ib_i}{a_i + b_i} + (n-i)\gamma /2.
\end{align*}

The first inequality follows due the weak submodularity and the fact that $(Y_{i-1}\setminus i ) \setminus \text{OPT}_{i-1} = Y_{i-1}\setminus ( \text{OPT}_{i-1} \cup i) \subseteq Y_{i-1}\setminus ( X_{i-1} \cup i) $ which has cardinality $n-i$. The second simply follows from the fact that  $\nu (Y_{i-1}\setminus i) - \nu ( Y_{i-1}) + (n-i)\gamma/2 \leq b_i$ and $a_i/(a_i + b_i) \leq 1$.

Alternatively, suppose $i \in \text{OPT}$. Then $\text{OPT}_i = \text{OPT}_{i-1} \setminus i$ if the updates were $ X_i \gets X_{i-1} $, and $Y_i \gets Y_{i-1} \setminus i$ and $\text{OPT}_i = \text{OPT}_{i-1}$ if the updates were $X_i \gets X_{i-1} \cup  i$, and $Y_i \gets Y_{i-1}$. So, 

\begin{align*}
    \mathbb{E}[\nu ( \text{OPT}_{i-1}) - \nu ( \text{OPT}_{i})] &= \frac{b_i}{a_i + b_i} [\nu ( \text{OPT}_{i-1}) - \nu ( \text{OPT}_{i-1}\setminus i)] \\
    &\leq  \frac{b_i}{a_i + b_i} [\nu (X_{i-1} \cup i) - \nu ( X_{i-1}) + (n-i)\gamma] \\
    &\leq 
    \frac{a_ib_i}{a_i + b_i} + (n-i)\gamma /2.
\end{align*}

The first inequality follows from weak submodularity since $(\text{OPT}_{i-1} \setminus i ) \subseteq Y_{i-1}\setminus ( X_{i-1} \cup i) $ which again has cardiality $n-i$. Since the same bound holds in either case we may now bound 

\begin{align*}
    \mathbb{E}[\nu ( \text{OPT}_{i-1}) - \nu ( \text{OPT}_{i})] &\leq  \frac{a_ib_i}{a_i + b_i} + (n-i)\gamma /2 \\
    &\leq \frac{1}{2}\frac{a_i^2 + b_i^2}{a_i + b_i}  + (n-i)\gamma /2 \\
    &= \frac{1}{2} \bigg \{ \mathbb{E}[ \nu(X_{i} ) - \nu(X_{i-1})+ \nu(Y_{i} )-\nu(Y_{i-1} ) )] + (n-i)\gamma /2 \bigg \}  + (n-i)\gamma /2 \\
    &= \frac{1}{2}\mathbb{E}[ \nu(X_{i} ) - \nu(X_{i-1})+ \nu(Y_{i} )-\nu(Y_{i-1} ) )]   + \frac{3}{4}(n-i)\gamma,
    \end{align*}
    
where the second inequality uses the fact that $2ab \leq a^2 + b^2$ for all $a,b \in \mathbb{R}$.

\end{proof}

\begin{proof}[Proof of Theorem \ref{thm: double greedy}]

Summing Lemma \ref{lemma: weak submodular building block lemma} over all $i$ yields,

\[ \sum_{i=1}^n \mathbb{E}[ \nu(\text{OPT}_{i-1} ) -  \nu(\text{OPT}_{i}) ] \leq \sum_{i=1}^n \bigg \{ \frac{1}{2}\mathbb{E}[ \nu(X_i) - \nu(X_{i-1}) + \nu(Y_i) - \nu(Y_{i-1}) ] + \frac{3}{4}(n-i)\gamma \bigg \}. \]

Both sides are telescoping sums. Recalling that $\text{OPT}_0 = \text{OPT}$ and $\text{OPT}_n = X_n = Y_n$ the sums collapse down to,

\begin{align*}
    \mathbb{E}[ \nu(\text{OPT})  -  \nu(X_n) ] &\leq  \mathbb{E}[ \nu(X_n) - \nu(X_{0}) + \nu(Y_n) - \nu(Y_{0}) ] +  \frac{3}{8}(n-1)n \gamma \\
    &\leq \frac{1}{2} \mathbb{E}[ \nu(X_n) + \nu(Y_n)] +  \frac{3}{8}(n-1)n \gamma \\
    &= \mathbb{E}[ \nu(X_n)] +  \frac{3}{8}(n-1)n \gamma,
\end{align*}

which rearranges to,

\[ \mathbb{E}[ \nu(X_n)] \geq  \frac{1}{2} \nu(\text{OPT})  - \frac{3}{16}(n-1)n \gamma .\]

\end{proof}

\subsection{Greedy guarantees for increasing weak submodular functions}

We call a set function $\nu $ \emph{increasing} if $\nu(S) \leq \nu(T)$ whenever $S \subseteq T $. We can greedily obtain an estimate for $ \max_{S : \abs{S} \leq k} \nu(S)$ by setting $S_0 = \varnothing$ and recursively computing $S_\ell $ by adding $\arg \max_{i \in [n]  }\nu(S_{\ell-1} \cup \{ i \})$ to $S_{\ell -1}$ for $\ell =1, \ldots , k$. For the remainder of this section we shall use $\text{OPT}$ to denode an element of the optimal solution set $\arg \max_{S : \abs{S} \leq k} \nu(S)$.

\begin{lemma}\label{lemma: weak submodular greedy result}
Let $\nu : 2^{[n]} \rightarrow \mathbb{R}_+$ be an increasing $\gamma$-weakly submodular function and $S_\ell \subseteq [n] $ be the set of size $\ell$ obtained by greedily optimizing $\nu$. Then for $k \leq n$ we have, 

\[ \nu (S_\ell) \geq (1 - e^{- \ell /k} ) \nu( \text{OPT}) - a\]

where $a = \frac{1}{2} k(k-1)  \bigg \{ 1 - \big ( 1 - \frac{1}{k}) ^\ell \bigg \} \gamma $. In particular, for $k = \ell$ we have,   \[\nu (S_k) \geq (1 - 1/e ) \nu( \text{OPT})  - a.\]
\end{lemma}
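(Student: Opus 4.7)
The plan is to follow the standard greedy analysis for monotone submodular maximization, carefully tracking how each application of weak (rather than strict) submodularity introduces an additive $\gamma$ error, then solving the resulting one-step recursion.

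First I would establish the following ``approximate monotone submodularity'' inequality: for any $S\subseteq [n]$,
\[
\nu(\mathrm{OPT}) - \nu(S) \;\leq\; \sum_{e\in\mathrm{OPT}} \nu(e\mid S) + \tfrac{1}{2}k(k-1)\gamma.
\]
The idea is to enumerate $\mathrm{OPT}\setminus S = \{o_1,\dots,o_r\}$ with $r\leq k$ and telescope
\[
\nu(\mathrm{OPT}\cup S) - \nu(S) \;=\; \sum_{j=1}^{r}\bigl[\nu(S\cup\{o_1,\dots,o_j\}) - \nu(S\cup\{o_1,\dots,o_{j-1}\})\bigr].
\]
From the defining inequality of $\gamma$-weak submodularity, rearranged as $\nu(e\mid T\cup f) \leq \nu(e\mid T) + \gamma$, I would iterate $(j-1)$ times to obtain $\nu(o_j\mid S\cup\{o_1,\dots,o_{j-1}\}) \leq \nu(o_j\mid S) + (j-1)\gamma$. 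Summing gives the sum $\sum_j \nu(o_j\mid S)$ plus $\sum_{j=1}^{r}(j-1)\gamma = \tfrac{1}{2}r(r-1)\gamma$; monotonicity of $\nu$ then yields $\nu(\mathrm{OPT})\leq \nu(\mathrm{OPT}\cup S)$, and we can extend the sum over all of $\mathrm{OPT}$ since $\nu(e\mid S)\geq 0$ by monotonicity.

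Next, I would apply this to $S=S_i$ and use the greedy rule: since $S_{i+1}\setminus S_i = \{\arg\max_e \nu(e\mid S_i)\}$ and $|\mathrm{OPT}|\leq k$,
\[
\nu(S_{i+1}) - \nu(S_i) \;\geq\; \frac{1}{k}\sum_{e\in\mathrm{OPT}}\nu(e\mid S_i) \;\geq\; \frac{1}{k}\bigl[\nu(\mathrm{OPT}) - \nu(S_i) - \tfrac{1}{2}k(k-1)\gamma\bigr].
\]
Letting $\delta_i := \nu(\mathrm{OPT}) - \nu(S_i)$, this rearranges to the affine recursion
\[
\delta_{i+1} \;\leq\; \Bigl(1-\tfrac{1}{k}\Bigr)\delta_i + \tfrac{1}{2}(k-1)\gamma.
\]

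Finally, unrolling the recursion from $\delta_0 = \nu(\mathrm{OPT})$ and summing the geometric series gives
\[
\delta_\ell \;\leq\; \Bigl(1-\tfrac{1}{k}\Bigr)^{\!\ell}\nu(\mathrm{OPT}) + \tfrac{1}{2}k(k-1)\gamma\Bigl\{1 - \bigl(1-\tfrac{1}{k}\bigr)^{\ell}\Bigr\},
\]
and the bound $(1-1/k)^\ell \leq e^{-\ell/k}$ converts the first term into the claimed $(1-e^{-\ell/k})\nu(\mathrm{OPT})$. The case $\ell=k$ follows immediately. The only genuinely delicate step is the first one, since weak submodularity only controls single-element perturbations, and one must argue that the error accumulates quadratically (not exponentially) in $k$ under telescoping; the rest is a routine recursion.
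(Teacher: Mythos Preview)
Your proposal is correct and follows essentially the same route as the paper: telescope $\nu(\mathrm{OPT}\cup S_i)-\nu(S_i)$, peel off one element at a time using weak submodularity to accumulate a $\tfrac{1}{2}k(k-1)\gamma$ error, invoke the greedy choice to get the recursion $\delta_{i+1}\le (1-1/k)\delta_i + \tfrac{1}{2}(k-1)\gamma$, and unroll. The only cosmetic difference is that the paper enumerates all of $\mathrm{OPT}$ from the start rather than $\mathrm{OPT}\setminus S_i$, and one should note $\delta_0 = \nu(\mathrm{OPT})-\nu(\varnothing)\le \nu(\mathrm{OPT})$ rather than equality, but this is immediate from $\nu\ge 0$.
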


The bound is not vacuous so long as $  \nu( \text{OPT}) = \max_{S : \abs{S} \leq k} \nu(S)$ grows as $O(k^2)$.

\begin{proof}[Proof of Lemma \ref{lemma: weak submodular greedy result}]
Enumerate $\text{OPT} = \{ v_1, \ldots , v_k \}$. Then for each $i < \ell$, 

\begin{align}
    \nu(\text{OPT} ) & \leq \nu(\text{OPT} \cup S_i ) \label{eqn: submod mononton} \\
              &= \nu(S_i) + \sum_{j=1}^k  \bigg \{ \nu(S_i \cup  \{ v_1, \ldots , v_j \} ) - \nu(S_i \cup  \{ v_1, \ldots , v_{j-1}  \} ) \bigg \} \label{eqn: telescop sum}\\
              & =\nu(S_i) + \sum_{j=1}^k  \nu(v_j \mid S_i \cup \{ v_1, \ldots , v_{j-1} \})  \\
              & \leq \nu(S_i) + \sum_{j=1}^k \bigg \{ \nu(v_j \mid S_i  ) + (j-1) \log \gamma \bigg \} \label{eqn: bound greedy monotone} \\
              &\leq \nu(S_i) + \sum_{j=1}^k \big \{ \nu(S_{i+1}) - \nu(S_i) \big \} + \frac{1}{2} k (k-1) \log \gamma \label{eqn: use max} \\
              &\leq \nu(S_i) + k \big ( \nu(S_{i+1}) - \nu(S_i) \big ) + \frac{1}{2} k (k-1) \log \gamma
\end{align}

where $(\ref{eqn: submod mononton} )$ follows from monotonoicity of $\nu$, $(\ref{eqn: telescop sum})$ is a telescoping sum, $(\ref{eqn: bound greedy monotone})$ is obtained by bounding each term in the sum using Theorem \ref{thm: weak submodular property}, and $(\ref{eqn: use max} ) $ uses the fact that $S_{i+1}$ attains the maximal value of $\nu$ over all sets of size $i+1$. Defining $\delta_i = \nu(\text{OPT}) - \nu(S_i)$ we have therefore obtained $\delta_i \leq k( \delta_i - \delta_{i+1} ) + \frac{1}{2} k(k-1) \log  \gamma$. This rearranges to 

\[\delta_{i+1} \leq \bigg ( 1 - \frac{1}{k} \bigg ) \delta_i + \frac{1}{2} (k-1) \log  \gamma. \]

Unrolling this recursive relation we find that,

\begin{align*}
    \delta_\ell &\leq \bigg ( 1 - \frac{1}{k} \bigg )^\ell \delta_0 + \frac{1}{2} (k-1)  \sum_{i=0}^{\ell -1} \bigg ( 1 - \frac{1}{k} \bigg ) ^i \log  \gamma\\
    &\leq  e^{ - \ell / k} \delta_0 + \frac{1}{2} k(k-1)  \bigg \{ 1 - \big ( 1 - \frac{1}{k}) ^\ell \bigg \}\log  \gamma .\\
\end{align*}

Substituting back in the fact that $\delta_i = \nu(\text{OPT}) - \nu(S_i)$ and that $\nu( \varnothing ) \geq 0 $ and rearranging obtains the result.

\end{proof}

\begin{cor}\label{corollary: weak log-submodular greedy result}
Let $\nu : 2^{[n]} \rightarrow \mathbb{R}$ be an increasing $\gamma$-weakly log-submodular function and $S_\ell \subseteq [n] $ be the set of size $\ell$ obtained by greedily optimizing $\nu$. Then for $k \leq n$ we have, 

\[ \nu (S_\ell) \geq e^{-a}  \nu( \text{OPT}) ^{(1 - e^{- \ell /k} ) } \]

where $a = \frac{1}{2} k(k-1)  \bigg \{ 1 - \big ( 1 - \frac{1}{k}) ^\ell \bigg \}\log  \gamma $. In particular, for $k = \ell$ we have,   \[\nu (S_k) \geq e^{-a}   \nu( \text{OPT})^{(1 - 1/e ) }.\]
\end{cor}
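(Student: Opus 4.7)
The plan is to obtain this multiplicative corollary by a direct reduction to the additive statement of Lemma \ref{lemma: weak submodular greedy result}, applied to the function $\log \nu$ rather than $\nu$ itself.

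First I would observe that the greedy procedure is invariant under composition with a strictly increasing function: because $\log : \mathbb{R}_+ \to \mathbb{R}$ is strictly increasing, an element $v \in [n]$ maximizes $\nu(S_{i-1} \cup \{v\})$ if and only if it maximizes $\log \nu(S_{i-1} \cup \{v\})$. Hence the sets $S_0, S_1, \ldots, S_k$ produced by greedy on $\nu$ coincide with those produced by greedy on $\log \nu$. Next I would check that $\log \nu$ satisfies the hypotheses of Lemma \ref{lemma: weak submodular greedy result} with the parameter $\gamma$ of that lemma replaced by $\log \gamma$: monotonicity of $\log \nu$ follows from monotonicity of $\nu$ and of $\log$, while $(\log \gamma)$-weak submodularity of $\log \nu$ is exactly the definition of $\gamma$-weak log-submodularity of $\nu$.

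Having verified these hypotheses, I would apply Lemma \ref{lemma: weak submodular greedy result} to $\log \nu$. This immediately yields
\[ \log \nu(S_\ell) \;\geq\; (1 - e^{-\ell/k}) \log \nu(\mathrm{OPT}) - a, \]
with $a = \tfrac{1}{2} k(k-1)\{1-(1-1/k)^\ell\}\log \gamma$, where $\mathrm{OPT}$ remains an element of $\arg\max_{|S|\leq k} \nu(S) = \arg\max_{|S|\leq k} \log \nu(S)$. Exponentiating both sides of this inequality then gives the claimed bound
\[ \nu(S_\ell) \;\geq\; e^{-a}\, \nu(\mathrm{OPT})^{1 - e^{-\ell/k}}, \]
and specializing to $\ell = k$ recovers the $(1 - 1/e)$ exponent in the second displayed claim.

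There is essentially no substantive obstacle in this argument: once the correspondence between greedy on $\nu$ and greedy on $\log \nu$ is noted and the translation of the weak log-submodularity hypothesis is made, the corollary is just the exponentiated form of Lemma \ref{lemma: weak submodular greedy result}. The only minor care needed is to ensure $\nu$ is strictly positive on the relevant sets so that $\log \nu$ is well defined; this is implicit in the weakly log-submodular setting (where we work with $\nu : 2^{[n]} \to \mathbb{R}_+$) and can be assumed without loss of generality.
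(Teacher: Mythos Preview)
Your proposal is correct and matches the paper's intended approach: the paper gives no explicit proof for this corollary, placing it immediately after Lemma~\ref{lemma: weak submodular greedy result} precisely because it follows by applying that lemma to $\log \nu$ (which is increasing and $(\log\gamma)$-weakly submodular) and then exponentiating. The only small technicality you might flag is that Lemma~\ref{lemma: weak submodular greedy result} uses $\nu(\varnothing)\ge 0$ in its final step, so one implicitly needs $\nu(\varnothing)\ge 1$ (or a harmless rescaling) for $\log\nu$ to satisfy that hypothesis; the paper glosses over this as well.
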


\section{Further experimental results}\label{appendix: experiments}

In this section we report the empirical mixing time results for different spectra on the positive semi-definite matrix $L$ as described in Section \ref{section: experiments}.

\begin{figure}[H]
\centering
\textbf{(ii) $L$ has one big eigenvalue} \par\medskip
\begin{subfigure}{.5\textwidth}
  \centering
  \includegraphics[width=1.0\linewidth]{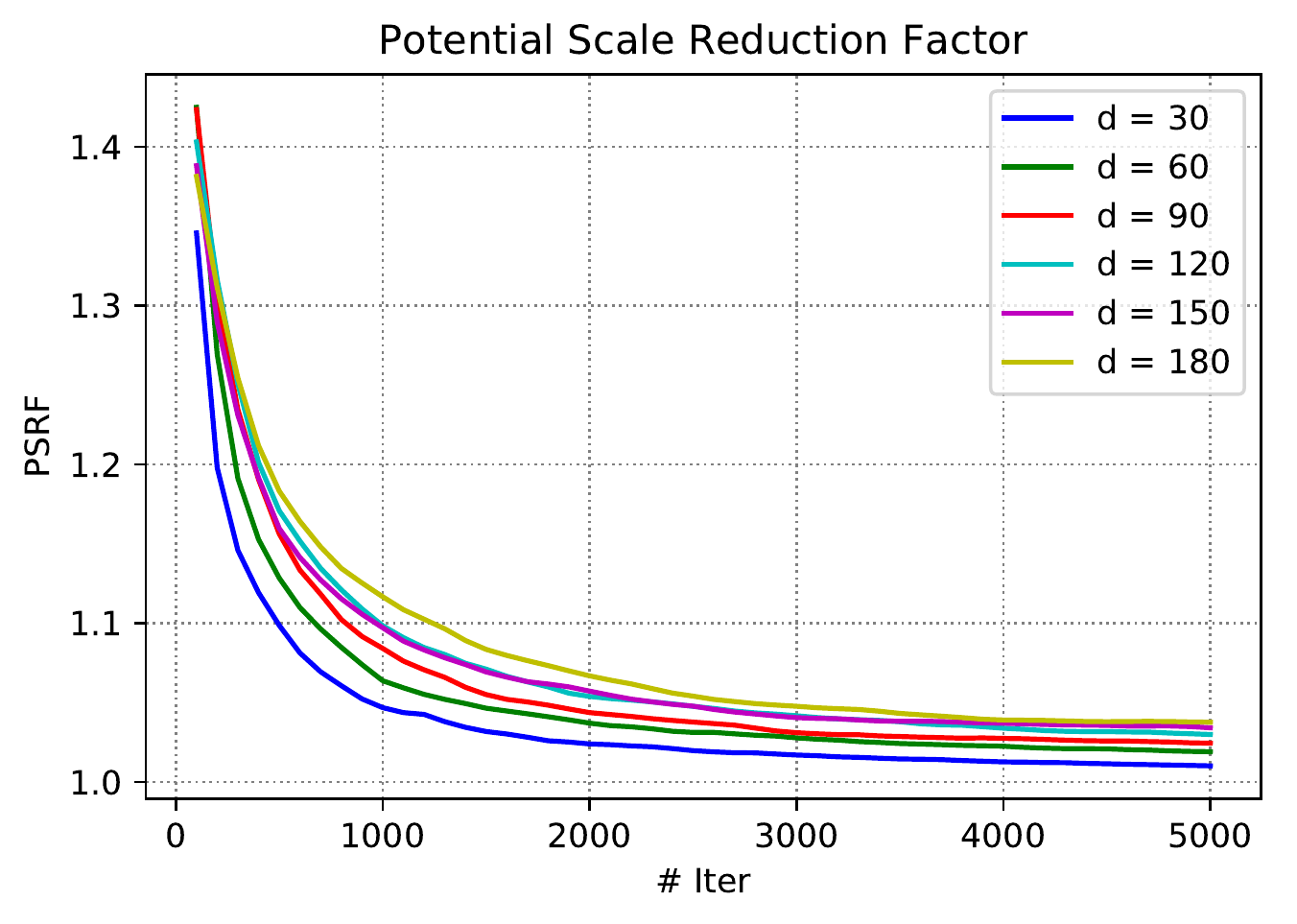}
  \caption{}
\end{subfigure}%
\begin{subfigure}{.5\textwidth}
  \centering
  \includegraphics[width=1.0\linewidth]{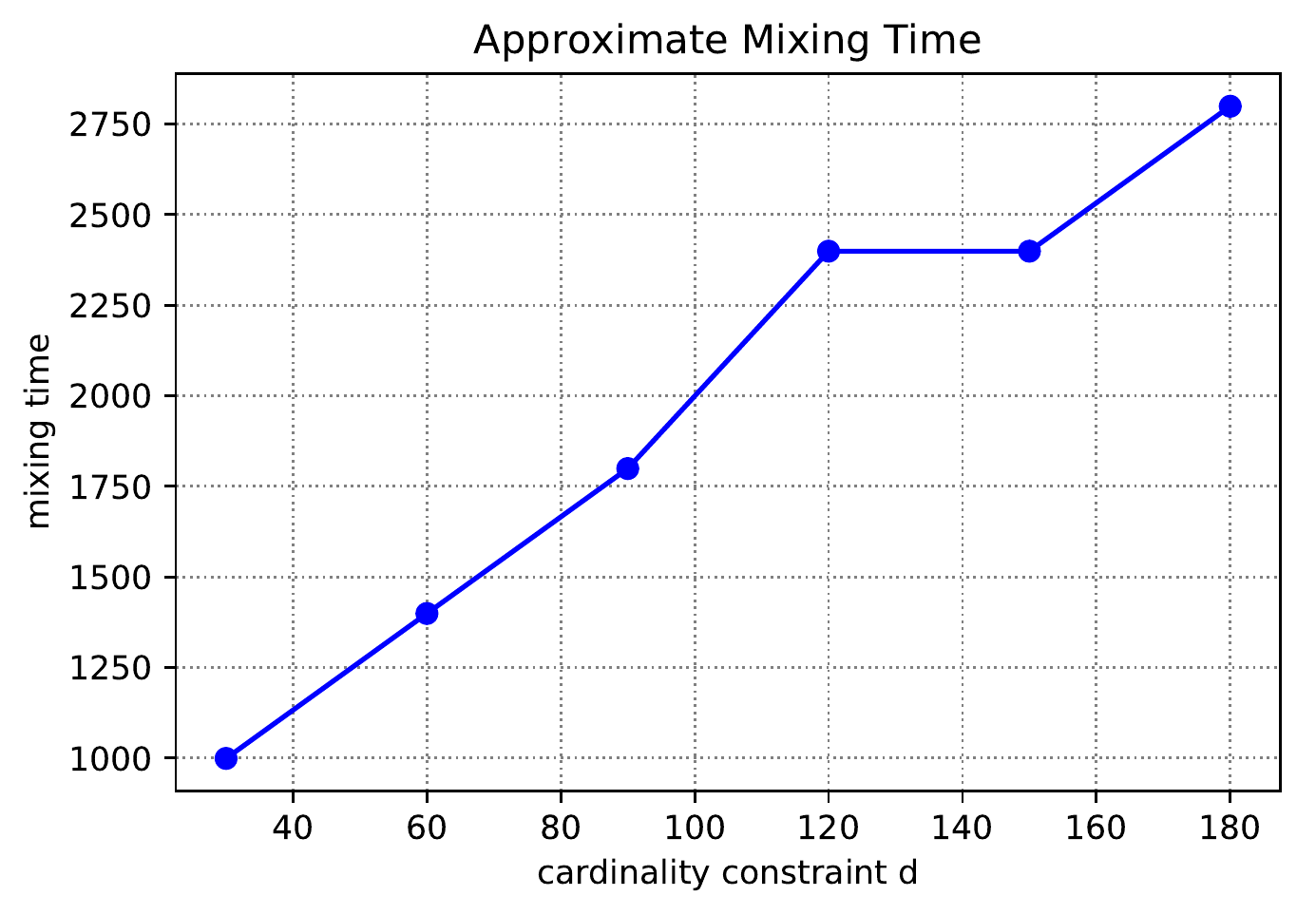}
  \caption{}
\end{subfigure}
\caption{Empirical mixing time analysis for sampling a ground set of size $n = 250$ and various cardinality constraints $d$, (a) the PSRF score for each set of chains, (b) the approximate mixing time obtained by thresholding at PSRF equal to $1.05$.}
\label{fig: PSRF and mix time varying d one big eval}
\end{figure}

\begin{figure}[H]
\centering
\textbf{(ii) $L$ has one big eigenvalue} \par\medskip
\begin{subfigure}{.5\textwidth}
  \centering
  \includegraphics[width=1.0\linewidth]{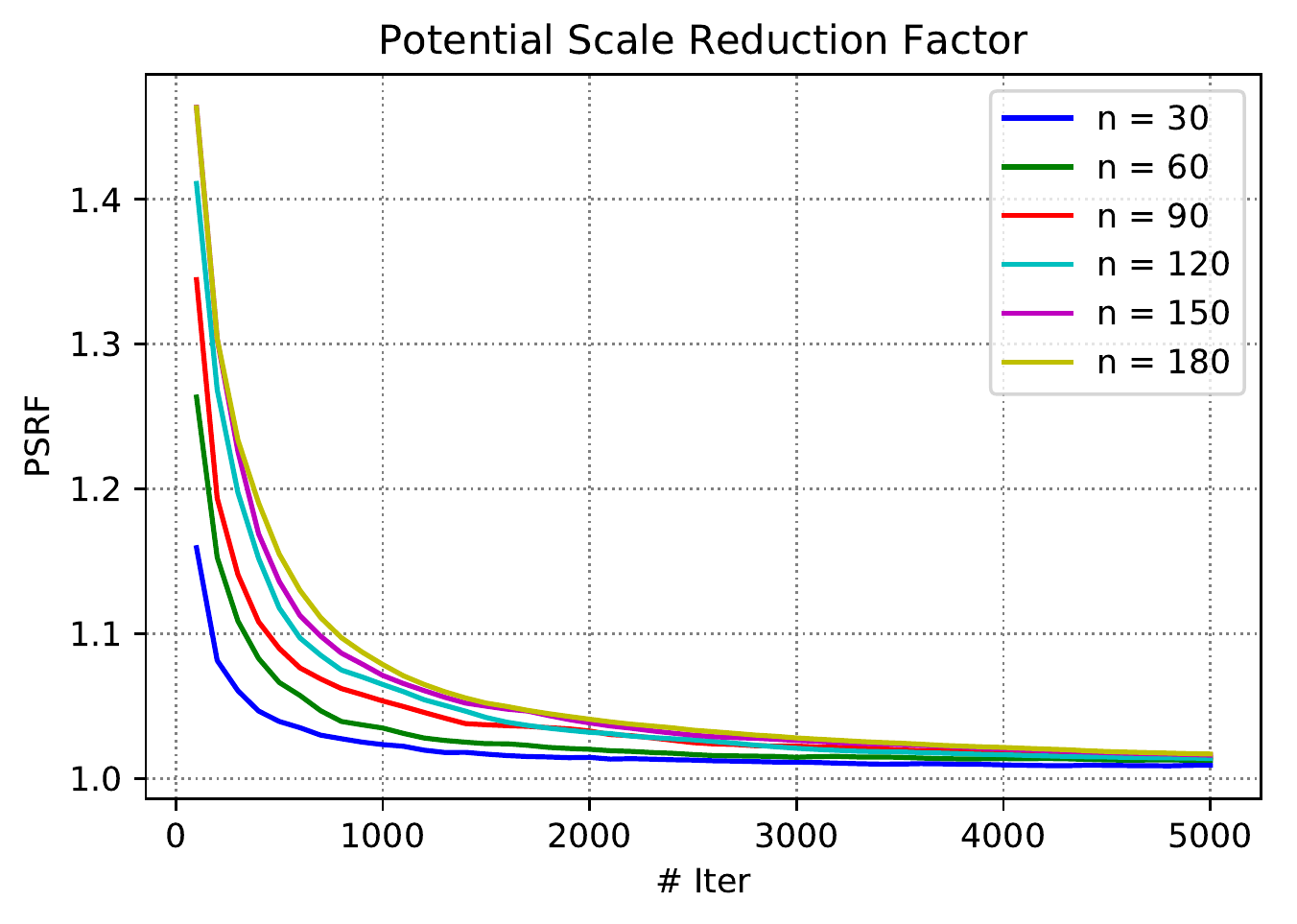}
  \caption{}
\end{subfigure}%
\begin{subfigure}{.5\textwidth}
  \centering
  \includegraphics[width=1.0\linewidth]{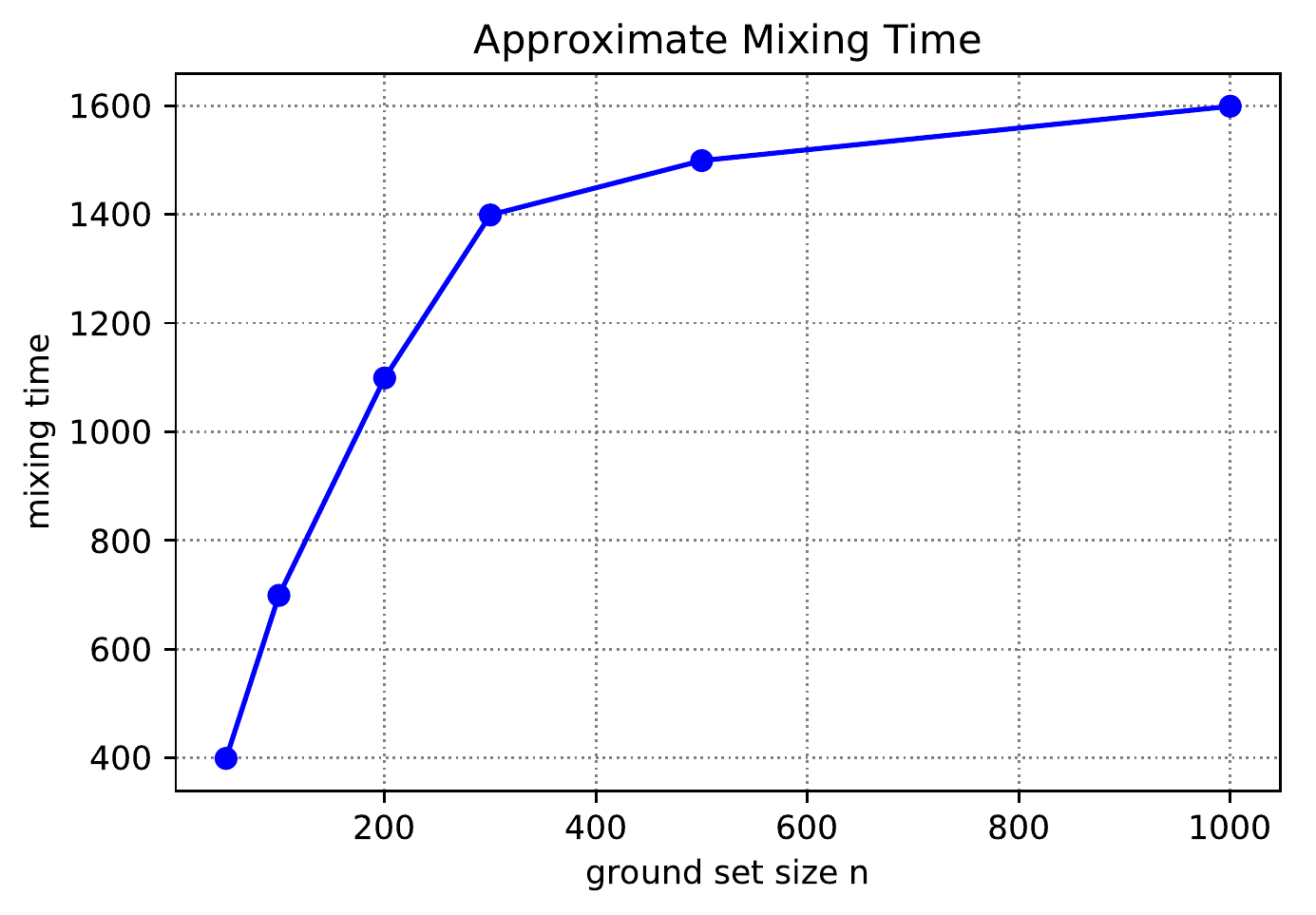}
  \caption{}
\end{subfigure}
\caption{Empirical mixing time analysis for sampling a set of size at most $d = 40$ for varying ground set sizes, (a) the PSRF score for each set of chains, (b) the approximate mixing time obtained by thresholding at PSRF equal to $1.05$.}
\label{fig: PSRF and mix time varying n one big eval}
\end{figure}

\begin{figure}[H]
\centering
\textbf{(iii) $L$ has a step in its spectrum} \par\medskip
\begin{subfigure}{.5\textwidth}
  \centering
  \includegraphics[width=1.0\linewidth]{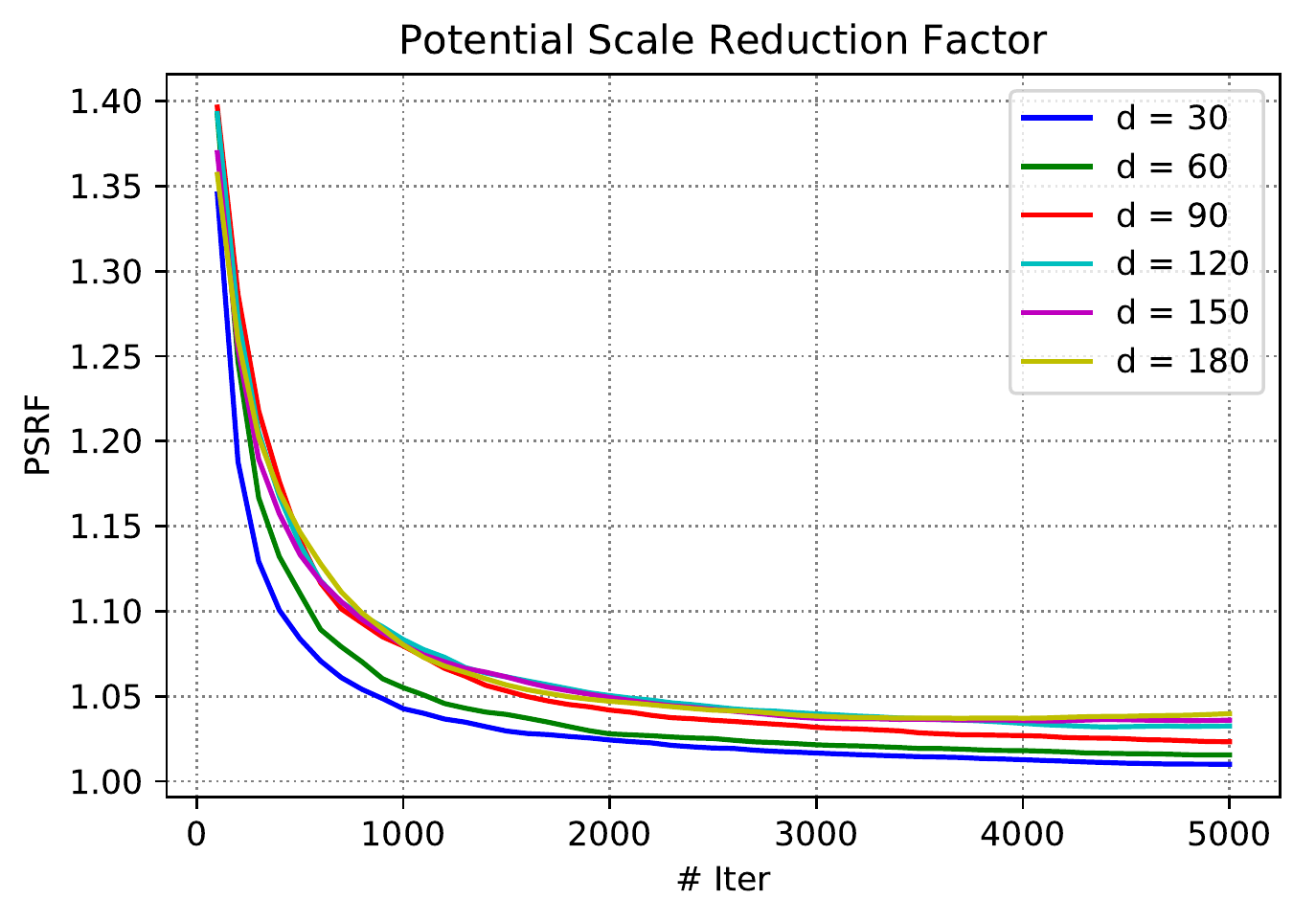}
  \caption{}
\end{subfigure}%
\begin{subfigure}{.5\textwidth}
  \centering
  \includegraphics[width=1.0\linewidth]{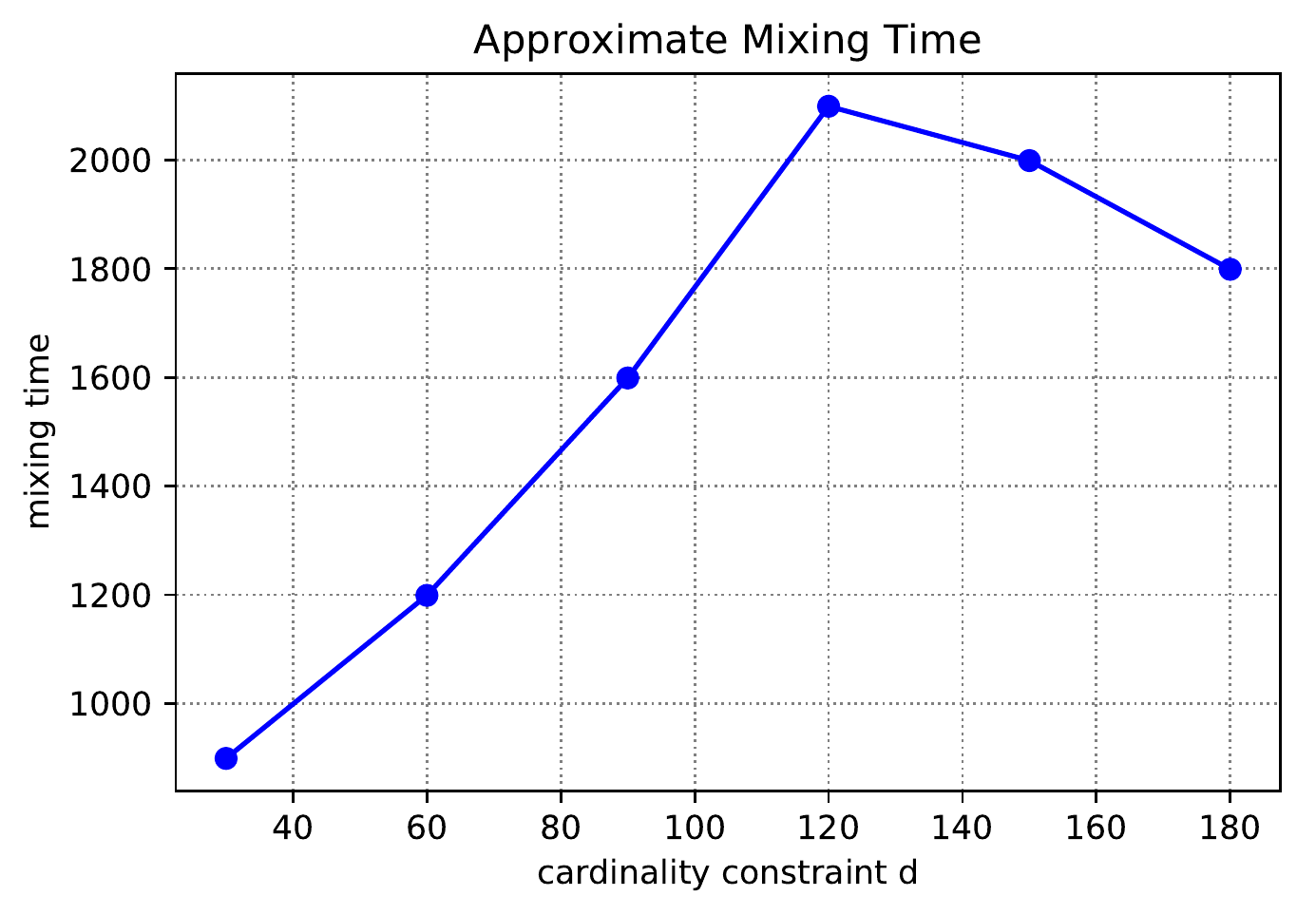}
  \caption{}
\end{subfigure}
\caption{Empirical mixing time analysis for sampling a ground set of size $n = 250$ and various cardinality constraints $d$, (a) the PSRF score for each set of chains, (b) the approximate mixing time obtained by thresholding at PSRF equal to $1.05$.}
\label{fig: PSRF and mix time varying d step spectrum}
\end{figure}

\begin{figure}[H]
\centering
\textbf{(iii) $L$ has a step in its spectrum} \par\medskip
\begin{subfigure}{.5\textwidth}
  \centering
  \includegraphics[width=1.0\linewidth]{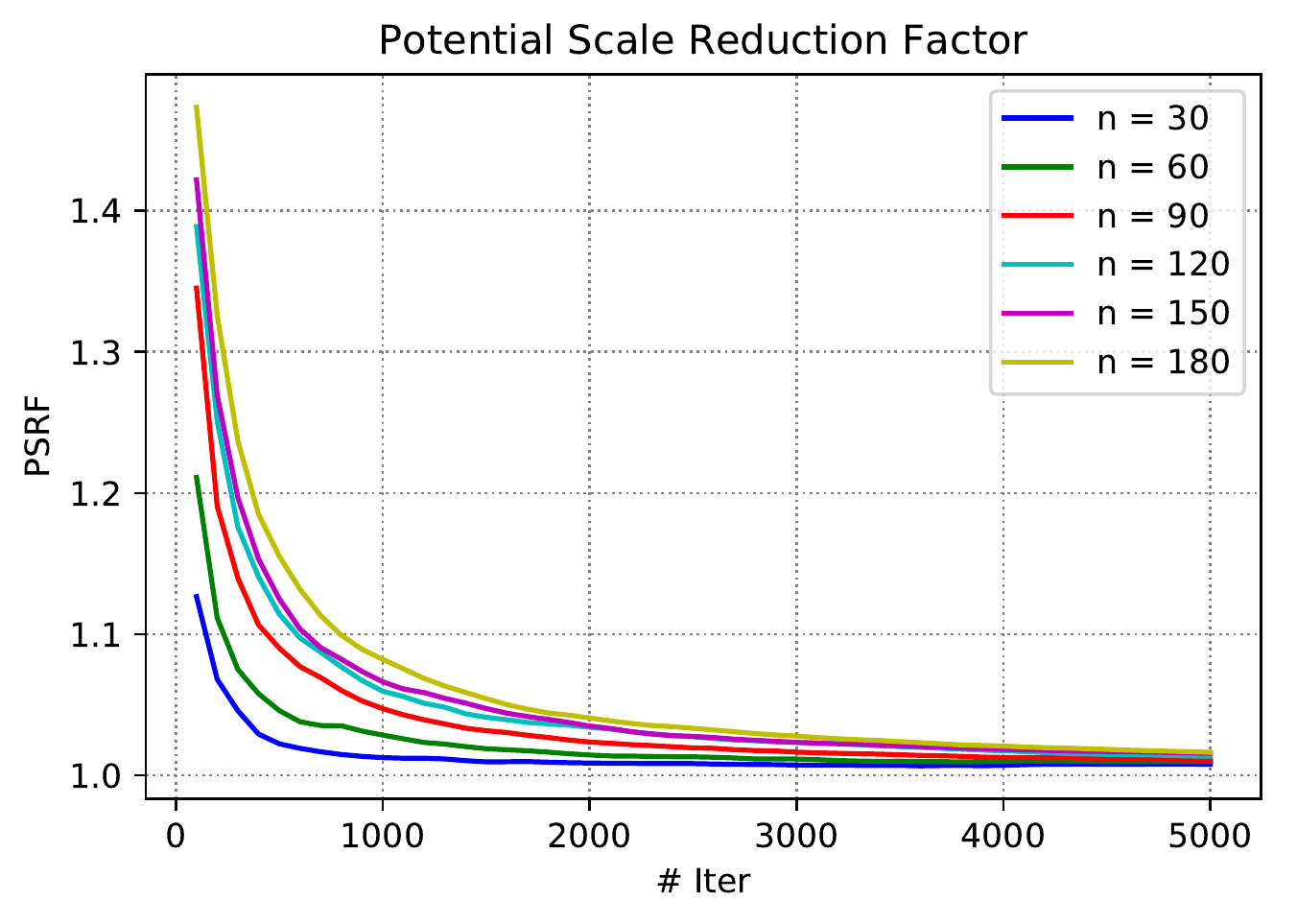}
  \caption{}
\end{subfigure}%
\begin{subfigure}{.5\textwidth}
  \centering
  \includegraphics[width=1.0\linewidth]{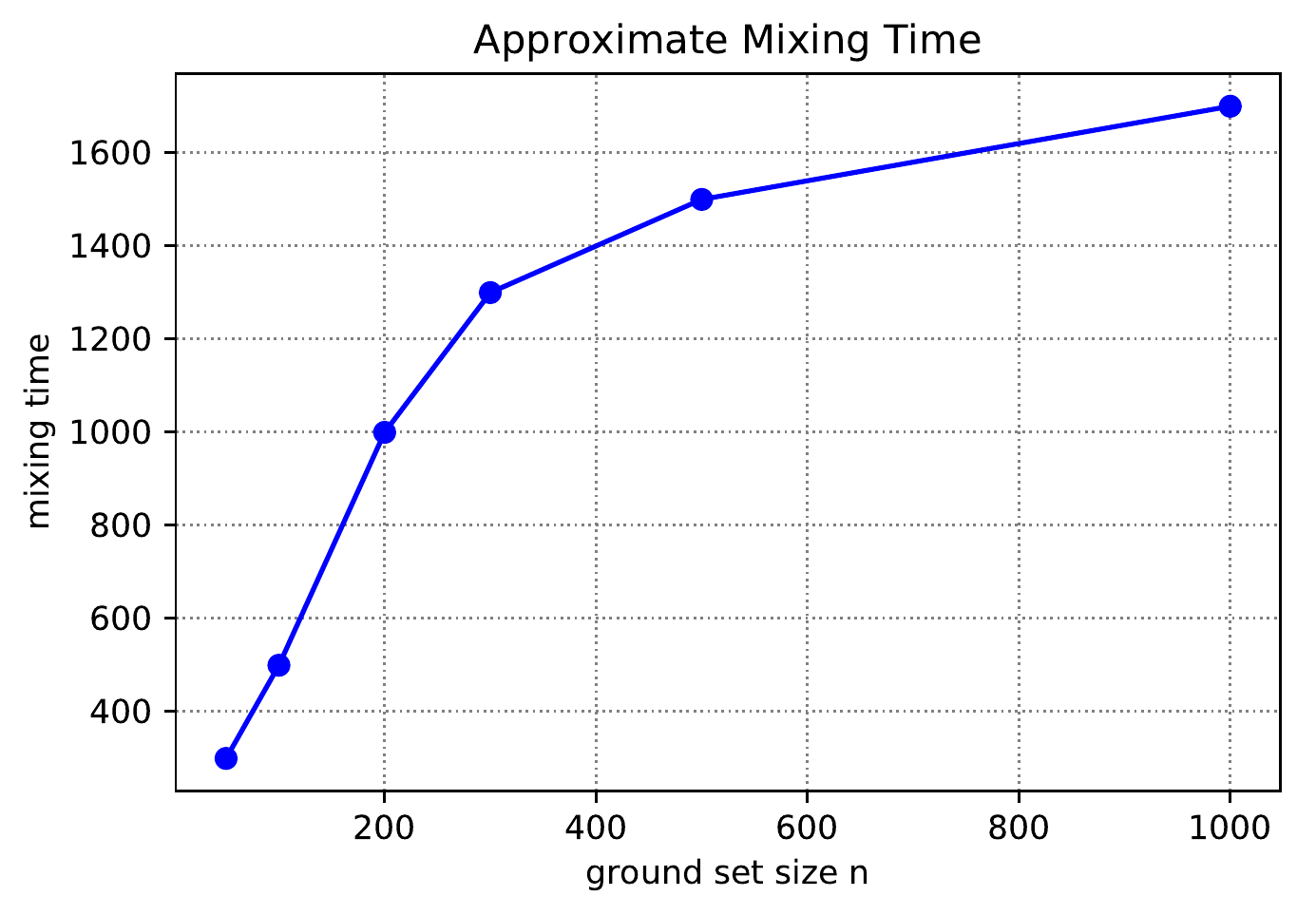}
  \caption{}
\end{subfigure}
\caption{Empirical mixing time analysis for sampling a set of size at most $d = 40$ for varying ground set sizes, (a) the PSRF score for each set of chains, (b) the approximate mixing time obtained by thresholding at PSRF equal to $1.05$.}
\label{fig: PSRF and mix time varying n step spectrum}
\end{figure}

\section{Supplementary material on strongly log-concave polynomials and log-Sobolev inequalities}\label{appendix: supplementary}

\subsection{Some theory for strongly log-concave polynomials}
In the interests of completeness this section recalls some recent results that we rely on for our analysis. We shall call a polynomial $f$ \emph{indecomposable} if it cannot be written as a sum $f = f_1 + f_2$ where $f_1, f_2$ are non-zero polynomials in disjoint sets of variables. In other words, the graph with nodes $\{ i \mid \partial_i f \neq 0 \}$ and edges $\{ (i,j) \mid \partial_i \partial_j f \neq 0 \} $ is connected. To state the next theorem we must introduce some more notation: for $\alpha \in \mathbb{N}^n$  define $\abs{\alpha} = \sum_{i=1}^n \alpha_i$.

 The following theorem provides checkable conditions for proving a polynomial is SLC.

\begin{theorem}\label{thm: anari log-concave characterization}\cite{anari2018log3}
Let $f \in \mathbb{R}_+[z_1, \ldots , z_n]$ be homogeneous and of degree $d \geq 2$. If the following two conditions hold, then $f$ is SLC, 

\begin{enumerate}
    \item For all $\alpha \in \mathbb{N}^n$ with $\abs{\alpha} \leq d-2$ the polynomial $\partial^\alpha f $ is indecomposable.
    \item For all $\alpha \in \mathbb{N}^n$ with $\abs{\alpha} = d-2$ the polynomial $\partial^\alpha f $ is log-concave on $\mathbb{R}_+^n$.
\end{enumerate}
\end{theorem}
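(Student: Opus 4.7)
The plan is downward induction on the order $m := |\alpha|$ of the derivative. For $m \geq d-1$, $\partial^\alpha f$ is affine or constant and log-concavity is automatic, so the task reduces to the range $0 \leq m \leq d-2$. The base case $m = d-2$ is exactly condition (2). The inductive step is: assuming $\partial^\beta f$ is log-concave on $\mathbb{R}_+^n$ for every $\beta$ with $|\beta| = m+1$, prove the same for every $\alpha$ with $|\alpha| = m$, where $m \leq d-3$.

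Fixing such $\alpha$, set $g := \partial^\alpha f$, a homogeneous polynomial of degree $k := d-m \geq 3$ with non-negative coefficients. Condition (1) gives that $g$ is indecomposable, and the inductive hypothesis gives that every non-zero $\partial_i g = \partial^{\alpha+e_i}f$ is log-concave on $\mathbb{R}_+^n$. Hence the inductive step reduces to the following sub-lemma: \emph{if $g \in \mathbb{R}_+[z_1, \ldots, z_n]$ is homogeneous of degree $\geq 3$, indecomposable, and each $\partial_i g$ is log-concave on $\mathbb{R}_+^n$, then $g$ is itself log-concave on $\mathbb{R}_+^n$.}

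To prove this sub-lemma I would use the standard equivalence (Lemma \ref{lemma: log conc iff one pos eval}) that for a homogeneous polynomial with non-negative coefficients, log-concavity on $\mathbb{R}_+^n$ is equivalent to $\nabla^2 g(x)$ having at most one positive eigenvalue at every $x \in \mathbb{R}_+^n$. Fix $x \in \mathbb{R}_{>0}^n$ and let $A := \nabla^2 g(x)$. Then $A$ is entry-wise non-negative, and indecomposability of $g$ combined with $x > 0$ makes $A$ irreducible on its support. By Perron-Frobenius, $A$ has a unique simple positive Perron eigenvalue $\lambda_1$ with strictly positive eigenvector, and Euler's identity gives $x^\top A x = k(k-1)g(x) > 0$, confirming $\lambda_1 > 0$. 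To rule out a second positive eigenvalue, I would exploit the Euler-type identity $\sum_i x_i \, \partial_i A = (k-2) A$ (valid because $\nabla^2 g$ is homogeneous of degree $k-2 \geq 1$) together with the fact that each $\partial_i A = \nabla^2(\partial_i g)(x)$ has at most one positive eigenvalue by the inductive hypothesis.

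The main obstacle will be this last matrix-theoretic step: transferring the ``at most one positive eigenvalue'' property from the family $\{\partial_i A\}$ to the convex combination $A$. A positive combination of matrices each having one positive eigenvalue can in general have many positive eigenvalues, so the irreducibility of $A$ (coming from indecomposability) must play an essential role. The natural route is a Perron-Frobenius / Rayleigh-quotient argument: any vector $w$ orthogonal to the Perron eigenvector of $A$ with $w^\top A w > 0$ would force $\sum_i x_i \, w^\top (\partial_i A) w > 0$, so some $\partial_i A$ would admit a two-dimensional positive-definite subspace (jointly with its own Perron direction), contradicting the hypothesis. Carrying out this reduction cleanly, while controlling the interaction between the Perron eigenvector of $A$ and those of the $\partial_i A$, is where I expect the real technical work to lie.
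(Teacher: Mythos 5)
First, note that this theorem is not proved in the paper at all: it is quoted in the appendix as background material from \cite{anari2018log3} (see also \cite{branden2019lorentzian}), so there is no in-paper proof to compare against. Your high-level plan does match the structure of the proof in those references: downward induction on $|\alpha|$, with condition (2) as the base case and the inductive step reduced to the ``local-to-global'' sub-lemma that an indecomposable homogeneous $g$ of degree $k\geq 3$ with all $\partial_i g$ log-concave is itself log-concave.

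The genuine gap is in your proposed proof of that sub-lemma, precisely at the step you flag. From $w^\top A w>0$ with $w$ orthogonal to the Perron vector of $A$ you deduce $w^\top (\partial_i A) w>0$ for some $i$, and then claim this forces $\partial_i A$ to be positive definite on the two-dimensional span of $w$ and its own Perron direction. That implication is false: a symmetric matrix with exactly one positive eigenvalue has an entire open cone of vectors with positive Rayleigh quotient (take $A_i=\mathrm{diag}(1,-1)$, Perron direction $e_1$, and $w=(2,1)^\top$, so $w^\top A_i w=3>0$ yet $A_i$ is indefinite on $\mathrm{span}(e_1,w)$). So no contradiction is obtained, and the argument does not rule out a second positive eigenvalue of $A$. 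The mechanism used in \cite{anari2018log3,branden2019lorentzian} is different: ``one positive eigenvalue'' for $\nabla^2\partial_i g$ is first converted (via the equivalence in Lemma~\ref{lemma: log conc iff one pos eval}) into the rank-one domination $\nabla^2\partial_i g(x)\preceq \nabla\partial_i g(x)\,\nabla\partial_i g(x)^\top/\partial_i g(x)$; summing these with weights $x_i$ and using Euler's identity gives the self-referential inequality $(k-2)\,Q\preceq Q D Q$ with $Q=\nabla^2 g(x)$ and $D=\mathrm{diag}(x_i/\partial_i g(x))$, which shows that the spectrum of $D^{1/2}QD^{1/2}$ avoids the open interval $(0,k-2)$. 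Even then one is not done: the forbidden gap together with Perron--Frobenius (applied to the associated reversible random-walk matrix, which is where indecomposability enters) still leaves possible positive eigenvalues in $[k-2,k-1]$, and excluding them requires an additional continuity/deformation or inductive argument that in \cite{branden2019lorentzian} leans on the M-convexity of the support. So the missing ideas are (i) replacing the eigenvalue count of each $\partial_i A$ by the equivalent rank-one matrix domination before summing, and (ii) the spectral-gap-plus-connectedness argument that actually pins the number of positive eigenvalues of $A$ at one; neither is supplied by the Rayleigh-quotient contradiction you sketch.
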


\begin{theorem} \label{thm: Huh weak log-submodularity results}\cite{branden2019lorentzian}
If $f$ homogeneous and SLC then, 

\[ f(z) \partial_i \partial_j f(z) \leq 2 \bigg ( 1 - \frac{1}{d} \bigg) \partial_i f(z) \partial_j f(z) \]

for all $z \in \mathbb{R}_+^n$.

\end{theorem}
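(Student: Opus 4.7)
The plan is to exploit the ``Lorentzian signature'' that follows directly from strong log-concavity: at every $z$ in the open positive orthant with $f(z)>0$, the Hessian $A = \nabla^2 f(z)$ has at most one positive eigenvalue. This is immediate from the $\alpha=0$ case of the SLC definition: log-concavity of $f$ gives $\nabla^2 \log f \preceq 0$, equivalently $\nabla^2 f \preceq \frac{1}{f}(\nabla f)(\nabla f)^\top$, which is a rank-one PSD matrix. Hence $A$ is negative semidefinite on the $(n-1)$-dimensional subspace $(\nabla f)^\perp$, and Cauchy interlacing forces $A$ to have at most one positive eigenvalue.

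Combined with Euler's identities $Az = (d-1)\nabla f(z)$ and $z^\top A z = d(d-1) f(z)$ for homogeneous $f$ of degree $d$, this signature property unlocks a \emph{reverse} Cauchy--Schwarz inequality: whenever $v^\top A v > 0$, one has $(v^\top A w)^2 \geq (v^\top A v)(w^\top A w)$ for every $w$. The proof is a one-liner: decompose $w$ into its component along $v$ in the $A$-inner product plus an $A$-orthogonal remainder $u$; because $A$ acts non-positively on the orthogonal complement of the unique positive eigendirection and $v$ has non-zero component along that direction, $u^\top A u \leq 0$, and the identity $w^\top A w = (v^\top A w)^2/(v^\top A v) + u^\top A u$ closes the argument. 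Taking $v = z$ is the natural choice since $z^\top A z = d(d-1)f(z) > 0$ on the interior when $f(z)>0$.

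The key step is to apply this to a cleverly chosen $w$. I would take $w = \partial_i f(z)^{-1}\,e_i + \partial_j f(z)^{-1}\,e_j$ at an interior point $z \in \mathbb{R}^n_{>0}$, where $\partial_i f(z)$ and $\partial_j f(z)$ are automatically positive whenever the corresponding polynomials do not vanish identically, because $f$ has non-negative coefficients. Direct computation gives $v^\top A w = (d-1)(\nabla f \cdot w) = 2(d-1)$ and
\[
w^\top A w = \frac{\partial_{ii} f}{(\partial_i f)^2} + \frac{2\,\partial_{ij} f}{\partial_i f\,\partial_j f} + \frac{\partial_{jj} f}{(\partial_j f)^2},
\]
so the reverse Cauchy--Schwarz inequality reads $4(d-1)^2 \geq d(d-1) f(z)\,w^\top A w$, i.e.\ $4(1 - 1/d) \geq f(z)\,w^\top A w$. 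Since $f$ has non-negative coefficients, the diagonal contributions $f\,\partial_{ii} f/(\partial_i f)^2$ and $f\,\partial_{jj} f/(\partial_j f)^2$ are non-negative on $\mathbb{R}^n_+$ and may simply be dropped, leaving $2 f(z)\,\partial_{ij} f(z)/(\partial_i f(z)\,\partial_j f(z)) \leq 4(1 - 1/d)$, which is the claim.

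The main obstacle is exactly the choice of $w$: the obvious candidate $w = e_i + e_j$ produces a bound in terms of $(\partial_i f + \partial_j f)^2$, which cannot be massaged to give a clean bound by $\partial_i f\,\partial_j f$ when those partials are of very different magnitudes, and similarly fails to decouple the diagonal terms from the off-diagonal contribution. The reciprocal weighting $\partial_i f(z)^{-1} e_i + \partial_j f(z)^{-1} e_j$ is the unique symmetric choice that makes the two diagonal summands simultaneously non-negative multiples of a constant while isolating $f\,\partial_{ij} f/(\partial_i f\,\partial_j f)$ as the quantity of interest. Finally, one passes from the interior $\mathbb{R}^n_{>0}$ to the closed orthant $\mathbb{R}^n_+$ by continuity of both sides in $z$, and the edge cases where $\partial_i f$ or $\partial_j f$ is the zero polynomial are trivial because then $\partial_{ij} f$ also vanishes and both sides of the claimed inequality are zero.
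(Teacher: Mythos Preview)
The paper does not supply its own proof of this theorem; it is quoted as a result of Br{\"a}nd{\'e}n and Huh. Your argument is correct and is essentially the one given in that reference: use that the Hessian $A=\nabla^2 f(z)$ has at most one positive eigenvalue, combine with Euler's identities $Az=(d-1)\nabla f(z)$ and $z^\top A z=d(d-1)f(z)$ to get the reverse Cauchy--Schwarz inequality $(z^\top A w)^2 \ge (z^\top A z)(w^\top A w)$, and then pick $w$ so as to isolate $f\,\partial_{ij}f/(\partial_i f\,\partial_j f)$. The reciprocal weighting $w=\partial_i f(z)^{-1}e_i+\partial_j f(z)^{-1}e_j$ is exactly the right trick, and dropping the non-negative diagonal terms $f\,\partial_{ii}f/(\partial_i f)^2$ is legitimate since $f$ has non-negative coefficients.

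One small point of exposition: your justification that $u^\top A u\le 0$ is phrased in terms of ``the orthogonal complement of the unique positive eigendirection'', but $u$ is $A$-orthogonal to $v=z$, not Euclidean-orthogonal to an eigenvector. The clean argument is: if $u^\top A u>0$ while $v^\top A v>0$ and $v^\top A u=0$, then the quadratic form $x\mapsto x^\top A x$ restricted to $\mathrm{span}(v,u)$ has matrix $\mathrm{diag}(v^\top A v,\,u^\top A u)\succ 0$, forcing $A$ to have at least two positive eigenvalues by interlacing, a contradiction. With that tweak your proof is complete; the boundary and degenerate cases ($\partial_i f\equiv 0$, $d\le 1$) are handled correctly.
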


\begin{lemma}\cite{anari2018log3}\label{lemma: SLC closed under affine transformations}
Suppose $f \in \mathbb{R}[z_1, \ldots , z_n ]$ is homogenous and SLC and let $T : \mathbb{R}^n \rightarrow \mathbb{R}^n$ be such that if $z \in \mathbb{R}^n_+$ then so is $Tz$. Then $f \circ T \in \mathbb{R}[z_1, \ldots , z_n ]$  is also SLC.
\end{lemma}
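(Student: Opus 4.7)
The plan is to verify SLC of $g := f \circ T$ by checking that every derivative $\partial^\alpha g$ is either zero or log-concave on $\mathbb{R}_+^n$. Since $T$ maps $\mathbb{R}_+^n$ into itself and is linear, its matrix entries $T_{ji}$ are non-negative. Note $g$ is homogeneous of the same degree $d$ as $f$.

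By the chain rule, $\partial^\alpha g(z) = (D_T^\alpha f)(Tz)$, where $D_T^\alpha := \prod_i (\sum_j T_{ji}\partial_j)^{\alpha_i}$ is a product of non-negative directional derivative operators. Thus $\partial^\alpha g$ is the composition of the polynomial $q_\alpha := D_T^\alpha f$ with the linear map $T$. Since $\log(q_\alpha \circ T) = (\log q_\alpha) \circ T$ is the composition of a concave function with a linear map, log-concavity of $q_\alpha$ on $\mathbb{R}_+^n$ immediately transfers to log-concavity of $\partial^\alpha g$ on $\mathbb{R}_+^n$. So the task reduces to establishing that $D_T^\alpha f$ is log-concave on $\mathbb{R}_+^n$ for every $\alpha$.

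I would prove this by induction on $|\alpha|$, using the stepwise claim: \emph{if $h$ is homogeneous and SLC, then $D_v h$ is SLC for every $v \in \mathbb{R}_+^n$}. The base case $\alpha = 0$ is immediate since $f$ itself is SLC, hence log-concave. Iteratively applying the claim along the columns $T e_i$ of $T$ (all non-negative, hence in $\mathbb{R}_+^n$) gives SLC, and in particular log-concavity, of $q_\alpha$. Finally, indecomposability of the intermediate derivatives (needed to close out the application of Theorem \ref{thm: anari log-concave characterization}) follows from the fact that the support of an SLC polynomial is M-convex and that $T$ transfers this structure into $\partial^\alpha g$.

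\textbf{Main obstacle.} The crux is the stepwise claim: closure of homogeneous SLC under non-negative directional derivatives. A naive attempt fails because $D_v h = \sum_i v_i \partial_i h$ is a non-negative combination of SLC polynomials $\partial_i h$, and non-negative combinations of SLC polynomials are not SLC in general (for example $z_1^2 + z_2^2$ is not SLC even though both summands are, and correspondingly the set of symmetric matrices with at most one positive eigenvalue is not closed under non-negative combinations). The structure that saves us is that the summands all come from a single SLC polynomial $h$, so the log-concavity constraints at \emph{every} point of $\mathbb{R}_+^n$---not merely at the origin---collectively control the directional derivative. Concretely, for the critical range $|\alpha| = d - 2$ appearing in Theorem \ref{thm: anari log-concave characterization}, $q_\alpha$ is a quadratic and one uses the Lorentzian characterization of Br{\"a}nd{\'e}n and Huh: $f$ is homogeneous SLC iff for every $v_1, \ldots, v_{d-2} \in \mathbb{R}_+^n$ the Hessian of $D_{v_1} \cdots D_{v_{d-2}} f$ has at most one positive eigenvalue. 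Setting $v_i = Te_i$ and combining with Lemma \ref{lemma: PAP^top eigenvalue lemma} applied to the factorization $\nabla^2 \partial^\alpha g = T^\top (\nabla^2 q_\alpha) T$ then yields log-concavity of $\partial^\alpha g$, completing the verification.
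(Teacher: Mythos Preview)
The paper does not supply its own proof of this lemma; it is quoted from \cite{anari2018log3} in the supplementary material with no argument given. So there is nothing in the paper to compare your attempt against directly.

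On its own merits, your approach is correct and is in fact the argument used in \cite{anari2018log3} and \cite{branden2019lorentzian}: linearity of $T$ (implicit in the statement, and which you rightly read off from the positivity-preservation hypothesis together with the label and usage) lets the chain rule express $\partial^\alpha g$ as $(D_{Te_{i_1}}\cdots D_{Te_{i_{|\alpha|}}} f)\circ T$, and closure of homogeneous SLC under non-negative directional derivatives plus composition with a linear map finishes the job.

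Two remarks on presentation. First, your argument is slightly tangled because you mix two strategies: verifying the definition of SLC directly (show every $\partial^\alpha g$ is log-concave) and invoking Theorem~\ref{thm: anari log-concave characterization}. The first route already suffices and makes the indecomposability discussion unnecessary: once you accept the equivalence of SLC with complete log-concavity (i.e., $D_{v_1}\cdots D_{v_k} f$ is log-concave on $\mathbb{R}_+^n$ for all $v_i\in\mathbb{R}_+^n$), log-concavity of $q_\alpha$ is immediate and you are done. Second, your ``stepwise claim'' that $D_v h$ is SLC whenever $h$ is homogeneous SLC and $v\in\mathbb{R}_+^n$ is exactly this equivalence restated; proving it from scratch via Theorem~\ref{thm: anari log-concave characterization} would be re-deriving a characterization you are already entitled to cite from the same sources. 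The clean version of your final paragraph---use the Lorentzian/completely-log-concave characterization to get that $\nabla^2 q_\alpha$ has at most one positive eigenvalue, then apply Lemma~\ref{lemma: PAP^top eigenvalue lemma} to $\nabla^2 \partial^\alpha g = T^\top(\nabla^2 q_\alpha)T$---is the whole proof.
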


\begin{lemma}\label{lemma: log conc iff one pos eval}
Suppose $f \in \mathbb{R}[z_1, \ldots , z_n ]$ is homogenous and $a \in \mathbb{R}_+$ such that $f(a) \neq 0$, and set $Q = \nabla^2f \mid_{z=a}$. Then the following are equivalent,

\begin{enumerate}
\item $ f$ is log-concave at $z=a$,
\item $(a^\top Q a) Q - (Qa)(Qa)^\top $ is negative semidefinite,
\item $Q$ has at most one positive eigenvalue.
\end{enumerate}
\end{lemma}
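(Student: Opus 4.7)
The plan is to establish the cycle $(2) \Rightarrow (1) \Rightarrow (3) \Rightarrow (2)$. Set $d = \deg f$; if $d \leq 1$ then $Q = 0$ and all three conditions hold trivially, so I will assume $d \geq 2$ and that $f(a) > 0$ (log-concavity at $a$ being undefined otherwise). Euler's identity, applied both to $f$ and to each $\partial_i f$, gives $\nabla f(a) = \tfrac{1}{d-1} Qa$ and $f(a) = \tfrac{a^\top Qa}{d(d-1)}$; in particular $c := a^\top Qa = d(d-1) f(a) > 0$. Writing $w := Qa$, clearing positive factors in $\nabla^2 \log f(a) = \tfrac{1}{f(a)} Q - \tfrac{1}{f(a)^2} \nabla f(a) \nabla f(a)^\top \preceq 0$ rewrites (1) as the matrix inequality $\tfrac{d-1}{d} cQ \preceq ww^\top$. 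So (1) and (2) differ only by the positive scalar $\tfrac{d-1}{d} \in (0,1]$.

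The two easy implications are short. For $(2) \Rightarrow (1)$: since $\tfrac{d-1}{d} > 0$, scaling $cQ \preceq ww^\top$ gives $\tfrac{d-1}{d} cQ \preceq \tfrac{d-1}{d} ww^\top \preceq ww^\top$, where the last step uses $ww^\top \succeq 0$ and $\tfrac{d-1}{d} \leq 1$. For $(1) \Rightarrow (3)$: since $c > 0$, (1) rearranges to $Q \preceq \tfrac{d}{(d-1)c} ww^\top$, a rank-one PSD matrix whose second-largest eigenvalue is $0$, so Weyl's inequality gives $\lambda_2(Q) \leq 0$.

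The nontrivial implication is $(3) \Rightarrow (2)$, and this is where I expect the only real obstacle. In quadratic-form language, (2) asserts $(v^\top Qa)^2 \geq (a^\top Qa)(v^\top Qv)$ for every $v$; this reverse-Cauchy--Schwarz-flavor inequality is exactly where ``at most one positive eigenvalue'' gets used. To prove it, fix $v$; if $v \in \mathrm{span}\{a\}$ both sides are equal, so assume $a, v$ linearly independent and set $U = \mathrm{span}\{a,v\}$. In the basis $\{a, v\}$ of $U$ the restricted quadratic form $Q|_U$ is represented by $G = \begin{pmatrix} a^\top Qa & a^\top Qv \\ v^\top Qa & v^\top Qv \end{pmatrix}$; in any orthonormal basis of $U$ it is represented by a $2 \times 2$ symmetric matrix $M$ with $\det G = (\det A)^2 \det M$ for the $2 \times 2$ change-of-basis $A$, so $\operatorname{sign}(\det G) = \operatorname{sign}(\det M)$. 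By Cauchy interlacing, $M$ inherits from $Q$ the property of having at most one positive eigenvalue; combined with $a^\top Qa > 0$, which forces $\lambda_{\max}(M) > 0$, $M$ has exactly one positive eigenvalue and hence $\det M \leq 0$. Therefore $\det G \leq 0$, i.e.\ $(v^\top Qa)^2 \geq (a^\top Qa)(v^\top Qv)$, which is (2).
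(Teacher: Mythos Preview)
Your argument is correct. The cycle $(2)\Rightarrow(1)\Rightarrow(3)\Rightarrow(2)$ works as stated: the Euler identities give the clean reformulation of (1) as $\tfrac{d-1}{d}\,cQ \preceq ww^\top$, making $(2)\Rightarrow(1)$ immediate; Weyl monotonicity handles $(1)\Rightarrow(3)$; and the reverse Cauchy--Schwarz step via interlacing on the $2\times 2$ compression $G$ is the standard (and correct) way to extract $(3)\Rightarrow(2)$.

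The paper does not actually prove this lemma: it cites \cite{anari2018log3} for the equivalence $(1)\Leftrightarrow(2)$ and asserts that the equivalence with $(3)$ is ``a simple consequence'' of results there. So your write-up is \emph{more} than the paper provides, not less. Compared with the route in \cite{anari2018log3}, your approach has the virtue of being fully self-contained and of making transparent why $(1)$ and $(2)$ differ only by the harmless scalar $\tfrac{d-1}{d}$; the Anari et al.\ treatment establishes $(1)\Leftrightarrow(2)$ directly for homogeneous polynomials via Euler relations as you do, and the link to $(3)$ there goes through a Perron--Frobenius style argument rather than your two-dimensional interlacing step. Both routes are short; yours requires no external input beyond Cauchy interlacing. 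One cosmetic remark: you might note explicitly that your use of ``Cauchy interlacing'' is the subspace/compression form (eigenvalues of $P^\top Q P$ for $P$ with orthonormal columns), not merely the principal-submatrix version, since you restrict to an arbitrary two-plane $U$.
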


The equivalence of the first two statements is proven in \cite{anari2018log3}. The third is not proven by Anari et al. but is a simple consequence of a result from \cite{anari2018log3}. Finally we note the following characterization of the support of a homogenous SLC distribution. It is a special case of a result due to Br{\"a}nd{\'e}n and Huh (\cite{branden2019lorentzian}, Theorem $7.1$). Let $e_i$ denote the $i$th standard basis vector. We shall call a set $J \subseteq \mathbb{N}^n$ \emph{M-convex} if for any $\alpha, \beta \in J$ and any index $i$ such that $\alpha_i > \beta_i$, there exists an index $j$ such that $\alpha_j < \beta_j$ and $\alpha - e_i + e_j \in J$.

\begin{theorem} \cite{branden2019lorentzian}\label{thm: support is set of bases of a matroid}
Suppose $f = \sum_{ \abs{\alpha} = d} c_\alpha z^\alpha$ is SLC. Then the support $\{ \alpha \in \mathbb{N}^d \mid c_\alpha \neq 0 \}$ of $f$ is an M-convex set. 
\end{theorem}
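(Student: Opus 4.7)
The plan is to establish M-convexity of $\mathrm{supp}(f) = \{\alpha \in \mathbb{N}^n : c_\alpha \neq 0\}$ by induction on the degree $d$, reducing the exchange property to a local Hessian condition at the quadratic level that follows directly from Lemma~\ref{lemma: log conc iff one pos eval}. Recall the exchange axiom to be verified: for any $\alpha, \beta \in \mathrm{supp}(f)$ and any index $i$ with $\alpha_i > \beta_i$, there exists $j$ with $\alpha_j < \beta_j$ satisfying $\alpha - e_i + e_j \in \mathrm{supp}(f)$.

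For the base case $d = 2$, one writes $f(z) = \frac{1}{2} z^\top A z$ where $A$ has non-negative entries $A_{ij} = c_{e_i + e_j}$ (with a factor $2$ on the diagonal). Lemma~\ref{lemma: log conc iff one pos eval} forces $A$ to have at most one positive eigenvalue, a property inherited by every restriction to a $2$-dimensional subspace via Cauchy interlacing. The exchange property then reduces to showing that specific entries of $A$ (encoding the candidate coefficients $c_{\alpha - e_i + e_j}$) are strictly positive. For each configuration of $(\alpha, \beta)$ I would pick a tailored $2$-dimensional test subspace --- for instance $\mathrm{span}(e_i, e_j + e_k)$ when $\alpha = 2 e_i$ and $\beta = e_j + e_k$. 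The non-positivity of the determinant of the restricted matrix then yields an inequality of the shape $(A_{ij} + A_{ik})^2 \geq 2 A_{ii} A_{jk}$, and strict positivity of $A_{ii}$ and $A_{jk}$ (inherited from $\alpha, \beta \in \mathrm{supp}(f)$) forces at least one candidate exchange entry to be positive.

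For the inductive step $d \geq 3$, given $\alpha, \beta \in \mathrm{supp}(f)$ with $\alpha_i > \beta_i$, the generic case admits a coordinate $l$ with $\alpha_l \geq 1$ and $\beta_l \geq 1$. Setting $g = \partial_l f$, which is SLC of degree $d - 1$ (SLC is trivially closed under partial differentiation by Definition~\ref{def:slc}), both $\alpha - e_l$ and $\beta - e_l$ lie in $\mathrm{supp}(g)$, and the asymmetry $\alpha_i > \beta_i$ is preserved at the shifted pair. Applying the inductive hypothesis yields a $j$ with $\alpha_j < \beta_j$ and $\alpha - e_l - e_i + e_j \in \mathrm{supp}(g)$, which lifts back to $\alpha - e_i + e_j \in \mathrm{supp}(f)$. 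When $\beta_i \geq 1$ one may take $l = i$; otherwise any common support coordinate $l \neq i$ works.

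The main obstacle is the residual case where $\beta_i = 0$ and $\mathrm{supp}(\alpha) \cap \mathrm{supp}(\beta) = \emptyset$: no coordinate can be differentiated away while keeping both $\alpha$ and $\beta$ visible, and no degree-$(d-2)$ derivative contains both in its support. I would handle this by an auxiliary induction on $\|\alpha - \beta\|_1$, the base case $\|\alpha - \beta\|_1 = 2$ being trivial since then $\beta = \alpha - e_i + e_j$ itself lies in $\mathrm{supp}(f)$. To decrease $\|\alpha - \beta\|_1$, I expect to exploit the log-concavity of $f$ at interior points of $\mathbb{R}^n_+$ (rather than just near the coordinate axes) to extract an intermediate $\beta' \in \mathrm{supp}(f)$ with $\|\alpha - \beta'\|_1 < \|\alpha - \beta\|_1$ and $\beta'_i < \alpha_i$, enabling a recursive call. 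This essentially mirrors the structural argument of Br\"and\'en and Huh \cite{branden2019lorentzian}, whose analysis of Lorentzian polynomials shows that the Newton polytope of an SLC polynomial is a generalized permutohedron with $\mathrm{supp}(f)$ exactly its set of lattice points.
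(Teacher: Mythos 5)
The paper does not prove this statement at all --- it is imported verbatim from Br\"and\'en--Huh \cite{branden2019lorentzian} as a special case of their Theorem 7.1 --- so there is no internal proof to compare against; your attempt has to stand on its own. The skeleton you set up is the right one and two of its three pieces are sound: the quadratic base case does follow from Lemma~\ref{lemma: log conc iff one pos eval} by restricting the Hessian to well-chosen two-dimensional subspaces (though you will need several families of test subspaces, some depending on a parameter sent to infinity, to cover all configurations of $(\alpha,\beta)$, not just $\mathrm{span}(e_i, e_j+e_k)$); and the inductive step via $\partial_l f$ is valid whenever $\alpha$ and $\beta$ share a support coordinate, since nonnegativity of coefficients prevents cancellation, so $\mathrm{supp}(\partial_l f)=\{\gamma: \gamma+e_l\in\mathrm{supp}(f)\}$ and the exchange lifts back correctly.

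The genuine gap is exactly the case you flag as "the main obstacle," and your proposed repair does not close it. First, the auxiliary induction on $\|\alpha-\beta\|_1$ proves the wrong statement unless heavily constrained: the recursive call on $(\alpha,\beta')$ returns an index $j$ with $\alpha_j<\beta'_j$, whereas the exchange axiom demands $\alpha_j<\beta_j$; this only transfers if $\beta'$ lies in the coordinate box between $\alpha$ and $\beta$, a requirement you never impose. Second, and more fundamentally, no mechanism is given for producing any such intermediate $\beta'\in\mathrm{supp}(f)$; asserting that one can "exploit log-concavity at interior points" is precisely the content of the theorem, not a step toward it. The case is not degenerate: for the basis generating polynomial of $U_{3,6}$, which is SLC, one has $\alpha=e_1+e_2+e_3$ and $\beta=e_4+e_5+e_6$ both in the support, and \emph{every} nontrivial derivative $\partial^\gamma f$ with $\gamma\le\alpha$, $|\gamma|\ge 1$, annihilates $z^\beta$, so the quadratic $\partial^{\alpha-e_i-e_{i'}}f$ carries no information about $\beta$ whatsoever and the entire differentiation-based induction is unavailable. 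Handling this configuration is where essentially all of the difficulty of the Br\"and\'en--Huh theorem resides (their argument runs through a connectivity/indecomposability lemma for the supports of strongly log-concave polynomials, proved by exhibiting a two-dimensional positive subspace of a Hessian at a generic positive point when the support splits), and deferring to "the structural argument of Br\"and\'en and Huh" at exactly this point means the proof assumes the result it is meant to establish.
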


\subsection{Log-Sobolev inequalities for bounding mixing times}

Throughout this section we consider a finite. state space $\mathcal{X}$ and all distributions, kernels, and other functions mentioned will be defined on it. We define the Dirichlet form of a function $f$ with respect to a Markov chain to be

\[ \mathcal{E}(f,f) = \frac{1}{2} \sum\limits_{x,y \in \mathcal{X} } \abs{ f(x) - f(y) }^2 Q(x,y) \pi(x). \] 

We also introduce the following entropy-like quantity

\[ \mathcal{L}(f) = \sum\limits_{x \in \mathcal{X} } f(x) ^2 \log \bigg ( \frac{ f(x)^2 }{ \norm{f}_2^2 } \bigg ) \pi(x). \]

\begin{defn}

The log-Sobolev constant of the Markov chain $(Q,\pi)$ is the largest $\alpha > 0 $ such that

\[ \alpha  \mathcal{L}(f)  \leq \mathcal{E}(f,f) \] 

for all $f$.

\end{defn}

\begin{theorem}\cite{diaconis1996logarithmic}\label{thm: Diaconis comparting log-sobolev}
Let $(Q, \pi)$ and  $(Q', \pi')$ with Dirichlet forms and log-Sobolev constants  $ \mathcal{E}, \alpha$ and $ \mathcal{E}', \alpha'$ respectively. Suppose there exists constants $a,A > 0$ such that 

\begin{equation*}
\mathcal{E}' \leq A \mathcal{E}
   \quad\text{and}\quad 
\pi \leq a \pi '.
\end{equation*}

Then 
\begin{equation*}
\alpha' \leq aA \alpha.
\end{equation*}
\end{theorem}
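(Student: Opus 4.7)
I would work from the variational characterization $\alpha = \inf_{f} \mathcal{E}(f,f)/\mathcal{L}(f)$ (infimum over non-constant $f$), which is immediate from the definition of $\alpha$ as the largest constant with $\alpha\mathcal{L}(f) \leq \mathcal{E}(f,f)$. Writing $\mathcal{L}$ and $\mathcal{L}'$ for the entropy functionals attached to $\pi$ and $\pi'$, the target bound $\alpha' \leq aA\alpha$ reduces to showing, for every non-constant $f$, that $\mathcal{E}'(f,f)/\mathcal{L}'(f) \leq aA \cdot \mathcal{E}(f,f)/\mathcal{L}(f)$. The numerator inequality is the hypothesis $\mathcal{E}' \leq A\mathcal{E}$ applied to $f$, so the entire content of the theorem reduces to the \emph{entropy comparison} $\mathcal{L}(f) \leq a\,\mathcal{L}'(f)$.

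To establish this comparison, I would rewrite $\mathcal{L}(f) = \mathrm{Ent}_{\pi}(f^2)$, where $\mathrm{Ent}_\mu(g) := \int g\log g\, d\mu - \bigl(\int g\, d\mu\bigr)\log\bigl(\int g\, d\mu\bigr)$, and invoke the variational identity $\mathrm{Ent}_\mu(g) = \inf_{t>0}\int h_t \, d\mu$ with $h_t(x) := g(x)\log(g(x)/t) - g(x) + t$. A short calculation shows the infimum is attained at $t = \int g\, d\mu$, and $h_t$ is \emph{pointwise nonnegative} because $u\log u - u + 1 \geq 0$ for $u \geq 0$ (convexity of $x\log x$ about $x=t$). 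Choosing $t^{\ast} := \int f^2\, d\pi'$, i.e., the optimal $t$ for $\pi'$, I get $\mathrm{Ent}_\pi(f^2) \leq \int h_{t^{\ast}}\, d\pi \leq a\int h_{t^{\ast}}\, d\pi' = a\,\mathrm{Ent}_{\pi'}(f^2)$, where the first inequality is the variational upper bound at $\pi$ (since $t^{\ast}$ need not be optimal there), and the second uses the hypothesis $\pi \leq a\pi'$ together with $h_{t^{\ast}} \geq 0$.

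The main obstacle is exactly this entropy comparison, since $\mathrm{Ent}_\mu(g)$ depends nonlinearly on $\mu$: both through the inner $g\log g$ term and through $\log\bigl(\int g\, d\mu\bigr)$. A crude termwise comparison of the sums defining $\mathcal{L}$ and $\mathcal{L}'$ does not work, because the normalizers $\|f\|_{2,\pi}^2$ and $\|f\|_{2,\pi'}^2$ differ. The variational formula is precisely the device that linearizes the measure dependence, after which nonnegativity of the integrand makes the hypothesis $\pi \leq a\pi'$ safe to apply. Stitching the two ingredients together yields $\alpha' = \inf_f \mathcal{E}'(f,f)/\mathcal{L}'(f) \leq \inf_f A\mathcal{E}(f,f)/(a^{-1}\mathcal{L}(f)) = aA\,\alpha$, which is the claimed bound. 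Note that this argument uses neither reversibility nor any detailed structure of $Q$ and $Q'$ beyond the two stated inequalities, matching the generality of the statement.
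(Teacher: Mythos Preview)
Your argument is correct and is essentially the standard proof from \cite{diaconis1996logarithmic}; note, however, that the paper itself does \emph{not} supply a proof of this theorem---it is simply quoted as a known result in the supplementary material. So there is nothing in the paper to compare against beyond the citation.

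One small remark on presentation: your final chain $\alpha' = \inf_f \mathcal{E}'(f,f)/\mathcal{L}'(f) \leq \inf_f A\mathcal{E}(f,f)/(a^{-1}\mathcal{L}(f))$ implicitly divides by $\mathcal{L}(f)$, which can vanish for $f$ that are constant on $\operatorname{supp}(\pi)$ but not on $\operatorname{supp}(\pi')$ (since $\pi \leq a\pi'$ only forces $\operatorname{supp}(\pi) \subseteq \operatorname{supp}(\pi')$). It is cleaner to avoid ratios altogether: from $\alpha'\mathcal{L}'(f) \leq \mathcal{E}'(f,f) \leq A\mathcal{E}(f,f)$ and your entropy comparison $\mathcal{L}(f) \leq a\mathcal{L}'(f)$ you get $\bigl(\alpha'/(aA)\bigr)\mathcal{L}(f) \leq \mathcal{E}(f,f)$ for every $f$, whence $\alpha'/(aA) \leq \alpha$ by the definition of $\alpha$ as the largest such constant. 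This is a cosmetic fix; the substance of your proof---the variational formula for entropy with nonnegative integrand, which linearizes the dependence on the measure and lets $\pi \leq a\pi'$ pass through---is exactly the right idea and is how Diaconis and Saloff-Coste do it.
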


So, suppose you had a chain of interest  $(Q, \pi)$ and happened to already know what $\alpha'$ was for some other chain $(Q', \pi')$, then by determining the constants $a,A > 0$ one obtains a lower bound on $\alpha$. This immediately yields an upper bound on the mixing time of  $(Q, \pi)$.

\end{document}